\newcommand\relphantom[1]{\mathrel{\phantom{#1}}}
\newclass{\sharpP}{\text{\rm\#}P}
\newtheorem{theorem}{Theorem}
\newtheorem{lemma}{Lemma}
\newtheorem{corollary}{Corollary}
\newtheorem{definition}{Definition}
\newtheorem{claim}{Claim}
\newtheorem{claim theorem}{Claim of Theorem}
\newtheorem{claim proposition}{Claim of Proposition}
\newtheorem{claim lemma}{Claim of Lemma}
\newtheorem{claim corollary}{Claim of Corollary}
\def\newsymbol{\raise.75pt\hbox{\llap{\large$\star$\hspace{0.5ex}}}}
\newtheorem{new proposition}{\newsymbol Proposition}
\newtheorem{new lemma}{\newsymbol Lemma}
\newtheorem{new corollary}{\newsymbol Corollary}
\newtheorem{proposition}{Proposition}
\theoremstyle{definition}
\newtheorem{example}{Example}
\def\exampleqed{\hfill$\diamond$}
\def\bouquet{\mathcal{B}}
\def\cliques{\Pi}
\def\separators{\Delta}
\def\chordalcomps{\mathcal{C}}
\def\balap#1{\hbox to 0pt{\scriptsize$#1$}}
\def\amo{\text{\rm AMO}}
\def\hamo{\text{\rm\#\kern-0.5ptAMO}}
\def\spe{\text{\rm Pe}}
\def\pe{\text{\rm\#\kern0.1ptPe}}
\def\cC{\mathcal C}
\definecolor{ba.yellow}{RGB}{252,190,18}
\definecolor{ba.gray}{RGB}{153,153,156}
\definecolor{ba.blue}{RGB}{6,123,164}
\definecolor{ba.red}{RGB}{213,96,98}
\definecolor{ba.orange}{RGB}{233,116,81}
\definecolor{ba.pine}{RGB}{67,154,134}
\definecolor{ba.green}{RGB}{196,247,161}
\definecolor{ba.violet}{RGB}{88, 53, 94}
\tikzset{
  pico/.style = {
    every node/.style = {
      draw,
      circle,
      semithick,
      inner sep = 0pt,
      minimum width = 0.7ex,
      fill = white
    },
    semithick
  },
  edge/.style = {
    semithick
  },
  arc/.style = {
    edge,
    ->,
    >={[round,sep]Stealth}
  },
}
\tikzset{
  axis/.style = {
    semithick,
    ->,
    >={[round,sep]Stealth},
  },
  tick/.style = {
    thin,
    font=\small
  },
  timeout/.style = {
    semithick,
    densely dashed,
    color = ba.red,
    font=\small,
  },
  mean_dot/.style = {
    draw, fill,
    circle,
    inner sep = 0pt,
    minimum width = 1mm
  }
}
\def\rhombus{%
  \node (1) at (-1,0) {$1$};%
  \node (2) at (0,1)  {$2$};%
  \node (3) at (0,-1) {$3$};%
  \node (4) at (1,0)  {$4$};%
}
\def\marcelgraph{%
  \node (1) at (0,0)  {$1$};
  \node (2) at (1,0)  {$2$};
  \node (3) at (2,0)  {$3$};
  \node (4) at (0,-1) {$4$};
  \node (5) at (1,-1) {$5$};
  \node (6) at (2,-1) {$6$};
  \graph[use existing nodes, edges = {edge}] {
    1 -- 2 -- 3 --[bend right] 1;
    4 -- 5 -- 6;
    2 -- {4,5,6};
    3 -- {4,5,6};
  };
}
\tikzset{
  axis/.style = {
    semithick,
    ->,
    >={[round,sep]Stealth},
  },
  tick/.style = {
    thin,
    font=\small
  },
  timeout/.style = {
    semithick,
    densely dashed,
    color = ba.red,
    font=\small,
  },
  mean_dot/.style = {
    draw, fill,
    circle,
    inner sep = 0pt,
    minimum width = 1mm
  }
}
\tikzset{
  cactus/.style = {
    draw,
    semithick,
    circle,
    inner sep = 0.03cm
  },
  cdfp-cp/.style = {
    cactus,
    color = ba.pine,
    fill  = ba.pine!50
  },
  cdfp-dp/.style = {
    cactus,
    color = ba.violet,
    fill  = ba.violet!50
  },
  cdfp-tw/.style = {
    cactus,
    color = ba.yellow,
    fill  = ba.yellow!50
  },
  cdfp-li/.style = {
    cactus,
    color = ba.blue,
    fill  = ba.blue!50
  },
}
\title{Polynomial-Time Algorithms for Counting and Sampling \\ Markov Equivalent DAGs\thanks{Extended version of paper accepted to the Proceedings of the 35th AAAI Conference on Artificial Intelligence (AAAI-2021).}}
\author {
     Marcel Wienöbst, Max Bannach, Maciej Li\'{s}kiewicz \\
}
\begin{document}
\maketitle\vspace*{-5.01mm}

\begin{abstract}
Counting and sampling directed acyclic graphs
from a Markov equivalence class are fundamental tasks in graphical causal 
analysis. In this paper, we show that these tasks can be performed in 
polynomial time, solving a long-standing open problem in this 
area. Our algorithms are effective and easily implementable. 
Experimental results show that the algorithms significantly 
outperform state-of-the-art methods.
\end{abstract}

\section{Introduction}
Graphical modeling plays a key role in causal theory, allowing to
express complex causal phenomena in an elegant, mathematically sound way.
One of the most popular graphical models are directed acyclic graphs
(DAGs), which represent direct causal influences
between random variables by directed
edges~\cite{spirtes2000causation,pearl2009causality,koller2009probabilistic}.
They are commonly used in empirical sciences to discover and understand 
causal effects. 
However, in practice, the underlying DAG is usually unknown,
since, typically, no single DAG explains the observational data.
Instead, the statistical properties of the data are maintained by a
number of different DAGs, which constitute a Markov equivalence class
(MEC, for short). Therefore, these DAGs are indistinguishable on the
basis of observations alone
\cite{verma1990equivalence,verma1992algorithm,heckerman1995learning}.

It is of great importance to investigate model learning and to analyze causal 
phenomena using MECs directly rather than the DAGs themselves. 
Consequently, this has led to intensive studies on Markov equivalence
classes of DAGs and resulted in a long and successful track record. 
Our work contributes to this line of research  by providing 
the first polynomial-time algorithms for \emph{counting} and 
for \emph{uniform sampling} Markov 
equivalent DAGs \textendash\ two important primitives in both 
theoretical and experimental studies. 

Finding the graphical criterion for two DAGs to be Markov equivalent
\cite{verma1990equivalence} and providing the graph-theoretic 
characterization of MECs as so-called CPDAGs
\cite{Andersson1997} mark key turning  points in this research direction. 
In particular, they have contributed to the progress of computational 
methods in this area. Important advantages of the modeling with CPDAGs
are demonstrated by algorithms that learn causal structures from observational data
\cite{verma1992algorithm,Meek1995,meek1997graphical,spirtes2000causation,chickering2002learning,chickering2002optimal};
and that analyze causality based on a given MEC, rather than a single DAG
\cite{maathuis2009estimating,van2016separators,perkovic2017complete}.
Algorithms that ignore Markov equivalence may lead to incorrect solutions. 

A key characteristic of an MEC 
is its size, i.\,e., 
the number of DAGs in the class. It indicates uncertainty 
of the causal model inferred from observational data and
it serves as an indicator for the performance of recovering true causal effects.
Moreover, the feasibility of causal inference methods is often highly
dependent on the size of the MEC; e.\,g., to estimate the average
causal effects from observational data for a given CPDAG, as proposed
by \citet{maathuis2009estimating}, one has to consider all DAGs in the class.
Furthermore, computing the size of a Markov equivalence class is
commonly used as a subroutine in practical algorithms. For example, when
actively designing interventions, in order to identify the underlying
true DAG in a given MEC, the size of the Markov equivalence subclass
is an important metric to select the best intervention target
\cite{He2008,hauser2012characterization,shanmugam2015learning,ghassami2018budgeted,Ghassami2019}. 

The first algorithmic approaches for counting the number of Markov equivalent
DAGs relied on exhaustive search \cite{Meek1995,madigan1996bayesian} 
based on the graphical characterization of \citet{verma1990equivalence}. 
The methods are computationally expensive as the size of an MEC represented 
by a CPDAG may be superexponential in the number of vertices of the graph.
More recently, \citet{He2015} proposed
to partition the MEC by fixing root variables in any undirected component of the CPDAG.
This yields a recursive strategy for counting Markov equivalent DAGs, which forms the basis 
of several ``root-picking'' algorithms
\cite{He2015,Ghassami2019,Ganian2020}. 
As an alternative approach, recent methods utilize
dynamic programming on the clique tree representation of chordal 
graphs and techniques from intervention design \cite{Talvitie2019,Teshnizi20}.

The main drawback of the existing counting algorithms 
is that they have exponential worst-case run time.
Moreover, as our experiments show, the 
state-of-the-art algorithms 
\cite{Talvitie2019,Ganian2020,Teshnizi20}
perform inadequately in practice on a wide range of instances.

The main achievement of our paper is the 
first poly\-no\-mial-time algorithm for counting and
for sampling Markov equivalent DAGs. The counting algorithm,
called \emph{Clique-Picking}, explores the clique tree representation 
of a chordal graph, but it avoids the use of computationally intractable 
dynamic programming on the clique tree. 
The Clique-Picking algorithm is effective,  easy to implement, and
our experimental results show that it significantly 
outperforms the state-of-the-art methods.
Moreover, we show that, using the algorithm in a preprocessing phase,
uniform sampling of Markov equivalent DAGs can be performed 
in linear time.

We prove that our  results are tight
in the sense that counting Markov equivalent
DAGs that encode additional background knowledge
is intractable under standard complexity-theoretic
assumptions. This justifies the exponential time approaches 
by~\citet{Meek1995} and \citet{Ghassami2019}.

The next section contains preliminaries on graphs and MECs.  In Sec.~\ref{sec:lbfsmaos}, we present the ideas of our
novel approach, 
and Sec.~\ref{sec:separators} explains how to avoid overcounting using
minimal separators. 
Section~\ref{sec:cliquepicking} contains our algorithm and in
Sec.~\ref{sec:compl} we analyze its time complexity and formally present
the main results of the paper.  Finally, Sec.~\ref{sec:exp}
shows our experimental results.  Due to space constraints, proofs are
relocated to the appendix. We provide short proof sketches for the
most important results in the main text.

\section{Preliminaries}\label{sec:preliminaries}

A graph $G = (V_G, E_G)$ consists of a set of vertices $V_G$ and a set
of edges $E_G \subseteq V_G \times V_G$. Throughout the paper,
whenever the graph $G$ is clear from the context, we will drop the
subscript in this and analogous notations.
An edge $u-v$ is undirected
if $(u, v), (v, u) \in E_G$ and directed $u \rightarrow v$ if
$(u,v) \in E_G$ and $(v,u) \not\in E_G$. Graphs which contain
undirected and directed edges are called partially directed. Directed
acyclic graphs (DAGs) 
contain only directed
edges and no directed cycle. 
We refer to the 
neighbors of a vertex~$u$ in $G$ as $N_G(u)$
and denote the induced subgraph of $G$ on a set $C\subseteq V$ by $G[C]$.  
The graph union $G \cup H$ includes edges present in $G$ or in $H$\footnote{For example,
the union of  $a \to b \to c$ and $a \gets b \to c$ is the graph
$a - b \to c$.}. 

The \emph{skeleton} of a partially directed graph $G$ is the undirected 
graph that results from ignoring edge directions.
A \emph{v-structure} in a partially directed 
graph $G$ is an ordered triple of vertices $(a,b,c)$ which
induce the subgraph $a \rightarrow b \leftarrow c$.


A \emph{clique} is a set $K$ of pairwise adjacent vertices. We denote
the set of all maximal cliques of $G$ by $\cliques(G)$.  In a
connected graph, we call a set $S\subseteq V$ an
\emph{$a$-$b$-separator} for two nonadjacent vertices $a,b\in V$ if
$a$ and $b$ are in different connected components in
$G[V\setminus S]$. If no proper subset of $S$ separates $a$ and $b$ we
call $S$ a \emph{minimal $a$-$b$-separator}. We say a set
$S$ is a \emph{minimal separator} if it is a minimal $a$-$b$-separator
for any two vertices\footnote{Observe that a minimal separator can be a proper
subset of another minimal separator (for different vertex pairs $a$-$b$).}. We denote the set of all minimal
separators of a graph $G$ by $\separators(G)$.
An undirected graph is called
\emph{chordal} if no subset of four or more vertices induces an
undirected cycle. For every chordal graph on $n$ vertices we have
$|\cliques(G)|\leq n$~\cite{dirac61}. Furthermore, it is well-known
that a graph $G$ is chordal if, and only if, all its minimal
separators are cliques.


%
A Markov equivalence class (MEC) consists of DAGs
encoding the same set of conditional independence relations among the variables. 
Due to \citet{verma1990equivalence}, we know that 
two DAGs are Markov equivalent if, and only if,
they have 
the same skeleton and the same v-structures.
An MEC can be represented by a CPDAG (\emph{completed partially directed acyclic graph}), 
which is the union graph of the DAGs in the equivalence class it represents. 
The undirected components of a CPDAG are
\emph{undirected and connected chordal graphs} (UCCGs)~\cite{Andersson1997}.

An orientation of a partially directed graph $G$ is obtained by replacing 
each undirected edge with a directed one. Such an orientation is called 
\emph{acyclic} if it does not contain a directed cycle and 
\emph{moral} if it does not create a new v-structure (sometimes called immorality).
In the following, we will only consider acyclic moral orientations 
(AMOs). For a partially ordered graph $G$,
we denote by $\amo(G)$ the set of all AMOs and by 
$\hamo(G)$ 
the number of AMOs of $G$.
In particular, if $G$ is a CPDAG representing an MEC then 
$\hamo(G)$ is the size of the class.
In this paper, we also refer to the \emph{computational problem}
of counting the number of AMOs for a given CPDAG as $\hamo$.

In case we have an induced subgraph $a
\rightarrow b - c$ in a partially directed graph, the edge between $b$
and $c$ is oriented $b \rightarrow c$ in all AMOs. This is known
as the first Meek rule~\cite{Meek1995}.

For a CPDAG $G$, the AMOs of each UCCG of $G$ 
can be chosen independently of the other UCCGs 
and the directed part of $G$~\cite{Andersson1997}.
Thus,   
\begin{equation*}\label{eq:hamo:in:cpdag}
  \hamo(G) =\prod_{\text{$H$ is UCCG in $G$}} \hamo(H) .
\end{equation*}
Hence, the problem $\hamo$ of counting the number of DAGs in an MEC 
reduces to counting the number of AMOs in a UCCG~\cite{Gillispie2002,He2008}.

An AMO $\alpha$ of a graph $G$ can be represented by a (not necessarily unique)
linear ordering of the vertices. Such a
\emph{topological} ordering $\tau$ represents $\alpha$ if for each edge $u
\rightarrow v$ in $\alpha$, $u$ precedes $v$ in $\tau$. 
We denote all topological orderings representing an AMO $\alpha$ of a graph~$G$ by
$\mathrm{top}_G(\alpha) = \{\tau_1, \dots, \tau_\ell\}$.
Note that every AMO of a UCCG contains exactly
one source vertex, i.\,e., a vertex with no incoming edges.

The $s$-orientation $G^s$ of a UCCG $G$ is the union of all AMOs
of $G$ with unique source vertex~$s$. 
We view $s$-orientations from the equivalent
perspective of being the union of all AMOs that can be represented by
a topological ordering starting with~$s$. The undirected components of~$G^s$
are UCCGs and can be oriented independently~\cite{He2015}.
This observation enables recursive strategies for counting 
AMOs: the ``root-picking''
approaches~\cite{He2015,Ghassami2019,Talvitie2019,Ganian2020} that
pick each vertex~$s$ as source and recurse on
the UCCGs of the $s$-orientation.
Because these UCCGs can be oriented independently, the number of AMOs
is obtained by alternately summing over
the number of AMOs for each source vertex $s$ and multiplying the number of
AMOs for each independent UCCG.

\section{Lexicographic BFS and AMOs}
\label{sec:lbfsmaos}
We introduce the core ideas of our algorithm
for \hamo\ and a linear-time algorithm for finding the UCCGs of
the $s$-orientations and their generalization, the
\emph{$\pi(K)$-orientations}. We do this by connecting AMOs with
so-called \emph{perfect elimination orderings} (PEOs).
A linear ordering of the vertices is a PEO if for each vertex $v$, the neighbors of $v$ that
occur after $v$ form a clique. A graph is chordal if,
and only if, it has a PEO~\cite{Fulkerson1965}. 
\begin{lemma}
  \label{lemma:peomao}
  A topological ordering $\tau$ of the vertices of a UCCG $G$ represents an AMO if, and only if, it is
  the reverse of a perfect elimination ordering.
\end{lemma}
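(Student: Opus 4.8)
The plan is to reduce both sides of the ``iff'' to the same local condition on vertices. Fix the UCCG $G$ and a linear order $\tau$ of $V_G$, and let $\alpha_\tau$ be the orientation that directs every edge $u-v$ from its $\tau$-earlier endpoint to its $\tau$-later endpoint. I would first record that ``$\tau$ represents an AMO'' means exactly ``$\alpha_\tau$ is an AMO'': if $\alpha$ is an AMO with $\tau\in\mathrm{top}_G(\alpha)$, then $\alpha$ orients the common skeleton $G$ and every arc of $\alpha$ goes forward along $\tau$, so $\alpha=\alpha_\tau$; conversely $\tau$ is trivially a topological ordering of $\alpha_\tau$. Moreover $\alpha_\tau$ is acyclic for free, since all its arcs point forward along the linear order $\tau$. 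Hence the entire content of ``$\tau$ represents an AMO'' is that $\alpha_\tau$ is \emph{moral}.

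Next I would unpack morality. Because $G$ is undirected it contains no v-structure, so every v-structure of $\alpha_\tau$ is necessarily a \emph{new} one; thus $\alpha_\tau$ is moral if and only if it contains no v-structure at all. Now $a\to b\leftarrow c$ is a v-structure of $\alpha_\tau$ precisely when $a$ and $c$ are non-adjacent neighbors of $b$ that both precede $b$ in $\tau$. Writing $P_\tau(b)$ for the set of $G$-neighbors of $b$ that precede $b$ in $\tau$ — equivalently, the in-neighbors of $b$ in $\alpha_\tau$ — the absence of a v-structure centered at $b$ is exactly the assertion that $P_\tau(b)$ is a clique. So $\tau$ represents an AMO if and only if $P_\tau(b)$ is a clique for every vertex $b$.

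Finally, let $\tau^{R}$ be the reversal of $\tau$. By definition $\tau^{R}$ is a PEO if and only if, for every $b$, the neighbors of $b$ occurring after $b$ in $\tau^{R}$ form a clique; but ``after $b$ in $\tau^{R}$'' is the same as ``before $b$ in $\tau$'', so that set is precisely $P_\tau(b)$. Hence $\tau^{R}$ is a PEO iff $P_\tau(b)$ is a clique for every $b$, which is the identical condition obtained above, and chaining the two equivalences gives the lemma. I do not expect a genuine obstacle here: the only points that deserve a careful sentence rather than a hand-wave are (i) the reduction ``moral $\iff$ v-structure-free'', which uses that the skeleton $G$ carries no pre-existing v-structures, and (ii) that acyclicity of $\alpha_\tau$ is automatic, so that ``represents an AMO'' really does collapse to morality alone. (Chordality of $G$ is not actually invoked — indeed the $\Rightarrow$ direction re-derives it by exhibiting a PEO — which is consistent with a UCCG being chordal.)
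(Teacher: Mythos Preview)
Your proposal is correct and follows essentially the same argument as the paper: both directions reduce to the single local condition that, for every vertex $b$, the $\tau$-predecessors of $b$ among its neighbors form a clique, which is exactly the PEO condition on $\tau^{R}$. You are simply more explicit than the paper about the bookkeeping (uniqueness of $\alpha_\tau$, automatic acyclicity, and why ``moral'' collapses to ``v-structure-free'' for an undirected $G$), but the underlying route is identical.
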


Perfect elimination orderings can be computed in linear time with the
Lexicographic BFS algorithm~\cite{Rose1976} to which we will refer
as \emph{LBFS}. A modified version of this algorithm is presented as
Algorithm~\ref{alg:lbfs}. When called with $K =
\emptyset$ (and ignoring the
lines~\ref{line:startoutif}-\ref{line:endoutif}), it coincides with a
\emph{normal} LBFS. The modifications and the meaning of $K$ will
become clear later on.

\begin{algorithm}
  \caption{A modified version of the Lexicographic BFS~\cite{Rose1976}
    for computing the set $\chordalcomps_G(K)$. If the algorithm is executed
    with $K=\emptyset$, the algorithm performs a normal LBFS with
    corresponding traversal ordering $\tau$, which is the reverse of a
    PEO.}
  \label{alg:lbfs}
  \DontPrintSemicolon
  \SetKwInOut{Input}{input}\SetKwInOut{Output}{output}
  \SetKwFor{Rep}{repeat}{}{end}
  \Input{A UCCG $G = (V,E)$ and a clique $K\subseteq V$.}
  \Output{$\chordalcomps_G(K)$.}
  $\mathcal{S}\gets$ sequence of sets initialized with $(K,V\setminus K)$ \;
  $\tau\gets\text{empty list}$, $L\gets\emptyset$ \;
  \While{$\mathcal{S}$ is non-empty}{
    $X\gets\text{first non-empty set of $\mathcal{S}$}$ \;
    $\hbox to 0pt{$v$}\phantom{X}\gets\text{arbitrary vertex from $X$}$ \;  \label{line:findvertex}
    Add vertex $v$ to the end of $\tau$. \; \label{line:lbfsout}
    \If{$v$ is neither in a set in $L$ nor in $K$}{ \label{line:startoutif}
      $L\gets L\cup \{X\}$ {\color{red}}\;
      Output the undirected components of $G[X]$. \label{line:outsubs}
    } \label{line:endoutif}
    $X\gets X\setminus\{v\}$ \;
    Denote the current $\mathcal{S}$ by $(S_1, \dots, S_k)$. \;
    Replace each $S_i$  by $S_i \cap N(v), S_i
    \setminus N(v)$. \; \label{line:partrefine}
    Remove all empty sets from $\mathcal{S}$. \; 
  }
\end{algorithm}

LBFS runs in linear time (i.\,e., $\mathcal{O}(|V| +
|E|)$) when $\mathcal{S}$ is implemented as a doubly-linked list with a pointer to each vertex and
the beginning of each set.
The algorithm can be viewed as a fine-grained graph traversal compared to classical
breadth-first search (BFS), where the vertices are visited only by increasing distance to the start
vertex. LBFS keeps this property, but introduces additional
constraints on the ordering~$\tau$, in which the vertices are
visited ($\tau$ is called an \emph{LBFS ordering}). These
constraints guarantee that $\tau$ is the reverse of a
PEO. Hence, by
Lemma~\ref{lemma:peomao}, $\tau$ represents an AMO.

\begin{corollary}
   \label{cor:lbfsordermao}
   Every LBFS ordering $\tau$ of the vertices of a UCCG $G$ represents an AMO.
\end{corollary}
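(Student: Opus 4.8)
The plan is to obtain Corollary~\ref{cor:lbfsordermao} as an immediate composition of Lemma~\ref{lemma:peomao} with the classical correctness guarantee of Lexicographic BFS. I would argue in three short steps. First, recall the defining property of LBFS established by \citet{Rose1976}: for an undirected chordal graph, every ordering $\tau$ produced by an execution of LBFS — for any choice of the ``arbitrary vertex'' in line~\ref{line:findvertex}, i.e.\ for any tie-breaking — has the property that its reverse is a perfect elimination ordering. Since a UCCG is in particular a connected undirected chordal graph, this applies verbatim to the ordering $\tau$ computed by Algorithm~\ref{alg:lbfs} on a UCCG $G$ with $K=\emptyset$ (equivalently, to any LBFS ordering of $G$): the reverse of $\tau$ is a PEO of $G$.

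Second, I would note that any linear ordering $\tau$ of the vertices of the undirected graph $G$ canonically induces an orientation of $G$, namely the one orienting each edge from its $\tau$-earlier endpoint to its $\tau$-later endpoint; this orientation is acyclic and $\tau$ is a topological ordering of it. Hence $\tau$ qualifies as a ``topological ordering of the vertices of the UCCG $G$'' in the precise sense required by Lemma~\ref{lemma:peomao}. Third, I would invoke the ``if'' direction of Lemma~\ref{lemma:peomao}: a topological ordering of a UCCG whose reverse is a PEO represents an AMO. Chaining the three steps gives that every LBFS ordering $\tau$ of $G$ represents an AMO, which is exactly the assertion of the corollary.

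Since the whole argument is a one-line implication chain, there is no real obstacle here; the only points that warrant a sentence of care are (i) reading the quantifier correctly — ``every LBFS ordering'' means every ordering realizable by some admissible tie-breaking in line~\ref{line:findvertex} — and (ii) using the cited LBFS property in the form ``$\tau$ reversed is a PEO'' rather than ``$\tau$ is a PEO''. Everything substantive has already been done in Lemma~\ref{lemma:peomao}.
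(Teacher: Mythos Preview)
Your proposal is correct and follows essentially the same approach as the paper's own proof, which simply cites Lemma~\ref{lemma:peomao} together with the Rose--Tarjan--Lueker fact that LBFS on a chordal graph yields (the reverse of) a PEO. Your write-up is in fact a bit more careful than the paper's one-liner in explicitly tracking the ``reverse'' direction and unpacking what it means for a linear order to ``represent'' an orientation, but the substance is identical.
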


It holds even further that each AMO can be represented by at least one
LBFS ordering.

\begin{lemma}
  \label{lemma:allextlbfs}
  Every AMO of a UCCG $G$ can be represented by an LBFS ordering.
\end{lemma}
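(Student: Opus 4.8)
The plan is to run the LBFS of Algorithm~\ref{alg:lbfs} with $K=\emptyset$ and show that the nondeterministic choice in line~\ref{line:findvertex} can always be resolved so that the produced ordering $\tau$ is a topological ordering of the given AMO $\alpha$. Since every resolution of that choice yields a legal LBFS ordering, and an ordering represents $\alpha$ as soon as it is a topological ordering of $\alpha$, this is exactly what the lemma asks for.

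Concretely, I would maintain the following invariant on the partition $\mathcal S=(S_1,\dots,S_k)$ that LBFS keeps, writing $W=S_1\cup\dots\cup S_k$ for the set of not-yet-visited vertices: for every block $S_i$ and every $u\in S_i$, all in-neighbors of $u$ in the orientation $\alpha[W]$ lie in $S_1\cup\dots\cup S_i$. Call this property $(\star)$; it holds trivially at the start, where $\mathcal S=(V)$. The point of $(\star)$ is that the first block $S_1$ always contains a source of $\alpha[W]$: the acyclic orientation $\alpha[S_1]$ has some source $r$ inside $S_1$, and by $(\star)$ every in-neighbor of $r$ in $W$ already lies in $S_1$, so $r$ has no in-neighbor in $W$ at all. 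Hence, whenever LBFS processes a new vertex, I let it pick such an $r$ from the first block; inductively every vertex chosen this way is a source of $\alpha$ restricted to the still-unvisited vertices, so the final ordering $\tau$ is topological for $\alpha$.

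It then remains to check that $(\star)$ is preserved when we visit $v:=r$ (a source of $\alpha[W]$ taken from $S_1$) and refine $\mathcal S$ into $(A,B,S_2\cap N(v),S_2\setminus N(v),\dots,S_k\cap N(v),S_k\setminus N(v))$ with $A=(S_1\setminus\{v\})\cap N(v)$ and $B=(S_1\setminus\{v\})\setminus N(v)$, deleting empty blocks. For a vertex $u$ of the new partition, its $\alpha$-in-neighbors among $W\setminus\{v\}$ are its $\alpha$-in-neighbors among $W$ with $v$ possibly removed, hence by $(\star)$ contained in $A\cup B\cup S_2\cup\dots\cup S_i$ when $u\in S_i$ originally. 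Comparing this with the new block order, $(\star)$ can only fail if $u$ lies in a block of the form $A$ or $S_i\cap N(v)$ — so $u\in N(v)$, and since $v$ is a source, $v\to u$ in $\alpha$ — while one of its in-neighbors $w$ lies in a strictly later block; that later block is then forced to be $B$ or $S_i\setminus N(v)$, so $w\notin N(v)$, i.e., $v$ and $w$ are non-adjacent. But then $v\to u\leftarrow w$ is a v-structure in $\alpha$, contradicting morality. In all remaining cases the in-neighbors of $u$ already lie in $u$'s block or earlier, so $(\star)$ survives, and carrying the induction to the end gives an LBFS ordering that is topological for $\alpha$, hence represents it.

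The main obstacle is isolating the invariant $(\star)$: once it is in place, ``$(\star)$ forces a source into the first block'' is a one-line acyclicity argument, and its preservation collapses to the two v-structure exclusions above, which is precisely where morality of $\alpha$ is used. (Note that the uniqueness of the source in a UCCG, observed before Lemma~\ref{lemma:peomao}, is not needed here: $(\star)$ only ever uses that an acyclic orientation has at least one source.)
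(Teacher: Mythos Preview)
Your proof is correct and follows the same high-level strategy as the paper: run LBFS greedily, always selecting from the first block a vertex that is a source of $\alpha$ restricted to the unvisited part, and argue that such a source must exist because the alternative produces a v-structure. The execution, however, is organized differently. The paper argues by contradiction at each step and relies on an auxiliary lemma about LBFS (the ``superset'' lemma on preceding-neighbor sets) to locate a previously visited vertex $u$ witnessing the v-structure $u\to x\leftarrow z$. You instead maintain the explicit loop invariant~$(\star)$ on the whole block sequence and show it is preserved under refinement; the v-structure you exhibit, $v\to u\leftarrow w$, always uses the \emph{just-visited} vertex $v$ as one of the two parents. This makes your argument self-contained---no auxiliary facts about LBFS states or the PEO property are needed beyond the refinement rule itself---whereas the paper's version leans on structural properties of LBFS that it develops separately. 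Both routes are short; yours trades a small amount of extra invariant bookkeeping for not having to invoke those side lemmas.
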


Each LBFS ordering starts with a maximal clique, because as long as
there is a vertex which can enlargen the current clique, the first set
of $\mathcal{S}$ is made up solely of such vertices.

\begin{lemma}
  \label{lemma:lbfsbegmc}
  Every LBFS ordering starts with a maximal clique.
\end{lemma}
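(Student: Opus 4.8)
The plan is to show that the prefix of an LBFS ordering $\tau$ consisting of the first few vertices is exactly a maximal clique of $G$, by tracking what the partition sequence $\mathcal{S}$ looks like during the early iterations. Concretely, I would argue that as long as the vertex $v$ just appended to $\tau$ has a neighbor that is adjacent to every vertex already in $\tau$, the first set $S_1$ of $\mathcal{S}$ consists precisely of those vertices adjacent to all of $\tau$ so far; the algorithm is then forced to pick the next vertex from $S_1$, extending the current clique. When no such vertex remains, the prefix is a maximal clique and the ordering "starts" with it. This mirrors the hint given in the text immediately preceding the statement.

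First I would set up an invariant. Let $\tau = (v_1, v_2, \dots)$ be the ordering produced, and let $Q_i = \{v_1,\dots,v_i\}$. I claim: there is an index $m$ such that for all $i < m$, after processing $v_i$ the first set $S_1$ of $\mathcal{S}$ equals $\{\,u \in V\setminus Q_i : Q_i \subseteq N(u)\,\}$ and this set is non-empty, while after processing $v_m$ this set becomes empty. The proof of the invariant is by induction on $i$. Initially ($K=\emptyset$, the case relevant here) $\mathcal{S} = (V)$, so $S_1 = V = \{u : Q_0 \subseteq N(u)\}$ since $Q_0 = \emptyset$. For the inductive step, observe that line~\ref{line:partrefine} replaces each set $S_j$ by $S_j \cap N(v_i)$ followed by $S_j \setminus N(v_i)$, and removes empty sets; in particular, if before processing $v_i$ the first set was $S_1 = \{u \in V\setminus Q_{i-1} : Q_{i-1}\subseteq N(u)\}$ and we picked $v_i \in S_1$, then after refinement the new first set is $(S_1 \setminus \{v_i\}) \cap N(v_i) = \{u \in V\setminus Q_i : Q_i \subseteq N(u)\}$, because $u \in S_1$ already means $Q_{i-1}\subseteq N(u)$, and intersecting with $N(v_i)$ adds exactly the requirement $v_i \in N(u)$. (Here I use that $v_i$ itself is removed from $X$ in the algorithm before the refinement, so it does not reappear.) This establishes the invariant, and also shows $Q_i$ is a clique for every $i \le m$: each $v_i$ with $i\le m$ was, by the invariant, drawn from a set all of whose members are adjacent to $Q_{i-1}$, hence $v_i$ is adjacent to $v_1,\dots,v_{i-1}$.

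Next I would show that $Q_m$ is a \emph{maximal} clique. It is a clique by the previous paragraph. If it were not maximal, there would be a vertex $w\notin Q_m$ adjacent to every vertex of $Q_m$, i.e.\ with $Q_m \subseteq N(w)$; but then $w$ would belong to the set $\{u\in V\setminus Q_m : Q_m\subseteq N(u)\}$, contradicting that this set is empty after processing $v_m$. Finally, since for every $i<m$ the set $S_1$ was non-empty and $X$ in the algorithm is defined as the first non-empty set of $\mathcal{S}$, the algorithm was forced to take $v_{i+1}$ from $S_1$; hence $v_1,\dots,v_m$ are exactly the first $m=|Q_m|$ vertices of $\tau$, and $\tau$ starts with the maximal clique $Q_m$.

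I expect the main obstacle to be making the bookkeeping of line~\ref{line:partrefine} rigorous: one must be careful that the refinement preserves the \emph{order} of sets (so that "first set" is well-defined), that empty-set removal does not disturb the identity of $S_1$, and that the vertex $v_i$ is correctly excised from $\mathcal{S}$ so the characterization of the new $S_1$ is clean. None of this is deep, but it is the part where a sloppy argument could hide an error. Everything else is a short consequence of the invariant. (For the general case $K\neq\emptyset$ the lemma as stated concerns plain LBFS orderings, so $K=\emptyset$ suffices; if one wanted the analogue with $K$, the same argument goes through starting from $\mathcal{S}=(K, V\setminus K)$ and yields that $\tau$ starts by enumerating $K$ and then extends it to a maximal clique.)
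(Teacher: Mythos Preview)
Your proof is correct and follows essentially the same approach as the paper's own argument: both track that, during the initial phase, the first set of $\mathcal{S}$ consists exactly of the vertices adjacent to all previously visited vertices, so the algorithm is forced to keep extending a clique until no such vertex remains. The paper's proof is a brief two-sentence sketch of this idea, whereas you have spelled out the invariant and its inductive verification in full; the content is the same.
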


These observations lead to the first idea in our algorithm for \hamo. We have seen that every
AMO can be represented by an LBFS ordering
(Lemma~\ref{lemma:allextlbfs}) and every LBFS ordering starts with
a maximal clique (Lemma~\ref{lemma:lbfsbegmc}). It follows:

\begin{corollary}
  \label{cor:startclique}
  Every AMO can be represented by a topological ordering which
  starts with a maximal clique.
\end{corollary}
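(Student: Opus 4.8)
The plan is to obtain this as an immediate conjunction of the two preceding lemmas, with no new combinatorial argument. Fix a UCCG $G$ and an AMO $\alpha$ of $G$. First, I would invoke Lemma~\ref{lemma:allextlbfs} to obtain a concrete LBFS ordering $\tau$ of the vertices of $G$ that represents $\alpha$, i.e., $\tau \in \mathrm{top}_G(\alpha)$. Second, I would apply Lemma~\ref{lemma:lbfsbegmc} to this very $\tau$: since it is an LBFS ordering, it starts with a maximal clique of $G$. Combining the two facts, $\tau$ is a topological ordering representing $\alpha$ whose prefix is a maximal clique, which is exactly the claimed statement.

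The only point requiring any care is to match the two lemmas on the \emph{same} ordering. Lemma~\ref{lemma:allextlbfs} is stated constructively enough to hand us a specific LBFS ordering (not merely the existence of some representing topological ordering), and Lemma~\ref{lemma:lbfsbegmc} applies uniformly to every LBFS ordering; hence the composition goes through verbatim. For the reader's orientation one may also recall that LBFS orderings do represent AMOs in the first place (Corollary~\ref{cor:lbfsordermao}, itself via Lemma~\ref{lemma:peomao}), but this is already established upstream and need not be reproved here.

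Accordingly, I do not anticipate a genuine obstacle: all the substance of the statement is carried by Lemmas~\ref{lemma:allextlbfs} and~\ref{lemma:lbfsbegmc}, and the corollary is their straightforward conjunction. The residual effort is purely bookkeeping — checking that ``starts with a maximal clique'' is transported unchanged from Lemma~\ref{lemma:lbfsbegmc} and that the ``represents'' relation used is identical to the one in Lemma~\ref{lemma:allextlbfs}.
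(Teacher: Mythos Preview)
Your proposal is correct and matches the paper's proof essentially verbatim: the paper also simply combines Lemma~\ref{lemma:allextlbfs} (every AMO is represented by some LBFS ordering) with Lemma~\ref{lemma:lbfsbegmc} (every LBFS ordering starts with a maximal clique). There is nothing further to add.
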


This means that for us, it is \emph{sufficient
to consider topological orderings that start with a maximal clique}. 
Therefore, we generalize the definition of $s$-orientations:
We consider permutations $\pi$ of a clique $K$, as
each $\pi(K)$ represents a distinct AMO of the subgraph induced by $K$.
   
\begin{definition}
  Let $G = (V,E)$ be a UCCG, $K$ be a
  clique in $G$, and let $\pi(K)$ be a permutation of $K$.
  \begin{enumerate}
  \item The $\pi(K)$-orientation of $G$, also denoted $G^{\pi(K)}$,
    is the union of all AMOs of $G$ that can be represented by a
    topological ordering beginning with $\pi(K)$.
  \item Let $G^{K}$ denote the union of $\pi(K)$-orientations of $G$
    over all $\pi$, i.\,e.,\ let $G^{K} = \bigcup_{\pi} G^{\pi(K)}$.
  \item 
  Denote by $\cC_G(\pi(K))$ the undirected connected components 
  of $G^{\pi(K)}[V  \setminus K]$ and let 
  $\cC_G(K)$ denote the undirected connected components 
  of $G^{K}[V  \setminus K]$.
  \end{enumerate}
\end{definition}

Figure~\ref{fig:pikorients} shows an example $\pi(K)$-orientation of $G$:
For a graph $G$ in \textbf{(a)}, a clique $K = \{1,2,3,4\}$,
and a permutation $(4, 3, 2,1)$,  graph $G^{(4,3,2,1)}$ is presented in \textbf{(c)}.
It is the union of two DAGs which are AMOs of $G$, whose
topological orderings begin with $4,3,2,1$. The first DAG 
can be represented by topological ordering $4, 3, 2,1,5,6,7$ and 
the second one by $4, 3, 2,1,6,5,7$.  In Fig.~\ref{fig:pikorients},
we also compare the  $(4, 3, 2,1)$-orientation with an $s$-orientation, for 
$s=4$, shown in \textbf{(b)}. The undirected components of the orientations are
indicated by the colored regions. By orienting whole cliques at once, we get
significantly smaller undirected components in the resulting $\pi(K)$-orientation
than in the $s$-orientation (e.\,g., $\{5,6\}$ compared to $\{1, 2, 3, 5, 6\}$).
Finally, \textbf{(d)} illustrates graph $G^{\{1,2,3,4\}}$.
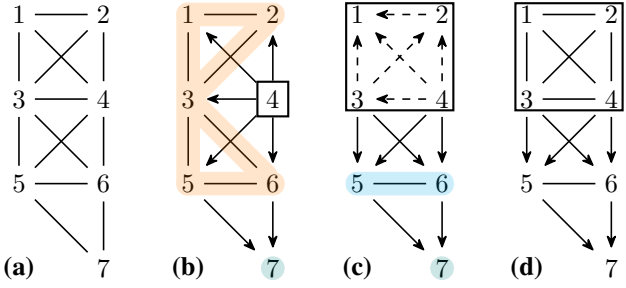
\begin{figure} 
  \centering
  \begin{tikzpicture}[scale=0.75]
    \node (1) at (0,-1.5) {$1$};
    \node (2) at (1.5,-1.5) {$2$};
    \node (3) at (0,-3) {$3$};
    \node (4) at (1.5,-3) {$4$};
    \node (5) at (0,-4.5) {$5$};
    \node (6) at (1.5,-4.5) {$6$};
    \node (7) at (1.5,-6) {$7$};
    \node (l1) at (0, -6) {\textbf{(a)}};
    \graph[use existing nodes, edges = {edge}] {
      1 -- {2,3,4};
      2 -- {3,4};
      3 -- {4,5,6};
      4 -- {5,6};
      5 -- {6,7};
      6 -- {7};
    };
    
    \node (1) at (3,-1.5) {$1$};
    \node (2) at (4.5,-1.5) {$2$};
    \node (3) at (3,-3) {$3$};
    \node[rectangle,draw, thick] (4) at (4.5,-3) {$4$};
    \node (5) at (3,-4.5) {$5$};
    \node (6) at (4.5,-4.5) {$6$};
    \node (7) at (4.5,-6) {$7$};
    \node (l2) at (3,-6) {\textbf{(b)}};

    \graph[use existing nodes, edges = {edge}] {
      1 -- {2,3};
      2 -- {3};
      3 -- {5,6};
      5 -- 6;
    };

    \graph[use existing nodes, edges = {arc}] {
      4 -> {1,2,3,5,6};
      5 -> 7;
      6 -> 7;
    };

    \begin{scope}[opacity = 0.2, transparency group]
      \filldraw[orange,rounded corners] (2.8,-4.7) rectangle
      (4.7,-4.3);
      \filldraw[orange,rounded corners] (2.8,-1.3) rectangle
      (3.2,-4.7);
      \filldraw[orange,rounded corners] (2.8,-1.3) rectangle (4.7,-1.7);
      \filldraw[orange,rounded corners] (2.85,-3.15) -- (3.15,-2.85) -- (4.65,-4.35) -- (4.35,-4.65);
      \filldraw[orange,rounded corners] (2.85,-2.85) -- (3.15,-3.15) -- (4.65,-1.65) -- (4.35,-1.35);
    \end{scope}
    \filldraw[teal, opacity = 0.2] (4.5,-6) circle (0.2);

    \node (1) at (6,-1.5) {$1$};
    \node (2) at (7.5,-1.5) {$2$};
    \node (3) at (6,-3) {$3$};
    \node (4) at (7.5,-3) {$4$};
    \node (5) at (6,-4.5) {$5$};
    \node (6) at (7.5,-4.5) {$6$};
    \node (7) at (7.5,-6) {$7$};
    \node (l3) at (6, -6) {\textbf{(c)}};

    \graph[use existing nodes, edges = {edge}] {
      5 -- 6;
    };

    \graph[use existing nodes, edges = {arc}] {
      2 ->[dashed] 1;
      3 ->[dashed] {1,2};
      3 -> {5,6};
      4 ->[dashed] {1,2,3};
      4 -> {,5,6};
      5 -> 7;
      6 -> 7;
    };

    \draw[thick] (5.8,-1.3) rectangle (7.7,-3.2);

    \filldraw[cyan, opacity = 0.2, rounded corners] (5.8,-4.7) rectangle (7.7,-4.3);
    \filldraw[teal, opacity = 0.2] (7.5,-6) circle (0.2);
    
    \node (1) at (6+3,-1.5) {$1$};
    \node (2) at (7.5+3,-1.5) {$2$};
    \node (3) at (6+3,-3) {$3$};
    \node (4) at (7.5+3,-3) {$4$};
    \node (5) at (6+3,-4.5) {$5$};
    \node (6) at (7.5+3,-4.5) {$6$};
    \node (7) at (7.5+3,-6) {$7$};
    \node (l3) at (6+3, -6) {\textbf{(d)}};

    \graph[use existing nodes, edges = {edge}] {
      5 -- 6;
      2 -- 1;
      3 -- {1,2};
      4 -- {1,2,3};
    };

    \graph[use existing nodes, edges = {arc}] {
      3 -> {5,6};
      4 -> {,5,6};
      5 -> 7;
      6 -> 7;
    };

    \draw[thick] (5.8+3,-1.3) rectangle (7.7+3,-3.2);

    
  \end{tikzpicture}
  \caption{For a UCCG $G$ in \textbf{(a)}, the figure shows $G^{(4)}$ in 
    \textbf{(b)},  $G^{(4,3,2,1)}$ in \textbf{(c)}, and $G^{\{1,2,3,4\}}$ in \textbf{(d)}.
    The undirected components in $G^{(4)}$ and $G^{(4,3,2,1)}$ are indicated by
    the colored regions and the vertices put at the beginning of the
    topological ordering by a rectangle (all edges from the rectangle
    point outwards). Edges inside the rectangle in \textbf{(c)} are dashed, as they
    have no influence on the further edge directions outside the rectangle. 
    }
  \label{fig:pikorients}
\end{figure}

The undirected components of the $\pi(K)$-orientation are chordal
graphs, which can be oriented independently, yielding the following
recursive formula: 

\begin{lemma}\label{lemma:independentOrientation}
  The undirected connected components in 
  $\cC_G(\pi(K))$ are chordal and it holds that:
  \[
    \hamo(G^{\pi(K)}) = \prod_{H \in \cC_G(\pi(K))} \hamo(H).
  \]
\end{lemma}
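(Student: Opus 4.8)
\emph{Proof plan.} The plan is to first pin down the shape of $G^{\pi(K)}$ and then set up a bijection between $\amo(G^{\pi(K)})$ and the product set $\prod_{H\in\cC_G(\pi(K))}\amo(H)$; the counting identity is then immediate, and chordality of the $H$'s drops out of the structural description along the way.

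Write $\pi(K)=(k_1,\dots,k_m)$. Every topological ordering beginning with $\pi(K)$ lists all of $K$, in this order, before any vertex of $V\setminus K$, so in each AMO counted by $\hamo(G^{\pi(K)})$ the edges inside $K$ are oriented $k_i\to k_j$ for $i<j$, and every edge between $K$ and $V\setminus K$ points out of $K$; hence these edges are oriented the same way in the union $G^{\pi(K)}$. I would then show that the remaining directed edges of $G^{\pi(K)}$ are exactly those inside $V\setminus K$ forced by closing this initial orientation under the first Meek rule, using Corollary~\ref{cor:lbfsordermao} and Lemma~\ref{lemma:allextlbfs} to certify that every \emph{unforced} edge of $G[V\setminus K]$ is realized in both orientations by AMOs extending $\pi(K)$. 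The key structural claim is that each such forced edge joins two \emph{different} components of $\cC_G(\pi(K))$. Granting this, each $H\in\cC_G(\pi(K))$ is the induced subgraph $G[U]$ on the vertex set $U$ of its component, hence chordal as an induced subgraph of the chordal graph $G$, which settles the first assertion.

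For the identity I describe the two maps. Given an AMO $\alpha_H\in\amo(H)$ for each $H=G[U]$, let $\alpha$ be the orientation of $G$ that agrees with the directed part of $G^{\pi(K)}$ on all edges incident to $K$ and on all forced edges inside $V\setminus K$, and with $\alpha_H$ on the edges inside $U$. I must check that $\alpha$ is an AMO with a topological ordering starting with $\pi(K)$. Acyclicity follows by building such an ordering: $k_1,\dots,k_m$, then the components in an order respecting the directions of the forced inter-component edges (this exists because the directed part of $G^{\pi(K)}$ is acyclic, being contained in any AMO extending $\pi(K)$), and within each component a topological ordering of $\alpha_H$. Morality is the crux: a v-structure in $\alpha$ could only sit at some $v\in U$ with one parent inside $U$ (from $\alpha_H$) and one parent outside $U$ (from a forced edge), or with both parents outside $U$; I would rule this out from the structure above, namely that the vertices outside $U$ with a forced edge into $U$ form a clique in $G$ and that each such vertex is adjacent to every neighbour of $v$ inside $U$ — properties that follow from chordality of $G$ and the clique property of the minimal separators that detach a component under Meek-rule propagation. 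Conversely, any AMO $\alpha$ of $G$ with a topological ordering starting with $\pi(K)$ agrees with the directed part of $G^{\pi(K)}$ wherever that is directed, so its restriction to each $U$ is an acyclic and v-structure-free orientation of $G[U]$, i.e.\ lies in $\amo(H)$. The two maps are plainly mutually inverse, giving $\hamo(G^{\pi(K)})=\prod_{H\in\cC_G(\pi(K))}\hamo(H)$.

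The descriptions of the edges incident to $K$ and the acyclicity argument are routine. The real work lies in (i) showing no forced edge falls inside a component, i.e.\ each $H$ is an induced subgraph of $G$, and (ii) the morality check, i.e.\ that gluing \emph{arbitrary} per-component AMOs onto the fixed directed skeleton never produces a v-structure. Both reduce to the same structural fact about how first-Meek-rule propagation out of the clique $K$ interacts with the clique separators of the chordal graph $G$, and establishing that fact cleanly is where I expect the effort to concentrate.
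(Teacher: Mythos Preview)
Your plan is sound and the bijection is the right object; the main difference from the paper is methodological. The paper does not trace Meek-rule propagation. Instead it extracts from the LBFS analysis (the auxiliary Lemma~\ref{lemma:algpikcorrect} on Algorithm~\ref{alg:findsubproblems}) one clean structural fact: for each $H\in\cC_G(\pi(K))$ there is a clique $P_{\mathrm{out}}(H)$ such that the external parents in $G^{\pi(K)}$ of \emph{every} vertex of $V_H$ are exactly $P_{\mathrm{out}}(H)$. This is stronger than your pair of claims (external parents of $v$ form a clique and are adjacent to all $V_H$-neighbours of $v$), though your version bootstraps to it by iterating along a path in the connected graph $H$. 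Given that fact the paper does not verify acyclicity and morality of the glued orientation by hand; it invokes \citet{He2008} for the statement that undirected components whose vertices all share the same external-parent set can be oriented independently. Your route is thus more self-contained, theirs shorter by outsourcing the independence argument; both rest on the same structural lemma, and your identification of it as the crux is correct.

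There is, however, a genuine gap in your acyclicity step. You assert that an order on the components ``respecting the directions of the forced inter-component edges'' exists because the directed part of $G^{\pi(K)}$ is acyclic. But acyclicity on vertices does not by itself make the quotient by components a DAG: a priori one could have directed edges from $H_1$ into $H_2$ and from $H_2$ into $H_1$ on four distinct vertices with no directed cycle among them, and then no contiguous component ordering exists. What actually excludes this is precisely the uniform-parent-set fact: if some $v_1\in V_{H_1}$ is a parent of a vertex of $H_2$ then $v_1$ is a parent of \emph{all} of $V_{H_2}$, and symmetrically, so a two-way situation would force both $v_1\to w_2$ and $w_2\to v_1$ in $G^{\pi(K)}$ for a suitable $w_2\in V_{H_2}$, which is impossible. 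Once you have established (or bootstrapped to) that fact, the component order does follow; just do not derive it from bare acyclicity of the directed skeleton.
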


The crucial observation is that the undirected components
$\cC_G(\pi(K))$ are independent of the permutation
$\pi$. This means no matter how the vertices $\{1, 2, 3, 4\}$ are
permuted, if the whole clique is put at the beginning of the
topological ordering, no further edge orientations will be
influenced. Informally, this is because all edges from the clique $K$
to other vertices are directed outwards no matter the permutation
$\pi$. 
We formalize this observation in the following: 

\begin{proposition}
  \label{prop:cliqueid}
  Let $G$ be a UCCG and $K$ be a clique of $G$.
  For each permutation $\pi(K)$ it is true that all edges of $G^{\pi(K)}$
  coincide with the edges of $G^K$, excluding the edges connecting 
  the vertices in $K$. 
  Hence, $\cC_G(\pi(K)) = \cC_G(K)$ and it holds that:
  \[
    \sum_{\text{$\pi$ over $K$}} \hamo(G^{\pi(K)}) = |K|! \times \prod_{\balap{H \in \chordalcomps_G(K)}} \hamo(H).
  \]
\end{proposition}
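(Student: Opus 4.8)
The plan is to isolate a single structural claim — \emph{the orientation induced by $G^{\pi(K)}$ on every edge with at most one endpoint in $K$ does not depend on the permutation $\pi$} — and derive everything from it. Granting this claim, the three conclusions follow quickly. First, $\cC_G(\pi(K)) = \cC_G(K)$: by definition $G^K = \bigcup_\pi G^{\pi(K)}$, and since all $G^{\pi(K)}$ agree on $V \setminus K$ their union agrees with each of them there, so $G^K[V\setminus K] = G^{\pi(K)}[V\setminus K]$ and the two families of undirected components coincide; the same reasoning gives the claimed coincidence of the edges of $G^{\pi(K)}$ and $G^K$ outside $K$. Second, the summation identity: by Lemma~\ref{lemma:independentOrientation} each term $\hamo(G^{\pi(K)})$ equals $\prod_{H \in \cC_G(\pi(K))}\hamo(H) = \prod_{H \in \chordalcomps_G(K)}\hamo(H)$, a value independent of $\pi$, and there are exactly $|K|!$ permutations of $K$.

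To set up the proof of the structural claim I would first record that in any topological ordering beginning with $\pi(K)$ every vertex of $V\setminus K$ follows every vertex of $K$; hence in $G^{\pi(K)}$ all edges between $K$ and $V\setminus K$ point out of $K$, and all edges inside $K$ are oriented by $\pi$. I would then characterize the relevant AMOs: an AMO $\alpha$ of $G$ is representable by a topological ordering beginning with $\pi(K)$ if and only if $\alpha[K]$ is the transitive tournament ordered by $\pi$ and every $K$-to-$(V\setminus K)$ edge of $\alpha$ points out of $K$. The ``only if'' direction is the previous observation; for ``if'', note that under these hypotheses no arc of $\alpha$ has its head in $K$ and its tail outside, so $K$ is an initial segment of the DAG $\alpha$, and listing $K$ in the order $\pi$ followed by a topological ordering of the rest yields the desired ordering.

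The heart of the argument — and the step I expect to be the main obstacle — is a \emph{clique-permutation swap}. Given permutations $\pi,\pi'$ of $K$ and an AMO $\alpha$ contributing to $G^{\pi(K)}$, let $\alpha'$ be obtained from $\alpha$ by re-orienting the edges inside $K$ according to $\pi'$ and leaving all other edges unchanged; I would show $\alpha'$ is again an AMO, hence (by the characterization) one contributing to $G^{\pi'(K)}$. Acyclicity is easy: $\alpha'$ still has no arc entering $K$ from outside, so a directed cycle would lie entirely inside $K$ — impossible, as $\pi'$ orders a transitive tournament — or entirely inside $V\setminus K$, where $\alpha'=\alpha$ is acyclic. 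Morality is the delicate point: one checks that a v-structure $a\to b\leftarrow c$ of $\alpha'$ cannot have its collider $b$ in $K$ (if $a$ or $c$ lies outside $K$ the corresponding edge points out of $K$, i.e.\ away from $b$; if both lie in $K$ they are adjacent), while for $b\notin K$ the set of parents of $b$ is identical in $\alpha'$ and $\alpha$ (edges from $K$ to $b$ point into $b$ in both, and edges within $V\setminus K$ are unchanged); thus $\alpha'$ has exactly the v-structures of $\alpha$, which are those of $G$. Since re-orienting $K$ by $\pi$ inverts the operation, this is a bijection between the AMOs contributing to $G^{\pi(K)}$ and those contributing to $G^{\pi'(K)}$ that fixes every edge with at most one endpoint in $K$; therefore $G^{\pi(K)}$ and $G^{\pi'(K)}$ induce the same orientation on all such edges, which is precisely the structural claim, and the proof is complete.
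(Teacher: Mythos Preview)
Your proof is correct and follows essentially the same approach as the paper: both establish the structural claim by taking an AMO contributing to $G^{\pi(K)}$, swapping the orientation on $K$ to match $\pi'$, and verifying the result is again an AMO (the paper does this by replacing the prefix of a topological ordering, you by re-orienting edges directly and invoking your characterization, but the constructions coincide). Your case analysis for morality is organized by the location of the collider rather than by how many of $a,b,c$ lie in $K$, which is arguably cleaner, but the content is the same.
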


This is the key property that allows us to efficiently deal with
whole cliques at once, instead of considering single vertices
one-by-one. As each permutation
$\pi(K)$ represents a distinct AMO of $G[K]$, this formula indeed computes the number of AMOs, which can be
represented by a topological ordering with clique $K$ at the
beginning. However, we are not done yet, as there are some further obstacles we need to
overcome in order to obtain a polynomial-time algorithm for \hamo,
which are dealt with in the following sections.

Before that, we leverage the connection between AMOs and
LBFS orderings one more time, to propose a linear-time algorithm for computing
$\chordalcomps_G(K)$~--~the full Algorithm~\ref{alg:lbfs}. This algorithm performs
an LBFS and, whenever a vertex could be picked for the first time, the
corresponding first set in $\mathcal{S}$ is appended 
to $L$ and the
undirected components of the set are output
(lines~\ref{line:startoutif}-\ref{line:endoutif}).
For instance, in the example above, after the vertices in $K$ are visited, we have $\mathcal{S} =
(\{5,6\}, \{7\})$. As $\{5,6\}$ is currently the first set of
$\mathcal{S}$ and vertices $5$ and $6$ are not in any
set in~$L$ yet ($L$ is still empty), the set $\{5,6\}$ is appended to
$L$ and $5-6$ is output as an element of $\chordalcomps_G(K)$. 

\begin{theorem} \label{thm:lintimesubp}
  For a chordal graph $G$ and a clique $K$,  Algorithm~\ref{alg:lbfs}
  computes $\chordalcomps_G(K)$ in time
  $\mathcal{O}(|V| + |E|)$. 
\end{theorem}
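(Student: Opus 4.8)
The plan is to establish two things: correctness (Algorithm~\ref{alg:lbfs} outputs exactly the connected components of $G^K[V \setminus K]$) and the linear running time. For the running time, the key point is that the modifications in lines~\ref{line:startoutif}--\ref{line:endoutif} add only a constant amount of bookkeeping per vertex plus the cost of outputting induced subgraphs, so that the total stays within $\mathcal{O}(|V| + |E|)$; for correctness, the main work is to argue that the sets $X$ selected at line~\ref{line:outsubs} are precisely the sets whose induced subgraphs (over all iterations) partition $V \setminus K$ into the components of $\cC_G(K)$.

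First I would handle the running time. A standard LBFS with $\mathcal{S}$ realized as a doubly-linked list of sets, each equipped with a size counter and with a pointer from every vertex to its containing set, runs in $\mathcal{O}(|V| + |E|)$: the partition-refinement step (line~\ref{line:partrefine}) for vertex $v$ touches only the $N(v)$-many edges incident to $v$, and "removing empty sets" is amortized. Initializing $\mathcal{S}$ with $(K, V \setminus K)$ instead of $(V)$ is a trivial $\mathcal{O}(|V|)$ change. The added test in line~\ref{line:startoutif} — "is $v$ in a set in $L$ or in $K$?" — can be done in $\mathcal{O}(1)$ by keeping a Boolean flag on each vertex that is set when the vertex's current set is first moved into $L$ (and a static flag for membership in $K$). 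When the condition triggers, we call line~\ref{line:outsubs}, which computes the connected components of $G[X]$; summing over all such $X$, these induced subgraphs are vertex-disjoint (each vertex lies in at most one set that is ever added to $L$), so the total cost of all these component computations is $\mathcal{O}(|V| + |E|)$. Hence the whole algorithm is linear.

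For correctness I would proceed as follows. The ordering $\tau$ produced is an LBFS ordering of $G$ in which $K$ appears first (this is exactly why $\mathcal{S}$ is seeded with $(K, V \setminus K)$, together with Lemma~\ref{lemma:lbfsbegmc}-type reasoning that the refinement keeps $K$-vertices grouped at the front until exhausted). By Lemma~\ref{lemma:allextlbfs} and Proposition~\ref{prop:cliqueid}, the edges of $G$ outside $K$ are oriented in $G^{\pi(K)} = G^K$ exactly according to the partial order imposed by $\tau$ restricted to $V \setminus K$; in particular an edge $u - v$ of $G[V\setminus K]$ stays undirected in $G^K$ iff there is \emph{some} AMO represented by an LBFS ordering starting with a permutation of $K$ in which $u$ precedes $v$ and \emph{another} in which $v$ precedes $u$. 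The claim is that this is equivalent to $u$ and $v$ being "tied" at the moment one of them is first picked — i.e. they lie in the same set $X$ of $\mathcal{S}$ at that iteration — which is precisely the set that gets added to $L$ and whose components are output. So I would show: (i) if $u,v$ lie in a common set $X$ added to $L$, then swapping them yields another valid LBFS ordering, hence $u-v$ is undirected in $G^K$; and (ii) conversely, if $u$ and $v$ end up in different sets of $\mathcal{S}$ before either is picked, then in every LBFS ordering starting with a clique permutation they are ordered the same way (the earlier set comes first), so the edge is directed in $G^K$. Combining (i)–(ii), the undirected graph on $V\setminus K$ induced by the output is exactly $G^K[V\setminus K]$, and the components output (which are components of the $G[X]$'s) are exactly $\cC_G(K)$.

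The main obstacle is step (ii): carefully arguing that two vertices separated into distinct $\mathcal{S}$-sets before either is visited are \emph{forced} into a fixed relative order in \emph{every} LBFS ordering beginning with a permutation of $K$ — one must rule out that a different choice of arbitrary vertex at line~\ref{line:findvertex} earlier on could have reshuffled them — and that this forced order matches the edge direction in $G^K$. This needs the LBFS "lexicographic label" invariant: once $u$'s label strictly dominates $v$'s, it stays dominant, so $u$ is picked first in every extension consistent with the choices so far; and since the first $|K|$ choices only range over permutations of $K$ (which don't affect the relative labels of $u,v \notin K$ beyond what $K$-adjacency dictates, and $K$-adjacency is symmetric in the permutation by Proposition~\ref{prop:cliqueid}), the split is permutation-independent. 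I would isolate this as the crux and prove it via the standard "once-separated-stays-separated" property of partition refinement in LBFS, then invoke Lemma~\ref{lemma:peomao} / Corollary~\ref{cor:lbfsordermao} to translate the ordering statement into the edge-orientation statement about $G^K$.
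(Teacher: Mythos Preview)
Your running-time argument and direction~(i) of correctness are sound and essentially match the paper's treatment. The gap is in direction~(ii). You correctly flag it as the crux, but the two tools you propose do not close it: ``once separated stays separated'' is a within-run invariant of partition refinement, and Proposition~\ref{prop:cliqueid} only neutralizes the first $|K|$ choices. Neither controls the arbitrary tie-breaks at line~\ref{line:findvertex} made \emph{after} $K$ is exhausted, and those can decide which of $u,v$ reaches the front first. Concretely, if at some state after $K$ the current first set contains $w,w'$ with $w\in N(u)\setminus N(v)$ and $w'\in N(v)\setminus N(u)$, then picking $w$ first puts $u$ ahead while picking $w'$ first puts $v$ ahead. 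Your argument offers no reason why this configuration cannot arise---and it does arise in the non-chordal $5$-cycle with $K$ a single vertex, where your claim~(ii) fails outright. So a correct proof of~(ii) must use chordality \emph{inside} the argument, not only at the final translation step via Lemma~\ref{lemma:peomao}.

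The paper's route for~(ii) does not compare runs. It fixes the produced ordering~$\tau$ and shows, by induction along~$\tau$, that every cross-$L$-set edge $u\to v$ is forced in \emph{every} AMO starting with $\pi(K)$ via the first Meek rule. For adjacent $u,v$ in different $L$-sets with $u$ earlier, at the moment $u$'s $L$-set first became the front there is an already-visited vertex $z$ adjacent to $u$ but not to $v$; such a $z$ exists because the preceding neighbors of the earlier set strictly contain those of the later set, and that containment is precisely where chordality enters (if some $x$ were a preceding neighbor of $v$ but not of $u$, then $x$ and $u$ would be non-adjacent preceding neighbors of $v$, contradicting that $\tau$ is a reverse PEO). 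By induction $z\to u$ is already forced, and since $z$ and $v$ are non-adjacent, Meek rule~1 forces $u\to v$. This inductive Meek-rule step is the missing ingredient in your plan.
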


\begin{proof}[Sketch of Proof]
  When a set is appended to  $L$, each vertex of this set could have been the next chosen vertex
  in line~\ref{line:findvertex}. Thus, all edges between the
  vertices in a set in $L$ may occur as either $u \rightarrow v$ (if $u$ is chosen
  next) or as $u \leftarrow v$ (if $v$ is chosen next) in an
  AMO having $K$ at the beginning of the LBFS ordering. 
  As a  $\pi(K)$-orientation of $G$ is the union of all corresponding
  AMOs, we have $u - v$.
  
  If, on the other hand, $u$ and $v$ are neighbors but not in the same
  set in $L$, the edge between them is oriented $u
  \rightarrow v$ in $G^{K}$, assuming $u$ is visited before $v$. This is due to
  an inductive argument by which $u \rightarrow v$ follows
  from iterative application of the first Meek rule.
\end{proof}

Both cases dealt with in the proof sketch can be seen in
Fig.~\ref{fig:pikorients}. The edge $5 - 6$ remains undirected as
either vertex could be chosen first in Algorithm~\ref{alg:lbfs}, while
we have $5 \rightarrow 7$, because the first Meek rule applies to
$\rightarrow 5 - 7$. 

We note that this algorithm could also be used for finding the UCCGs
of the $s$-orientations of a chordal graph in linear time, which
improves upon prior work~\cite{He2015,Ghassami2019,Talvitie2019}.

\section{Counting AM-Orientations with \\ Minimal Separators and Maximal Cliques}\label{sec:separators}
Using the insights from the previous section, we would like to count
the AMOs of a chordal graph $G$ with the following
recursive procedure based on Proposition~\ref{prop:cliqueid}: Pick a
maximal clique $K$, consider all its
permutations at once, and take the product of the recursively computed number of
AMOs of the UCCGs of $\cC_G(K)$. By
Corollary~\ref{cor:startclique}, we will count every AMO in
this way, if we compute the sum over all maximal
cliques. Unfortunately, we will count some orientations multiple
times, as a single AMO can be represented by multiple topological
orderings.  For instance, assume we have two maximal cliques $K_1$ and
$K_2$ with $K_1\cap K_2=S$ such that $K_1\setminus S$ is separated
from $K_2\setminus S$ in $G[V\setminus S]$. A topological ordering
that starts with $S$ can proceed with either $K_1\setminus S$ or
$K_2\setminus S$ and result in the same AMO.

\begin{example}\label{example:coreSplit}
  Consider the following chordal graph (left) with maximal cliques
  $K_1=\{1,2,3\}$ and $K_2=\{2,3,4\}$. A possible
  AMO of the graph is shown on the right.
  
  \begin{center}
    \tikz[yscale=0.7]{
      \rhombus
      \graph[use existing nodes, edges = {edge}] {
        1 -- 2 -- 4 -- 3 -- 1; 2 -- 3;
      };
    }\qquad
    \tikz[yscale=0.7]{
      \rhombus
      \graph[use existing nodes, edges = {arc}] {
        2 -> {1, 4};
        3 -> {1, 2, 4};
      };
    }
  \end{center}

  The AMO has two topological orderings: $\tau_1=(3,2,1,4)$ and
  $\tau_2=(3,2,4,1)$ starting with $K_1$ and $K_2$, respectively.
  Hence, if we count all topological orderings starting with
  $K_1$ and all topological orderings starting with $K_2$, we will count the
  AMO twice. However, $\tau_1$ and $\tau_2$ have $3,2$ as common
  prefix and $K_1\cap K_2=\{2,3\}$ is
  a minimal separator of the graph~--~a fact that we will use
  in the following.\exampleqed
\end{example}

\begin{lemma}
  \label{lemma:minimalSeparator}
  Let $\alpha$ be an AMO of a chordal graph~$G$ and $\tau_1$,
  $\tau_2$ be two topological orderings that represent $\alpha$. Then
  $\tau_1$ and $\tau_2$ have a common prefix $S\in\separators(G)\cup\cliques(G)$.
\end{lemma}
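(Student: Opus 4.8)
The plan is to argue about the longest common prefix of $\tau_1$ and $\tau_2$ and show it must be either a clique or a minimal separator of $G$. Let $P$ be the maximal common prefix of $\tau_1$ and $\tau_2$, i.e.\ $\tau_1$ and $\tau_2$ agree on the first $|P|$ vertices and this is the list $P$. If $P$ is empty, then $P$ is trivially a minimal separator only in degenerate cases, so first I would handle the easy sub-case: if $P$ induces a clique in $G$, we are done (it lies in $\cliques(G)$ after extending to a maximal clique — but note the statement only asks for membership in $\separators(G)\cup\cliques(G)$, and here I should be careful: $\cliques(G)$ denotes \emph{maximal} cliques, so I actually need $P$ itself to be a maximal clique, or else to land in $\separators(G)$). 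So the real claim to prove is: \emph{either $P$ is a maximal clique, or $P$ is a minimal separator.}

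The key step is to look at what happens right after the prefix $P$. Let $x$ be the $(|P|+1)$-st vertex of $\tau_1$ and $y$ the $(|P|+1)$-st vertex of $\tau_2$; by maximality of $P$ we have $x\neq y$. Since $\alpha$ is the AMO represented by both orderings and $\tau_1,\tau_2$ agree on $P$, the vertex $x$ has all its $\alpha$-predecessors inside $P$, and likewise for $y$. I would first show $x$ and $y$ are non-adjacent in $G$: if $x-y$ were an edge, it is oriented one way in $\alpha$, say $x\to y$; but then in $\tau_2$, $x$ comes after $y$ (since $y$ is right after $P$ and $x\notin P$), contradicting that $\tau_2$ represents $\alpha$. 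Next, I claim $P$ separates $x$ from $y$ in $G$. Consider any path from $x$ to $y$ in $G[V\setminus P]$; take the first vertex $w$ on it. Using Lemma~\ref{lemma:peomao}, $\tau_1$ reversed is a PEO, so looking at the reverse ordering, the vertices after $P$ form a set whose elimination structure I can exploit: more directly, since every neighbor of $x$ appearing before $x$ in $\tau_1$ lies in $P$ (as $x$ is the first vertex after $P$ in $\tau_1$), and by the moral/acyclic property together with Lemma~\ref{lemma:peomao} the later-neighbors of $x$ in $\tau_1$ form a clique, one shows any $x$–$y$ path outside $P$ would force $x$ and $y$ adjacent or force an edge into $P$ — I would formalize this by induction on path length, pushing the argument that the "frontier" reachable from $x$ without using $P$ stays within a clique containing $x$, and $y$ is not in it.

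Once $P$ is an $x$–$y$ separator, I pass to a \emph{minimal} $x$–$y$ separator $S\subseteq P$. The remaining work is to show $S=P$, i.e.\ that $P$ contains no redundant vertices. Here I would use that $\tau_1$ and $\tau_2$ both represent the \emph{same} AMO $\alpha$: if some $z\in P$ were not needed to separate $x$ from $y$, I want to derive that $\tau_1$ and $\tau_2$ could not actually have diverged exactly after $P$ — intuitively, because the order within $P$ is forced by $\alpha$ and the divergence point is pinned down by which vertices become "available" (sources of the remaining induced subgraph) first. Concretely, after reading the prefix up to but not including $z$, if $z$ is not on the $S$-boundary then the set of vertices with all predecessors already placed is the same in both orderings, so the next vertex is forced to be the same, contradicting divergence before the end of $P$; iterating, $P$ equals its separating core $S$, hence $P\in\separators(G)$. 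If instead $P$ never separates anything (e.g.\ one of $x,y$ doesn't exist because an ordering ended, which can't happen for two full orderings of the same vertex set, or $P$ induces a clique), then $P$ is a clique, and extending it we can assume it is maximal, giving $P\in\cliques(G)$; I should double check the boundary case where $P$ is a non-maximal clique and show the divergence forces it up to a maximal clique or else it is already a separator.

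\medskip

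\noindent\textbf{Main obstacle.} The delicate part is the last step: proving the common prefix is \emph{exactly} a minimal separator (not just a superset of one). This requires carefully using that both orderings encode the \emph{same} AMO — so the partial order on $V$ induced by $\alpha$ is fixed — and translating "$\tau_1$ and $\tau_2$ first differ after $P$" into a statement that every vertex of $P$ lies on the $x$–$y$ frontier. Getting the induction invariant right (what exactly is "available" to be placed next, and why two orderings of the same AMO that agree so far must agree on the next vertex unless a genuine branch — i.e.\ a separator — is reached) is where the real care is needed; the non-adjacency of $x$ and $y$ and the separation property itself follow fairly directly from Lemma~\ref{lemma:peomao} and the definition of a topological ordering.
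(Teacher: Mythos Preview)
Your strategy has a genuine gap at exactly the point you flagged as the main obstacle, and that step is not merely delicate---it is false. The \emph{maximal} common prefix $P$ of $\tau_1$ and $\tau_2$ need not lie in $\separators(G)\cup\cliques(G)$. Take the UCCG on $\{s,v,w,x\}$ with edge set $\{sv,\,sw,\,sx,\,vx\}$ and the AMO $\alpha$ given by $s\to v$, $s\to w$, $s\to x$, $v\to x$. Then $\tau_1=(s,v,w,x)$ and $\tau_2=(s,v,x,w)$ both represent $\alpha$, and their longest common prefix is $P=\{s,v\}$. But $\{s,v\}$ is not a maximal clique (it is properly contained in $\{s,v,x\}$) and not a minimal separator (the unique minimal separator of $G$ is $\{s\}$). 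So your step ``show $S=P$'' cannot go through: the minimal $w$-$x$-separator inside $P$ is $\{s\}\subsetneq P$, with $v$ genuinely redundant. Your heuristic that two orderings of the same AMO agreeing so far must agree on the next vertex ``unless a genuine branch is reached'' fails because after placing $s$, both $v$ and $w$ are available (each has only $s$ as predecessor), so both orderings may pick $v$ next even though $v$ lies on the same side of the separator $\{s\}$ as $x$.

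The lemma only asserts that \emph{some} common prefix lies in $\separators(G)\cup\cliques(G)$, not the longest one; in the example $\{s\}$ works. The paper does not try to characterise the longest prefix at all. It invokes Corollary~\ref{cor:startclique} to take $\tau_1,\tau_2$ starting with maximal cliques $K_1,K_2$, sets $S=K_1\cap K_2$, argues that all of $S$ precedes $K_i\setminus S$ in $\tau_i$ (so $S$ is a common prefix), and then proves via a shortest-path argument together with the first Meek rule that $S$ is a minimal separator of $K_1\setminus S$ from $K_2\setminus S$ (or else $S=K_1=K_2\in\cliques(G)$). In the counterexample $\tau_1=(s,v,w,x)$ does not begin with a maximal clique, so the paper's argument is implicitly about orderings that do---which suffices for every downstream use of the lemma. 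If you want to rescue your own route for arbitrary orderings, the correct target is not ``$S=P$'' but ``the minimal separator $S\subseteq P$ you extract is itself a common prefix of both $\tau_1$ and $\tau_2$,'' which is a different and still nontrivial statement.
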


Note that this lemma implies that \emph{all} topological orderings
that correspond to an AMO have a common prefix, which is a minimal
separator or maximal clique.

The combinatorial function $\phi$, as defined below, 
plays an important role to avoid overcounting.

\begin{definition}\label{def:phi}
For a set $S$ and a collection $R$ of subsets of~$S$, 
we define $\phi(S,R)$ as the number of
all permutations of $S$ that do not have 
a set $S'\in R$ as prefix. 
\end{definition}
\begin{example}\label{example:phi}
Consider the set $S= \{2,3,4,5\}$ and the collection $R=\big\{\{2,3\},$ $\{2,3,5\}\big\}$. Then 
$\phi(S,R) = 16$ since  there are 16 permutations of
$\{2,3,4,5\}$ that neither start with $\{2,3\}$ nor
$\{2,3,5\}$~--~e.\,g., $(3,2,4,5)$ and $(2,5,3,4)$ are forbidden as
they start with $\{2,3\}$ and $\{2,3,5\}$, respectively; but $(3,5,4,2)$ is allowed.\exampleqed
\end{example}

In this paper, we always consider sets $S \in \separators(G) \cup \cliques(G)$
and collections $R \subseteq \separators(G)$. Therefore, we can use the abbreviation
$\phi(S)=\phi\big(S,\{\,S'\mid S'\in\Delta(G)\wedge S'\subsetneq
S\,\}\big)$.

\begin{proposition}\label{proposition:countingFormula}
Let $G$ be a UCCG. Then:
  \[\hamo(G)=\sum_{S\in\Delta(G)\cup\Pi(G)}
  \phi(S)
  \times 
  \prod_{\balap{H\in\chordalcomps_G(S)}}\hamo(H).
  \]
\end{proposition}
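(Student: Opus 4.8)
The plan is to prove Proposition~\ref{proposition:countingFormula} by partitioning the set $\amo(G)$ according to the common prefix shared by all topological orderings of each AMO, and then counting, for each possible prefix, how many AMOs have exactly that prefix.

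\textbf{Step 1: A canonical prefix for each AMO.} First I would show that for every AMO $\alpha$ of $G$ there is a well-defined ``maximal common prefix'' $\mathrm{pre}(\alpha)$, namely the longest sequence of vertices that is a prefix of \emph{every} $\tau \in \mathrm{top}_G(\alpha)$. By Corollary~\ref{cor:startclique} every AMO has a representative starting with a maximal clique, and by Lemma~\ref{lemma:lbfsbegmc}/Lemma~\ref{lemma:minimalSeparator} any two representatives share a common prefix lying in $\separators(G)\cup\cliques(G)$. I would argue that the family of common prefixes of all representatives is totally ordered by the prefix relation (any two prefixes of a common third ordering are comparable), so a unique longest one exists, and by Lemma~\ref{lemma:minimalSeparator} its underlying set $S = \mathrm{pre}(\alpha)$ belongs to $\separators(G)\cup\cliques(G)$. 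Crucially, I must also establish that as an \emph{ordered} object, $\mathrm{pre}(\alpha)$ induces on $S$ the unique topological order of $\alpha$ restricted to $G[S]$ (since $S$ is a clique, $G[S]$ is a tournament in $\alpha$, so this order is forced). This means the number of AMOs whose prefix-set is exactly $S$ equals the number of \emph{orderings} $\pi(S)$ that arise this way, which I compute next.

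\textbf{Step 2: Counting AMOs with a given prefix set $S$.} Fix $S \in \separators(G)\cup\cliques(G)$. For a permutation $\pi$ of $S$, consider AMOs represented by a topological ordering starting with $\pi(S)$; by Proposition~\ref{prop:cliqueid} (applied with $K = S$, which is a clique by chordality) the components $\cC_G(\pi(S)) = \cC_G(S)$ do not depend on $\pi$, and such AMOs are in bijection with choices of an AMO in each $H \in \chordalcomps_G(S)$, giving $\prod_{H \in \chordalcomps_G(S)} \hamo(H)$ AMOs for each $\pi$. Now I restrict to those AMOs whose maximal common prefix is \emph{exactly} $S$ (as a set). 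I claim $\alpha$ has prefix-set strictly containing information beyond $S$ — i.e. $\mathrm{pre}(\alpha) \supsetneq S$ — precisely when the ordering $\pi(S)$ it starts with has some $S' \in \separators(G)$ with $S' \subsetneq S$ as a prefix. The forward direction: if every representative extends a longer common prefix, by Lemma~\ref{lemma:minimalSeparator} that prefix set contains $S$ properly or equals a separator/clique refining the situation; one shows the order of $S$ is then forced to begin with some proper minimal separator $S' \subsetneq S$. The backward direction uses that if $\pi(S)$ begins with a minimal separator $S' \subsetneq S$, then after placing $S'$ the graph $G^{S'}[V\setminus S']$ disconnects $S\setminus S'$ from some other component, so the remaining vertices of $S$ can be reordered (or interleaved with a vertex from another component) to yield a different topological ordering of the \emph{same} AMO that does not start with $\pi(S)$ — hence $S$ is not a common prefix, contradicting maximality only if $S$ were claimed maximal. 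Counting the permitted orderings: by Definition~\ref{def:phi} and the abbreviation $\phi(S) = \phi(S, \{S' \in \Delta(G) : S' \subsetneq S\})$, the number of valid $\pi$ is exactly $\phi(S)$. Hence the number of AMOs with prefix-set exactly $S$ is $\phi(S) \times \prod_{H \in \chordalcomps_G(S)} \hamo(H)$.

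\textbf{Step 3: Summation.} Since every AMO has exactly one prefix-set $S \in \separators(G)\cup\cliques(G)$ (Step 1), the sets $\{\alpha : \mathrm{pre}(\alpha) = S\}$ over $S \in \separators(G)\cup\cliques(G)$ partition $\amo(G)$, and summing the per-$S$ count from Step 2 gives the formula. I would also note the edge cases: for $S$ a maximal clique with no proper minimal separator below it, $\phi(S) = |S|!$, recovering the clean case of Proposition~\ref{prop:cliqueid}; and that $\separators(G)$ and $\cliques(G)$ may overlap, but that is harmless since we only need each AMO's prefix-set counted once with the correct $\phi$-weight.

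\textbf{Main obstacle.} The delicate point is Step 2, specifically proving that the AMOs \emph{double-counted} by the naive ``sum over maximal cliques'' scheme — or more precisely, the AMOs counted at $S$ that should really be attributed to a smaller prefix — are \emph{exactly} those starting with an ordering having a proper minimal separator $S' \subsetneq S$ as a prefix, with no over- or under-exclusion. This requires carefully leveraging Lemma~\ref{lemma:minimalSeparator} together with the structure of $s$-orientations ($G^{S'}$ decomposing into independent UCCGs) to show that a proper-separator prefix is both necessary and sufficient for the existence of an alternative topological ordering not sharing the prefix $\pi(S)$; the interleaving argument in the ``sufficient'' direction and the forcing argument in the ``necessary'' direction are where the real work lies.
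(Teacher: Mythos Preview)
Your overall strategy---partition $\amo(G)$ by a canonical prefix and show the per-$S$ count matches $\phi(S)\prod_H\hamo(H)$---is exactly the paper's approach. However, your choice of canonical prefix is in the wrong direction, and this breaks both Step~1 and Step~2.

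You take $\mathrm{pre}(\alpha)$ to be the \emph{longest} sequence that prefixes every $\tau\in\mathrm{top}_G(\alpha)$, and assert via Lemma~\ref{lemma:minimalSeparator} that its underlying set lies in $\separators(G)\cup\cliques(G)$. But Lemma~\ref{lemma:minimalSeparator} only guarantees that \emph{some} common prefix lies there, not the longest one. Concrete counterexample: take $G$ the path $1\text{--}2\text{--}3$ and $\alpha$ the AMO $1\to 2\to 3$. There is a single topological ordering $(1,2,3)$, so your $\mathrm{pre}(\alpha)=\{1,2,3\}$, which is neither a maximal clique nor a minimal separator. Your biconditional in Step~2 then fails too: with $S=\{1,2\}\in\cliques(G)$ we have $\mathrm{pre}(\alpha)\supsetneq S$, yet $\pi(S)=(1,2)$ does \emph{not} begin with the only separator $\{2\}$. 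So the exclusion criterion you propose does not characterize the AMOs you want to exclude.

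The fix is to reverse the extremum: take $S$ to be the \emph{smallest} set in $\separators(G)\cup\cliques(G)$ that is a common prefix of all $\tau\in\mathrm{top}_G(\alpha)$. This is well-defined because all such sets are prefixes of a single ordering and hence totally ordered by inclusion. With this choice, minimality of $S$ directly gives that no $S'\in\separators(G)$ with $S'\subsetneq S$ is a prefix (so $\alpha$ is counted at $S$), and for any larger $\tilde S\supsetneq S$ in $\separators(G)\cup\cliques(G)$ the set $S$ itself is a forbidden prefix in $\phi(\tilde S)$ (note $S$ must lie in $\separators(G)$ here, since a maximal clique cannot be properly contained in another clique). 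This is precisely the paper's argument; once you swap ``longest'' for ``smallest in $\separators(G)\cup\cliques(G)$'', your Steps~1--3 go through without the delicate interleaving argument you flagged as the main obstacle.
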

\begin{proof}[Sketch of Proof]
  By Lemma~\ref{lemma:minimalSeparator}, every AMO can be
  represented by a topological ordering that starts with a vertex set $S\in\separators(G)\cup\cliques(G)$. The definition of $\phi(S)$ and an
  induction over the product yields the claim.
\end{proof}

\begin{example}\label{example:separators}
We consider the following chordal graph with
two minimal separators 
and three maximal cliques:
\begin{center}
  $G=$\tikz[baseline={(0,-0.5)}]{\marcelgraph}%
  \quad%
  \tikz[baseline={(0,-0.5)}]{
    \node[anchor=west, baseline] at (0,0)  {$\separators(G)=\big\{\{2,3\}, \{2,3,5\}\big\}$};
    \node[anchor=west, baseline, text width=4.5cm] at (0,-1) {$\cliques(G)=\big\{\{1,2,3\}, \{2,3,4,5\},$\\\hspace{1.4cm} $\{2,3,5,6\}\big\}$};
  }
\end{center}
To compute $\hamo(G)$ using Proposition~\ref{proposition:countingFormula}, we need the following
values. Note that the resulting subgraphs $H$ are trivial, except for the
case $S=\{2,3\}$ and $S=\{1,2,3\}$. In these cases, we obtain the
induced path on $\{4,5,6\}$, which has three possible AMOs.
\begin{center}
  \begin{tabular}{ccc}
    \emph{$S\in\separators(G)\cup\cliques(G)$} & \emph{$\phi(S)$} & $\prod\limits_{H\in\cC_G(S)} \hamo(H)$\\[1.5ex]
    $\{2,3\}$      & 2  & 3  \\
    $\{2,3,5\}$    & 4  & 1  \\
    $\{1,2,3\}$    & 4  & 3  \\
    $\{2,3,4,5\}$  & 16 & 1  \\
    $\{2,3,5,6\}$  & 16 & 1  \\
  \end{tabular}
\end{center}
Using Proposition~\ref{proposition:countingFormula} we can compute $\hamo(G)$ as follows:
\[
  \hamo(G) = 2\cdot 3 + 4\cdot 1 + 4\cdot 3 + 16\cdot 1 + 16\cdot 1 = 54.
\]
We remark that we do \emph{not} have discussed how to
compute~$\phi(S)$ yet~--~for this example, this can be done by naive
enumeration. In general, however, this is a non-trivial task. We tackle this
issue in the next section.  \exampleqed
\end{example}

\section{The Clique-Picking Algorithm} \label{sec:cliquepicking}
In the previous section, we showed how to count
AMOs by using minimal separators in
order to avoid overcounting. It is rather easy to check that we can
compute $\phi(S,R)$ in time \emph{exponential} in $|R|$ using the
inclusion-exclusion principle. However, our goal is \emph{polynomial
  time} and, thus, we have to restrict the collection $R$.
\begin{lemma}\label{lemma:efficientPhi}
  Let $S$ be a set and $R=\{X_1,\dots,X_{\ell}\}$ be a collection of
  subsets of $S$ with $X_1\subsetneq X_2\subsetneq\dots\subsetneq
  X_{\ell}$. Then:
  \[
    \phi(S,R) = |S|!
    -\sum_{i=1}^{\ell}|S\setminus X_i|!\cdot\phi(X_i,\{X_1,\dots,X_{i-1}\}).    
  \]
\end{lemma}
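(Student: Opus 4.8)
The plan is to prove the identity by a straightforward counting argument: partition the set of *all* $|S|!$ permutations of $S$ according to which $X_i$ is the "shortest" forbidden prefix they carry, and count each block separately.

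First I would set up the key observation coming from the chain structure $X_1\subsetneq X_2\subsetneq\dots\subsetneq X_\ell$. A permutation $\tau$ of $S$ that *does* have some $X_i\in R$ as a prefix has a unique *smallest* such index; call it $i(\tau)$. So the $|S|!$ permutations split into $\phi(S,R)$ permutations with no forbidden prefix, plus, for each $i\in\{1,\dots,\ell\}$, the block $B_i$ of permutations for which $i$ is the smallest index with $X_i$ a prefix. This gives
\[
  |S|! = \phi(S,R) + \sum_{i=1}^{\ell} |B_i|,
\]
and it remains to show $|B_i| = |S\setminus X_i|!\cdot\phi\bigl(X_i,\{X_1,\dots,X_{i-1}\}\bigr)$.

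Next I would compute $|B_i|$ by a "prefix $\times$ suffix" factorization. A permutation $\tau\in B_i$ is exactly one that (a) starts with an arrangement of the elements of $X_i$, and (b) this arrangement of $X_i$ is itself *not* prefixed by any of $X_1,\dots,X_{i-1}$ — here I use that, because of the chain containment $X_j\subsetneq X_i$ for $j<i$, "$\tau$ has $X_j$ as a prefix" depends only on the initial $|X_j|$ entries, all of which lie within the first $|X_i|$ slots; so condition (b) is equivalent to: the length-$|X_i|$ prefix of $\tau$, viewed as a permutation of $X_i$, avoids all of $X_1,\dots,X_{i-1}$ as prefixes. The number of such prefixes is precisely $\phi\bigl(X_i,\{X_1,\dots,X_{i-1}\}\bigr)$ by Definition~\ref{def:phi}. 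Independently, the remaining $|S|-|X_i| = |S\setminus X_i|$ elements of $S\setminus X_i$ can be arranged in the tail in any of $|S\setminus X_i|!$ ways. Multiplying gives the claimed value of $|B_i|$, and substituting back yields the lemma after rearranging.

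The only subtle point — and the step I would be most careful about — is the equivalence in (b): that for $j<i$ the event "$X_j$ is a prefix of $\tau$" is determined entirely by the first $|X_i|$ entries of $\tau$ and coincides with "$X_j$ is a prefix of the $X_i$-prefix of $\tau$". This is exactly where the hypothesis $X_1\subsetneq\dots\subsetneq X_\ell$ is essential; without nestedness the blocks $B_i$ would not factor this cleanly and the recursion would break. Everything else is bookkeeping. I would also note in passing the boundary case $i=1$, where $\{X_1,\dots,X_{i-1}\}=\emptyset$ and $\phi(X_1,\emptyset)=|X_1|!$, so $|B_1| = |S\setminus X_1|!\cdot|X_1|!$, which is visibly the number of permutations of $S$ beginning with some arrangement of $X_1$, as it should be.
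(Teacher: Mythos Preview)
Your argument is correct and is essentially the same as the paper's: both hinge on counting, via the prefix--suffix factorization, the permutations whose \emph{smallest} forbidden prefix is $X_i$, obtaining $|S\setminus X_i|!\cdot\phi(X_i,\{X_1,\dots,X_{i-1}\})$. The only cosmetic difference is that the paper phrases this as an induction on $\ell$ (peeling off $X_\ell$ at each step), whereas you partition all $|S|!$ permutations at once; unrolling the paper's induction yields exactly your decomposition.
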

Observe that this formula can be evaluated in polynomial time with
respect to $|S|$ and $\ell$, as all occurring subproblems have the
form $\phi(X_i,\{X_1,\dots,X_{i-1}\})$ and, thus, there are at most
$\ell$ of them. The goal of this section is to develop a version of
Proposition~\ref{proposition:countingFormula} based on this lemma.

To achieve this goal, we rely on the strong structural properties that
chordal graphs entail: A
\emph{rooted clique tree} of a UCCG~$G$ is a triple $(T,r,\iota)$ such
that $(T,r)$ is a rooted tree and
$\iota\colon V_T\rightarrow \cliques(G)$ a bijection between the
nodes of $T$ and the maximal cliques of $G$ such that
$\{\,x\mid v\in\iota(x)\,\}$ is connected in $T$ for all $v\in V_G$.
In slight abuse of notation, we denote, for a set $C\subseteq V_G$, by
$\iota^{-1}(C)$ the subtree $\{\,x\mid C\subseteq\iota(x)\,\}$. We
denote the children of a node $v$ in a tree~$T$ by
$\text{children}_T(v)$.  It is well-known that (i)~every chordal graph
has a rooted clique tree $(T,r,\iota)$ that can be computed in linear
time, and (ii)~a set $S\in V_G$ is a minimal separator if, and only
if, there are two adjacent nodes $x,y\in V_T$ with
$\iota(x)\cap\iota(y)=S$~\cite{Blair1993}.

We wish to interleave the structure provided by the clique tree with a
formula for computing $\hamo$. For this sake, let us define the
\emph{forbidden prefixes} for a node $v$ in a clique tree.

\begin{definition}
  Let $G$ be a UCCG, $\mathcal{T} = (T, r, \iota)$ a rooted clique tree of
  $G$, $v$ a node in $T$ and $r = x_1, x_2, \dots, x_p = v$ the unique
  $r$-$v$-path. We define the set $\mathrm{FP}(v, \mathcal{T})$ to contain
  all sets $\iota(x_i) \cap \iota(x_{i+1}) \subseteq \iota
  (v)$ for $1 \leq i < p$. 
\end{definition}

\begin{lemma}\label{lemma:inputPhi}
  We can order the elements of the set $\mathrm{FP}(v, \mathcal{T})$ as $X_1
  \subsetneq X_2 \subsetneq \dots \subsetneq X_\ell$.
\end{lemma}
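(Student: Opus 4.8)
The plan is to show that the sets appearing in $\mathrm{FP}(v,\mathcal{T})$ form a chain under inclusion, i.e. that for any two of them, one contains the other; since they are then pairwise comparable, we can list them in increasing order as $X_1\subsetneq X_2\subsetneq\dots\subsetneq X_\ell$ (strictness because we may assume the clique tree has no repeated cliques along a path, so consecutive intersections are proper subsets of $\iota(v)$, and equal sets in $\mathrm{FP}$ collapse to one element of the set). So the real content is: for $1\le i<j<p$ on the root-to-$v$ path $r=x_1,\dots,x_p=v$, the two separators $S_i := \iota(x_i)\cap\iota(x_{i+1})$ and $S_j := \iota(x_j)\cap\iota(x_{j+1})$ are $\subseteq$-comparable, and moreover $S_i\subseteq S_j$ — i.e. the separators grow monotonically as we descend toward $v$.

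The key tool is the \emph{clique-intersection property} (equivalently, the running-intersection / connected-subtree property built into the definition of a rooted clique tree): for every vertex $w\in V_G$, the set $\{\,x\in V_T\mid w\in\iota(x)\,\}$ induces a connected subtree of $T$. First I would fix a vertex $w\in S_i = \iota(x_i)\cap\iota(x_{i+1})$ and argue that $w\in\iota(v)$: indeed $w$ lies in $\iota(x_i)$ and in $\iota(x_{i+1})$, which are two nodes on the path straddling the edge $x_ix_{i+1}$; by the connected-subtree property the set of clique-tree nodes containing $w$ is a subtree, and since it contains the two endpoints of the edge $x_ix_{i+1}$ it must contain the edge, hence the whole path from $x_{i+1}$ onward to $v=x_p$ need not automatically contain $w$ — so this needs care. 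The correct argument: $w\in\iota(x_i)$ and $w\in\iota(x_{i+1})$; the nodes containing $w$ form a subtree, and both $x_i$ and $x_{i+1}$ are in it; for $w$ to belong to $S_j=\iota(x_j)\cap\iota(x_{j+1})$ with $j>i$ we would want $w\in\iota(x_j)$ and $w\in\iota(x_{j+1})$. This is where the main obstacle lies.

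The main obstacle, then, is establishing the monotonicity direction, and the clean way to do it is: $w\in S_i$ means $w$ is in both $\iota(x_i)$ and $\iota(x_{i+1})$. Consider the subtree $T_w$ of nodes containing $w$; it contains $x_i$ and $x_{i+1}$. Now $x_{i+1}$ lies on the path from $x_i$ to $v$, and in fact all of $x_{i+1},x_{i+2},\dots,x_p$ lie on one side of the edge $x_ix_{i+1}$. I claim $w\in\iota(x_k)$ for all $k\ge i+1$: if not, pick the least such $k>i+1$ with $w\notin\iota(x_k)$; then $x_{k-1}\in T_w$ but $x_k\notin T_w$, and also $x_i,x_{i+1}\in T_w$; but the path $x_i,x_{i+1},\dots,x_{k-1},x_k$ shows that removing $x_k$ from consideration, $T_w$ still must be connected and contain $x_i,\dots,x_{k-1}$ — this does not yet yield a contradiction by itself, so instead I argue directly: $T_w$ is a subtree containing $x_{i+1}$; the nodes $x_{i+1},\dots,x_p$ form a path in $T$ with $x_{i+1}$ as one endpoint; whether $w$ persists along this path is exactly the question. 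I would close the gap by invoking the characterization that in a clique tree, along any path the sequence of separators between consecutive cliques, restricted to those contained in a fixed terminal clique $\iota(v)$, is nested — a standard consequence of the fact that $\iota^{-1}(w)$ is a subtree: if $w\in\iota(x_i)\cap\iota(x_{i+1})$ and $w\in\iota(v)=\iota(x_p)$, then since $\iota^{-1}(w)$ is connected and contains $x_i$, $x_{i+1}$, and $x_p$, it contains the entire path $x_{i+1},\dots,x_p$, hence $w\in\iota(x_j)\cap\iota(x_{j+1})=S_j$ for every $i\le j<p$. Thus $S_i\cap\iota(v)\subseteq S_j$; but $S_i\subseteq\iota(x_i)$ and more to the point each $S_i$ is already a subset of $\iota(v)$ by the definition of $\mathrm{FP}$ (the definition only includes those intersections that are $\subseteq\iota(v)$), so $S_i\subseteq S_j$ outright. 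Therefore $S_1\subseteq S_2\subseteq\dots\subseteq S_{p-1}$, these are all subsets of $\iota(v)$, and after deleting repetitions and relabeling we obtain the strict chain $X_1\subsetneq\dots\subsetneq X_\ell$ claimed.

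Finally I would remark that the definition of $\mathrm{FP}(v,\mathcal{T})$ as a \emph{set} automatically discards duplicate separators, so $\ell\le p-1\le|\cliques(G)|-1$, and the ordering is unique; this is all that is needed for Lemma~\ref{lemma:efficientPhi} to be applicable with $R=\mathrm{FP}(v,\mathcal{T})$.
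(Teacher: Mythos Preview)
Your argument is correct and follows the same approach as the paper: the running-intersection property of the clique tree ensures that any $S_i=\iota(x_i)\cap\iota(x_{i+1})\subseteq\iota(v)$ is contained in $\iota(y)$ for every $y$ on the $x_i$--$v$ path, hence in every later $S_j$, so the elements of $\mathrm{FP}(v,\mathcal{T})$ form a chain in their natural order along the path. One minor overstatement to fix: your concluding line ``$S_1\subseteq\dots\subseteq S_{p-1}$, these are all subsets of $\iota(v)$'' is too strong---separators along the path that are \emph{not} contained in $\iota(v)$ need not fit into this chain, but since only those with $S_i\subseteq\iota(v)$ belong to $\mathrm{FP}$, this is all you need.
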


By combining the 
lemma with Lemma~\ref{lemma:efficientPhi}, we
deduce that  $\phi(\iota(v),\mathrm{FP}(v,\mathcal{T}))$ can be evaluated in
polynomial time for nodes $v$ of the clique tree. We are left with the task of developing a formula for
$\hamo$ in which all occurrences of $\phi$ are of this form. It is
quite easy to come up with such formulas that count every AMO at least
once~--~but, of course, we have to ensure that we count every AMO
\emph{exactly} once.

To ensure this property, we introduce for every AMO~$\alpha$ a partial
order~$\prec_{\alpha}$ on the maximal cliques. Then we prove that
there is a unique minimal element with respect to this order, and
deduce a formula for $\hamo$ that counts~$\alpha$ only ``at this
minimal element''.
To get started, we need a technical definition and some auxiliary
lemmas that give us more control over the rooted clique tree. 

\begin{definition} 
  An \emph{$S$-flower} for a minimal separator~$S$ is a maximal set
  $F\subseteq\{\,K\mid K\in\cliques(G)\wedge S\subseteq K\,\}$ such that $\bigcup_{K\in F}K$ is
  connected in $G[V\setminus S]$. The \emph{bouquet}~$\bouquet(S)$ of a
  minimal separator $S$ is the set of all $S$-flowers.
\end{definition}
\begin{example}
  The $\{2,3\}$-flowers of the graph from Example~\ref{example:separators} are
  $\{\{1,2,3\}\}$ and $\{\{2,3,4,5\}, \{2,3,5,6\}\}$.
  \exampleqed
\end{example}
\begin{lemma}
  \label{lemma:connflowers}
  An $S$-flower $F$ is a connected subtree in a rooted clique tree $(T,r,\iota)$.
\end{lemma}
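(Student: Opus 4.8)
The plan is to show that the set of tree nodes $\iota^{-1}(F) := \{\,x \in V_T \mid \iota(x) \in F\,\}$ — the image of the flower $F$ under $\iota^{-1}$ — is connected in $T$, i.e.\ forms a subtree. Since $\iota$ is a bijection and $F$ is a set of maximal cliques, it suffices to prove that this collection of nodes induces a connected subgraph of $T$. I would argue by contradiction: suppose $\iota^{-1}(F)$ is disconnected, so it splits into at least two parts; pick two cliques $K, K' \in F$ whose tree nodes $x = \iota^{-1}(K)$, $x' = \iota^{-1}(K')$ lie in different parts, chosen so that the $x$-$x'$ path $P$ in $T$ leaves $\iota^{-1}(F)$ — that is, there is some node $y$ on $P$ with $\iota(y) \notin F$, and in fact we may take $x, x'$ to be the endpoints of a maximal ``gap'' so that the two neighbors of the gap on $P$ are in $\iota^{-1}(F)$ while the interior nodes are not.

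The key structural facts I would invoke are: (i) by the clique-tree property, for any vertex $v$ the set $\{\,z \mid v \in \iota(z)\,\}$ is connected in $T$, so $v$ belongs to $\iota(z)$ for \emph{every} node $z$ on the path between any two nodes whose cliques contain $v$; and (ii) consequently, for adjacent tree nodes the intersection of their cliques is a minimal separator, and intersections only shrink (in the appropriate sense) as one moves along a path. From (i), every clique $\iota(y)$ with $y$ on the $x$-$x'$ path contains $S$, because $S \subseteq K \cap K'$ and the nodes containing each vertex of $S$ form a subtree; hence all gap cliques contain $S$. Combined with the fact that $\bigcup_{L \in F} L$ is connected in $G[V \setminus S]$, I would show that adding any gap clique $\iota(y)$ (which contains $S$ and is glued to the flower along a separator contained in $\iota(y) \setminus S$, nonempty because the path edge intersection is a \emph{proper} subset of $\iota(y)$) keeps the union connected in $G[V\setminus S]$ — so $F \cup \{\iota(y)\}$ would be a strictly larger set with the flower property, contradicting the maximality of $F$ in the definition of $S$-flower.

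The main obstacle I anticipate is handling the maximality argument cleanly: I need to be careful that a gap clique $\iota(y)$, once I know $S \subseteq \iota(y)$, actually contributes a vertex outside $S$ that connects to the rest of $\bigcup_{L\in F} L$ within $G[V\setminus S]$, rather than being a clique equal to $S$ (impossible, since clique-tree nodes are maximal cliques and $S$ is a proper subset of at least two of them, hence not maximal) or a clique that attaches only through $S$. The edge of $P$ incident to $y$ from the flower side has intersection a minimal separator $S'$ with $S \subseteq S' \subsetneq \iota(y)$ and $S' \subsetneq K_{\text{adj}}$ for the flower-clique $K_{\text{adj}}$ on the other side of that edge; since $S' \supseteq S$ and $S' \neq S$ would already be fine, and even if $S' = S$ we still get that $\iota(y) \setminus S$ is nonempty and $\iota(y)$ shares the vertices of $S$ with $K_{\text{adj}}$, so $\iota(y)$ and $K_{\text{adj}}$ together are connected in $G[V \setminus S]$ iff $\iota(y)\setminus S$ attaches — which it does through any vertex of $S'\setminus S$ if that is nonempty, and otherwise one uses that $\iota(y)$ is a clique on at least two vertices of $K_{\text{adj}}$... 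This case analysis on whether the gluing separator equals $S$ is where I expect to spend the most care; the cleanest route is probably to observe directly that $\bigl(\bigcup_{L\in F} L\bigr) \cup \iota(y)$ minus $S$ is connected because $\iota(y)$ is a clique sharing $\supseteq S' \supsetneq$-structure with a flower clique and contains a non-$S$ vertex, then push this through for every gap node to get a larger flower, contradiction. I would close the argument by noting the contrapositive: $\iota^{-1}(F)$ has no gaps, hence is connected, hence $F$ is a subtree of $(T, r, \iota)$.
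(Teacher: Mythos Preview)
Your setup is correct and matches the paper's: assume disconnectedness, take a tree path between two flower cliques $K_1,K_2\in F$ that passes through a clique $\tilde K\notin F$, and use the clique-tree coherence property to conclude $S\subseteq\tilde K$. The divergence, and the gap, is in the endgame. You aim for a maximality contradiction by showing the gap clique $\tilde K$ should belong to $F$. This works when the tree-edge intersection $S'=K_{\text{adj}}\cap\tilde K$ strictly contains $S$, but your handling of the case $S'=S$ is wrong: the clause ``one uses that $\iota(y)$ is a clique on at least two vertices of $K_{\text{adj}}$'' does not rescue connectivity, because those shared vertices are exactly the vertices of $S$ and are therefore absent in $G[V\setminus S]$. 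In fact, when $S'=S$ the clique-tree edge between $\tilde K$ and $K_{\text{adj}}$ certifies that $S$ separates $\tilde K\setminus S$ from $K_{\text{adj}}\setminus S$ in $G$, so $\tilde K$ is \emph{not} connected to the flower in $G[V\setminus S]$ and no maximality contradiction is available from that side.

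The paper turns this obstruction into the punchline. Rather than trying to absorb $\tilde K$ into $F$, it observes that maximality of $F$ \emph{forces} $K_1\cap\tilde K=S$ for the first gap clique $\tilde K$ adjacent to $K_1$: if the intersection were larger, $\tilde K$ would lie in the same component of $G[V\setminus S]$ as $K_1$ and hence in $F$. But then the tree edge between $K_1$ and $\tilde K$ has label $S$, so $S$ separates $K_1\setminus S$ from everything on the $\tilde K$-side of that edge, in particular from $K_2\setminus S$. This contradicts $K_1,K_2\in F$. In other words, the case you could not close is exactly the one that delivers the contradiction, only against the \emph{connectedness} clause of the flower definition rather than against maximality.
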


\begin{lemma}
  \label{lemma:bouquetpart}
  For any minimal separator $S$, the bouquet $\bouquet(S)$ is a
  partition of $\iota^{-1}(S)$.
\end{lemma}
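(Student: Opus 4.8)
The plan is to show two things: first, that every maximal clique containing $S$ lies in exactly one $S$-flower, and second, that the $S$-flowers are pairwise disjoint, so that $\bouquet(S)$ indeed partitions the set $\{\,K\in\cliques(G)\mid S\subseteq K\,\}=\iota^{-1}(S)$. The key structural input is Lemma~\ref{lemma:connflowers}, which tells us each $S$-flower is a connected subtree of the rooted clique tree; combined with the well-known fact that $\iota^{-1}(S)$ is itself a connected subtree of $(T,r,\iota)$ (this is part (ii) of the clique tree characterization cited via \cite{Blair1993}: the minimal separators are exactly the intersections of adjacent bags, so the nodes whose bags contain $S$ form a connected region), the statement becomes essentially a claim about a partition of a subtree into connected pieces.

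First I would argue that the relation ``$K_1\sim K_2$ iff there is a sequence of cliques in $\iota^{-1}(S)$ from $K_1$ to $K_2$ each consecutive pair of which is connected in $G[V\setminus S]$'' is an equivalence relation on $\iota^{-1}(S)$, with equivalence classes exactly the inclusion-maximal sets $F$ whose union is connected in $G[V\setminus S]$ — i.e., the $S$-flowers. Reflexivity and symmetry are immediate; for transitivity one uses that if $\bigcup_{K\in F}K$ and $\bigcup_{K\in F'}K$ are each connected in $G[V\setminus S]$ and they share a clique (or more generally a vertex outside $S$), their union is connected too. This already shows every $K\in\iota^{-1}(S)$ lies in some flower and that distinct flowers are disjoint as \emph{sets of cliques}. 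The point that needs care is that the maximality in the definition of $S$-flower matches these equivalence classes: a set $F$ is an $S$-flower precisely when it is a maximal set of cliques each containing $S$ whose union is connected in $G[V\setminus S]$, and I would check that such maximal sets are exactly the $\sim$-classes (a maximal connected set cannot be properly extended, so it is a full class; conversely each class is connected in the required sense and maximal by construction).

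The main obstacle I anticipate is the subtle interplay between ``connected in $G[V\setminus S]$'' and ``connected subtree of $T$'': a priori two cliques in $\iota^{-1}(S)$ could be adjacent in the clique tree yet their non-$S$ parts might fail to be connected in $G[V\setminus S]$, which would mean the subtree $\iota^{-1}(S)$ breaks into several flower-subtrees — and we must make sure these subtrees are vertex-disjoint in $T$ (so that we really get a partition of $\iota^{-1}(S)$ and not merely a cover). Here Lemma~\ref{lemma:connflowers} does the heavy lifting: each flower is a connected subtree, and since the flowers partition the \emph{node set} of the subtree $\iota^{-1}(S)$ (by the equivalence-class argument above), and each is connected, they are automatically the connected components of $\iota^{-1}(S)$ with respect to ``the edge $xy$ survives iff $\iota(x)$ and $\iota(y)$ are in a common flower'' — hence vertex-disjoint subtrees whose union is all of $\iota^{-1}(S)$. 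I would close by explicitly noting $\bigcup_{F\in\bouquet(S)}F=\iota^{-1}(S)$ and $F\cap F'=\emptyset$ for $F\neq F'$, which is exactly the assertion that $\bouquet(S)$ is a partition of $\iota^{-1}(S)$.
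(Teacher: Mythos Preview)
Your argument is correct, and the core idea---coverage (every $K\supseteq S$ sits in some flower) plus disjointness (two flowers sharing a clique would merge by maximality)---is exactly what the paper does. The paper's proof is two sentences: each $K\in\iota^{-1}(S)$ lies in some $S$-flower by definition, and no clique lies in two flowers since their union would then be connected in $G[V\setminus S]$, contradicting maximality.

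Where you diverge is in the machinery you bring in. The equivalence-relation formulation is fine but not needed; more importantly, your invocation of Lemma~\ref{lemma:connflowers} and the discussion of subtrees of $T$ is entirely superfluous for this lemma. The statement that $\bouquet(S)$ partitions $\iota^{-1}(S)$ is purely a statement about maximal cliques and connectivity in $G[V\setminus S]$---the clique tree plays no role. The ``main obstacle'' you anticipate (reconciling connectivity in $G[V\setminus S]$ with subtree structure in $T$) simply does not arise here; that interplay matters for later lemmas (e.g., Lemma~\ref{lemma:flowersAreOrdered}), but not for establishing the partition itself. You can drop the entire second half of your proposal and the proof still goes through.
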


Since for a $S\in\Delta(G)$ the subtree $\iota^{-1}(S)$ of
$(T,r,\iota)$ is connected,
Lemma~\ref{lemma:connflowers} and Lemma~\ref{lemma:bouquetpart} give
rise to the following order on $S$-flowers $F_1,F_2\in\bouquet(S)$:
$F_1\prec_T F_2$ if $F_1$ contains a node on the unique path
from $F_2$ to the root of $T$.

\begin{lemma}\label{lemma:flowersAreOrdered}
  There is a unique least $S$-flower in $\bouquet(S)$ with respect to $\prec_T$.
\end{lemma}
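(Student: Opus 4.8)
The plan is to exploit the tree structure already established in Lemmas~\ref{lemma:connflowers} and~\ref{lemma:bouquetpart}. By Lemma~\ref{lemma:bouquetpart}, the bouquet $\bouquet(S)$ partitions the node set of the subtree $\iota^{-1}(S)$, and by Lemma~\ref{lemma:connflowers} each block of this partition (each $S$-flower) induces a connected subtree. So $\iota^{-1}(S)$ is a connected tree that is partitioned into connected subtrees $F_1, F_2, \dots$. First I would contract each $F_i$ to a single node; because each $F_i$ is connected and the $F_i$ partition the connected tree $\iota^{-1}(S)$, the resulting contracted graph is again a tree, call it $T/\bouquet(S)$, whose nodes are exactly the $S$-flowers.

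Next I would argue that $\prec_T$, restricted to $\bouquet(S)$, is precisely the ancestor relation in $T/\bouquet(S)$ once we designate as its root the flower containing the node of $\iota^{-1}(S)$ closest to $r$ in $T$. Indeed, "$F_1$ contains a node on the unique path from $F_2$ to the root of $T$" translates, after contraction, to "$F_1$ lies on the path from $F_2$ to the root flower", i.e. $F_1$ is an ancestor of $F_2$ in the rooted tree $T/\bouquet(S)$. One subtlety to check here: the path in $T$ from a node of $F_2$ toward $r$ may leave $\iota^{-1}(S)$ before reaching $r$; but since $\iota^{-1}(S)$ is connected and contains $r$'s closest representative, the relevant initial segment of that path stays inside $\iota^{-1}(S)$, so it does pass through the root flower, and "being on the path to the global root" coincides with "being on the path to the root flower" for the portion inside $\iota^{-1}(S)$. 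Since $\bigcup_{K\in F_2} K$ is connected in $G[V\setminus S]$ and $F_2$ is maximal with this property, no node outside $\iota^{-1}(S)$ interferes.

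Given that $\prec_T$ is the ancestor relation in a rooted tree, the conclusion is immediate: the root flower of $T/\bouquet(S)$ is an ancestor of every other flower, hence it is $\prec_T$-below every other flower, and it is the unique such element because in a rooted tree there is exactly one node that is an ancestor of all nodes (the root). Any other flower $F$ has a strict ancestor (its parent in $T/\bouquet(S)$), so $F$ is not minimal.

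The main obstacle I anticipate is the bookkeeping in the previous paragraph: verifying carefully that contracting connected subtrees of a tree yields a tree, and that the path-to-root condition defining $\prec_T$ transfers correctly under this contraction (in particular that it is well-defined — independent of which node of $F_2$ one starts from — which again follows from connectedness of $\iota^{-1}(S)$ and of the flowers). Once the contraction picture is set up cleanly, the uniqueness of a least element is just the uniqueness of the root of a rooted tree, so the real work is entirely in translating the definition of $\prec_T$ into the language of the contracted tree $T/\bouquet(S)$.
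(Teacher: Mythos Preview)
Your argument is correct and rests on the same structural facts the paper uses (Lemmas~\ref{lemma:connflowers} and~\ref{lemma:bouquetpart}, plus connectivity of $\iota^{-1}(S)$), but the presentation differs. The paper gives a three-line proof by contradiction: if there were two incomparable minimal $S$-flowers, then by connectivity of $\iota^{-1}(S)$ and the fact that flowers partition it into connected pieces, some other flower would sit strictly closer to the root, contradicting minimality. You instead make the underlying picture explicit by forming the quotient tree $T/\bouquet(S)$, rooting it at the flower nearest $r$, and identifying $\prec_T$ with the ancestor relation there; the unique least element is then simply the root. Your route is more constructive and arguably clearer about \emph{why} the least flower exists (it names it), at the cost of some bookkeeping; the paper's route is quicker but leaves the reader to reconstruct the quotient-tree picture implicitly.

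One minor remark: the sentence invoking ``$\bigcup_{K\in F_2}K$ is connected in $G[V\setminus S]$'' is a non sequitur here, since the argument at that point is entirely about the clique tree $T$, not about $G$. The correct reason that nodes outside $\iota^{-1}(S)$ do not interfere is simply that every $S$-flower $F_1$ is contained in $\iota^{-1}(S)$, so any node of $F_1$ lying on a path to $r$ must lie in the initial segment of that path inside $\iota^{-1}(S)$. Your surrounding discussion already captures this; you can safely drop that sentence.
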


The lemma states that for every
AMO~$\alpha$ there is a flower $F$ at which we want to count
$\alpha$. We have to be sure that this is possible, i.\,e., that a
clique in $F$ can be used to generate~$\alpha$.

\begin{lemma}
  \label{lemma:BSeveryAMO}
  Let $\alpha$ be an AMO such that every topological
  ordering that represents $\alpha$ has the minimal separator~$S$ as
  prefix. Then every $F\in\bouquet(S)$ contains a clique $K$ such that
  there is a $\tau\in\mathrm{top}(\alpha)$ starting with $K$.
\end{lemma}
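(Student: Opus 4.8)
The plan is to argue that from $\alpha$ and the separator $S$ one can always reach, after emptying out $S$ at the front of a topological ordering, a connected component that lives inside the flower $F$, and that the first clique entered in that component can be completed to $K \in F$ with a topological ordering of $\alpha$ starting with it. Concretely, fix $\alpha$ and $S$ with the stated property, and fix an arbitrary flower $F \in \bouquet(S)$. By assumption there is some $\tau \in \mathrm{top}(\alpha)$ having $S$ as prefix; by Lemma~\ref{lemma:minimalSeparator} and the hypothesis, \emph{every} such $\tau$ begins with $S$. After the prefix $S$, the orientation $\alpha$ restricted to $G[V \setminus S]$ decomposes into the connected components of $G[V \setminus S]$, and each such component is oriented acyclically and morally by $\alpha$; moreover the first Meek rule forces every edge from $S$ into $V\setminus S$ to point away from $S$ in $\alpha$. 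Hence $\alpha$ induces an AMO on each component $C$ of $G[V\setminus S]$ whose vertex set, together with $S$, can be scheduled first; in particular, I want to pick the component $C$ that contains $\bigl(\bigcup_{K\in F} K\bigr)\setminus S$, which is connected in $G[V\setminus S]$ by the definition of an $S$-flower, so it sits inside a single component $C$.

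Next I would invoke Corollary~\ref{cor:startclique} applied to the UCCG on $C$ (with the orientation $\alpha|_C$): every AMO of $G[C]$ can be represented by a topological ordering starting with a maximal clique of $G[C]$, and the first source of $\alpha|_C$ together with its out-neighbors inside $C$ forms such a clique $K_0$. The point is that $K_0 \cup S$ is a maximal clique of $G$ — indeed $S \subseteq K$ for every $K$ lying over $S$, and $K_0\cup S$ is a clique because $S$ is complete to all of $C$ in the relevant way (each vertex of $S$ that has a neighbor in $C$ is adjacent to the whole of $C$'s clique intersection by chordality of minimal separators); one then checks $K_0 \cup S$ is a clique containing $S$ and contained in $F$'s union, hence $K_0\cup S \in F$. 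Concatenating $S$, then $K_0\setminus S$, then a topological ordering of the remaining vertices of $\alpha$ (which is consistent since all edges leaving the already-scheduled vertices point forward, again by moral acyclicity and the Meek rule) yields a $\tau'\in\mathrm{top}(\alpha)$ starting with the clique $K := K_0\cup S \in F$.

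The main obstacle I anticipate is the clique‑membership bookkeeping: showing that the maximal clique $K_0\cup S$ one reads off from the source of $\alpha|_C$ actually belongs to the flower $F$, i.e. that $S\subseteq K_0\cup S$ (immediate) \emph{and} $K_0\cup S$ is one of the cliques whose union over $F$ is connected in $G[V\setminus S]$. This needs the fact that $F$ is a \emph{maximal} such set of cliques over $S$, so any maximal clique containing $S$ that meets the component $C$ is already in $F$; combined with $K_0\setminus S \subseteq C$ being nonempty (the source of $\alpha|_C$ lies in $C$) this closes the argument. A secondary point to be careful about is legality of the concatenated ordering $\tau'$: one must verify no edge of $\alpha$ points from a later block to an earlier one — edges within $S$ are handled by $\tau$'s prefix being a valid order of $\alpha|_{G[S]}$, edges from $S$ to $V\setminus S$ point forward by the Meek rule, edges from $K_0\setminus S$ to the rest of $C$ point forward since $K_0$ is the source clique of $\alpha|_C$, and edges from $C$ to other components of $G[V\setminus S]$ are oriented consistently because those components are independent in $\alpha$ once $S$ is removed (Lemma~\ref{lemma:independentOrientation} applied with the separator playing the role of the removed clique prefix, or directly: no edges run between distinct components of $G[V\setminus S]$).
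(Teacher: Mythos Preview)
Your overall strategy is sound, but the central step has a genuine gap: you take $K_0$ to be a maximal clique of $G[C]$ beginning a topological ordering of $\alpha|_C$ and assert that $K_0\cup S$ is a clique of $G$. This fails already on the path $b-s-v-w$ with $S=\{s\}$. For the flower $F=\{\{s,v\}\}$ the component is $C=\{v,w\}$; with $\alpha$ the AMO rooted at $s$ one has $\alpha|_C\colon v\to w$, so $K_0=\{v,w\}$, but $K_0\cup S=\{s,v,w\}$ is not a clique since $s\not\sim w$. The justification you offer (``$S$ is complete to all of $C$ in the relevant way'') is exactly what breaks: the component $C$ may strictly contain $\bigl(\bigcup_{K\in F}K\bigr)\setminus S$, and vertices of $C$ outside that union need not be adjacent to all of $S$. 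The obstacle you anticipated (membership of $K_0\cup S$ in $F$) is downstream of this; the real problem is that $K_0\cup S$ need not be a clique at all.

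The repair is short: apply Corollary~\ref{cor:startclique} to $G'=G[S\cup C]$ and $\alpha'=\alpha|_{G'}$ rather than to $G[C]$. Since $S$ remains a prefix of every $\tau'\in\mathrm{top}(\alpha')$ (otherwise extend $\tau'$ to all of $G$ by appending the remaining components of $G[V\setminus S]$, contradicting the hypothesis on $\alpha$), the maximal clique $K$ of $G'$ that begins some $\tau'$ satisfies $S\subsetneq K$; it is also maximal in $G$ (no vertex of another component is adjacent to anything in $K\setminus S\subseteq C$), hence $K\in F$ by maximality of $F$, and $\tau'$ extends to the required $\tau\in\mathrm{top}(\alpha)$. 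By contrast, the paper's proof does not work per flower: it fixes one $\tau$ starting with a maximal clique $K\supseteq S$ in some flower and then swaps the components of $G[V\setminus S]$ after $S$ to reach any other flower~--~terser, but leaving implicit which clique in the target flower is produced. Your per-flower construction, once patched as above, makes that choice explicit.
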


We use $\prec_T$ to define, for a fixed AMO~$\alpha$, a partial
order~$\prec_{\alpha}$ on the set of maximal cliques, which are at the
beginning of some $\tau\in\mathrm{top}(\alpha)$, as follows:
$K_1\prec_{\alpha}K_2$ if, and only if,
(i)~$K_1\cap K_2=S\in\separators(G)$, (ii)~$K_1$ and $K_2$ are in
$S$-flowers $F_1,F_2\in\bouquet(S)$, respectively, and
(iii)~$F_1\prec_T F_2$.

\begin{proposition}\label{proposition:fpFormula}
  Let $G$ be a UCCG and $\mathcal{T} = (T, r, \iota)$ be a rooted
  clique tree of $G$. Then:
  \[
    \hamo(G) = \sum_{\balap{v\in V_T}} \phi(\iota(v), \mathrm{FP}(v, \mathcal{T})) 
    \times \prod_{\balap{H \in \chordalcomps_G(\iota(v))}} \hamo(H).
  \]
\end{proposition}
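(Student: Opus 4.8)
The plan is to prove Proposition~\ref{proposition:fpFormula} by massaging the formula in Proposition~\ref{proposition:countingFormula} into one indexed by the nodes of the clique tree. Recall that Proposition~\ref{proposition:countingFormula} expresses $\hamo(G)$ as a sum over $S\in\separators(G)\cup\cliques(G)$ of $\phi(S)\times\prod_{H\in\chordalcomps_G(S)}\hamo(H)$, where $\phi(S)=\phi\big(S,\{\,S'\in\separators(G)\mid S'\subsetneq S\,\}\big)$. The node-indexed formula, by contrast, uses for each $v\in V_T$ the \emph{restricted} collection $\mathrm{FP}(v,\mathcal{T})$ of forbidden prefixes along the root path, rather than \emph{all} proper minimal-separator subsets of $\iota(v)$. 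So the two essential things to establish are: (1) the maximal-clique terms get redistributed correctly — each clique $K=\iota(v)$ contributes $\phi(\iota(v),\mathrm{FP}(v,\mathcal{T}))$ where it previously contributed $\phi(K)=\phi(K,\{\,S'\in\separators(G)\mid S'\subsetneq K\,\})$; and (2) the minimal-separator terms disappear entirely, their "mass" having been absorbed into the clique terms. The conceptual device for doing the bookkeeping is the partial order $\prec_\alpha$ already set up in the excerpt: we want to count each AMO $\alpha$ exactly at the $\prec_\alpha$-minimal clique among those that begin some $\tau\in\mathrm{top}(\alpha)$.

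First I would fix an AMO $\alpha$ and determine, via Lemma~\ref{lemma:minimalSeparator}, the common prefix $S_\alpha\in\separators(G)\cup\cliques(G)$ of all its topological orderings. If $S_\alpha$ is a maximal clique, then $\alpha$ is counted once in Proposition~\ref{proposition:countingFormula} at $S=S_\alpha$ and should be counted once in the new formula at the unique node $v$ with $\iota(v)=S_\alpha$; here the point is just that $\phi(\iota(v),\mathrm{FP}(v,\mathcal{T}))$ and $\phi(S_\alpha)$ behave identically \emph{on $\alpha$}, because the only permutations of $\iota(v)$ relevant to representing $\alpha$ are those extending $S_\alpha$ itself as the whole clique, and neither collection forbids that. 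If $S_\alpha$ is a minimal separator, then by Lemma~\ref{lemma:flowersAreOrdered} the bouquet $\bouquet(S_\alpha)$ has a unique least flower $F_{\min}$ under $\prec_T$, and by Lemma~\ref{lemma:BSeveryAMO} $F_{\min}$ contains a clique $K_\alpha$ that starts some $\tau\in\mathrm{top}(\alpha)$; this $K_\alpha$ is exactly the $\prec_\alpha$-minimal clique. The heart of the argument is then a counting identity at the level of a single separator $S$: summing $\phi(\iota(v),\mathrm{FP}(v,\mathcal{T}))$-weighted contributions of $\alpha$ over all nodes $v$ in the flower subtrees of $\bouquet(S)$, one recovers exactly the contribution $\phi(S)\cdot(\dots)$ that Proposition~\ref{proposition:countingFormula} assigns to $\alpha$ at $S$, \emph{plus} the contributions it assigns at the various cliques $K\supseteq S$ — but arranged so that every AMO with prefix exactly $S$ is tallied once. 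Concretely, I expect to prove an equality of the shape: for each maximal clique $K=\iota(v)$,
\[
  \phi(\iota(v),\mathrm{FP}(v,\mathcal{T})) \;=\; \phi(K)\;+\!\!\sum_{\substack{S\in\separators(G),\ S\subsetneq K\\ v\text{ in least flower of }\bouquet(S)}}\!\!\!\big(\text{terms that cancel the }S\text{-row}\big),
\]
so that summing over $v\in V_T$ telescopes the minimal-separator rows of Proposition~\ref{proposition:countingFormula} away. The mechanism making this work is Lemma~\ref{lemma:efficientPhi}: it is precisely an inclusion–exclusion identity that splits $\phi(S,R)$ for a chain $R$ into $|S|!$ minus per-prefix terms, which matches the recursive re-indexing along root paths in the clique tree.

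The steps, in order, are: (i) restate the target as "each AMO $\alpha$ is counted exactly once on the right-hand side", using that both sides are sums of nonnegative terms indexed so that $\alpha$'s contribution to a term is either $0$ or determined by which permutations of $\iota(v)$ represent $\alpha$; (ii) for a fixed $\alpha$ with prefix $S_\alpha$, identify the unique node $v_\alpha$ (namely $K_\alpha=\iota(v_\alpha)$, the $\prec_\alpha$-minimal clique) at which $\alpha$ should be counted, invoking Lemma~\ref{lemma:flowersAreOrdered}, Lemma~\ref{lemma:BSeveryAMO}, and the definition of $\prec_\alpha$; (iii) show $\alpha$ \emph{is} counted at $v_\alpha$, i.e. the permutation of $\iota(v_\alpha)$ that $\alpha$ induces does not have any $X\in\mathrm{FP}(v_\alpha,\mathcal{T})$ as a prefix — this uses that $v_\alpha$ lies in the \emph{least} flower, so no ancestor separator subset of $S_\alpha$ can be a proper prefix of $\alpha$'s order on $K_\alpha$ without forcing $\alpha$ to also be representable starting in an earlier flower, contradicting minimality; (iv) show $\alpha$ is \emph{not} counted at any other node $v\ne v_\alpha$: if $\iota(v)$ does not contain $S_\alpha$ then no topological ordering of $\alpha$ starts with $\iota(v)$ at all (its prefix would have to be a subset, but then $S_\alpha$ being \emph{the} common prefix is violated), and if $S_\alpha\subseteq\iota(v)$ but $v\ne v_\alpha$, then either $\iota(v)$ is not even a clique that begins a $\tau\in\mathrm{top}(\alpha)$, or $v$ is "above" $v_\alpha$ in the relevant flower, in which case one of the path-separators equals $S_\alpha$ (or a set $\alpha$ does start with), so $\mathrm{FP}(v,\mathcal{T})$ contains a genuine prefix of $\alpha$'s order and kills the term; (v) combine (iii)–(iv) with Lemma~\ref{lemma:independentOrientation}/Proposition~\ref{prop:cliqueid} to see that the surviving product $\prod_{H\in\chordalcomps_G(\iota(v_\alpha))}\hamo(H)$ counts exactly the number of AMOs whose "$K_\alpha$-prefixed representatives" extend to $\alpha$, matching the tally on the left. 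Summing over all $\alpha$ yields the proposition.

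The main obstacle I anticipate is step (iv), specifically the case $S_\alpha\subseteq\iota(v)$ with $v$ strictly above $v_\alpha$ in the clique tree but still in a flower of $\bouquet(S_\alpha)$: one has to argue that the root-path to $v$ necessarily passes through an edge whose separator is exactly $S_\alpha$ (or a set that is a prefix of the order $\alpha$ induces on $\iota(v)$), hence $\mathrm{FP}(v,\mathcal{T})$ contains that set and the $\phi$-term vanishes. This is where Lemma~\ref{lemma:bouquetpart} (the bouquet partitions $\iota^{-1}(S)$) and Lemma~\ref{lemma:connflowers} (each flower is a connected subtree) do the real work: moving from one flower to another within $\iota^{-1}(S_\alpha)$ forces crossing a tree-edge whose intersection is $S_\alpha$. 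A secondary subtlety is making sure that when $S_\alpha$ is itself a maximal clique (the "$S_\alpha\in\cliques(G)$" branch of Lemma~\ref{lemma:minimalSeparator}), the $\prec_\alpha$ apparatus degenerates gracefully and $\alpha$ is counted at the single node $\iota^{-1}(S_\alpha)$ with weight coming from $\phi(\iota(v),\mathrm{FP}(v,\mathcal{T}))$ — here I would note that $\mathrm{FP}$ only ever contains \emph{proper} subsets of $\iota(v)$, so the all-of-$K$ permutation class is never forbidden, which is exactly what is needed.
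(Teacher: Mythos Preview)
Your steps (i)--(v) land on essentially the same strategy as the paper: show that each AMO $\alpha$ is counted at exactly one node of the clique tree, namely the one whose clique is $\prec_\alpha$-minimal. The opening discussion about ``massaging'' Proposition~\ref{proposition:countingFormula} via a telescoping identity over separators never gets used and can be dropped; the direct double-counting argument you actually execute is both cleaner and what the paper does.

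There is, however, a genuine gap in step (ii). You write that the least $S_\alpha$-flower $F_{\min}$ ``contains a clique $K_\alpha$ that starts some $\tau\in\mathrm{top}(\alpha)$; this $K_\alpha$ is exactly the $\prec_\alpha$-minimal clique.'' But $F_{\min}$ may contain \emph{several} maximal cliques that each begin some $\tau\in\mathrm{top}(\alpha)$, and nothing you have said singles one out. The relation $\prec_\alpha$ compares two cliques $K_1,K_2$ only via the bouquet of the separator $K_1\cap K_2$; if both lie in $F_{\min}$ then $K_1\cap K_2=S'\supsetneq S_\alpha$ is a strictly larger separator, and you must descend to $\bouquet(S')$ to order them---which your argument never does. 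The paper closes this gap by an induction on the number $\mu$ of distinct starting cliques: restrict to the orderings that begin in $F_{\min}$, note that their common prefix is a larger separator $S'$ (Lemma~\ref{lemma:minimalSeparator} again), pass to the least $S'$-flower, and recurse until $\mu=1$.

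The same omission resurfaces in step (iv). Your case analysis for ``$v\ne v_\alpha$ with $S_\alpha\subseteq\iota(v)$'' relies on crossing a tree edge whose intersection is $S_\alpha$, i.e., on $v$ and $v_\alpha$ lying in \emph{different} $S_\alpha$-flowers. It says nothing when $\iota(v)$ and $\iota(v_\alpha)$ are both in $F_{\min}$---precisely the situation your ``main obstacle'' paragraph flags, and precisely the situation where the boundary-crossing argument fails because there is no $S_\alpha$-edge between them. What is needed instead is the finer separator $S'=\iota(v)\cap\iota(v_\alpha)$: by the recursive descent above, $v_\alpha$ sits in the least $S'$-flower and $v$ does not, so the root path to $v$ crosses an edge with intersection $S'\in\mathrm{FP}(v,\mathcal{T})$, which is a prefix of $\alpha$'s order on $\iota(v)$. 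Once you add this nested-separator induction, your (iii)--(iv) become the paper's Claims~2 and~3 verbatim.
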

\begin{proof}[Sketch of Proof]
  First, prove that for every AMO~$\alpha$ there is a unique least
  $K\in\cliques(G)$ with respect to~$\prec_{\alpha}$. Let $x$ be the
  node of the clique tree with $\iota(x)=K$, we deduce that $\alpha$
  is counted in $\phi(\iota(x), \mathrm{FP}(x,\mathcal{T}))$, but
  is blocked in all other nodes $y$ by some set in $\mathrm{FP}(y,\mathcal{T})$.
\end{proof}

Algorithm~\ref{alg:cliquepicking} evaluates this formula, utilizing
memoization to avoid recomputations. Traversing the clique tree with a BFS allows for simple
computation of $\mathrm{FP}$.

\begin{algorithm}
  \caption{The Clique-Picking algorithm computes the number of
    acyclic moral orientations of a UCCG $G$.}
  \label{alg:cliquepicking}
  \SetKwInOut{Input}{input}\SetKwInOut{Output}{output}
  \DontPrintSemicolon
  \Input{A UCCG $G = (V,E)$.}
  \Output{$\hamo(G)$.}
  \SetKwFunction{FCount}{count}
  \SetKwFunction{FNewors}{neworients}
  \SetKwFunction{FPop}{pop}
  \SetKwFunction{FAppend}{append}
  \SetKwFunction{FPush}{push}
  \SetKwProg{Fn}{function}{}{end}
  \Fn{\FCount{$G$, $\mathrm{memo}$}}{
    \If{$G \in \mathrm{memo}$}{\KwRet $\mathrm{memo}[G]$}
    $\mathcal{T} = (T,r,\iota) \leftarrow$ a rooted clique tree of $G$ \;
    $\mathrm{sum} \leftarrow 0$ \;
     $Q \leftarrow$ queue with single element $r$ \;
    \While{$Q$ is not empty}{
      $v \leftarrow \FPop(Q)$ \;\label{line:pop}
      $\FPush(Q, \mathrm{children}(v))$ \;
      $\mathrm{prod} \leftarrow 1$ \;
      \ForEach{$H \in \chordalcomps_G(\iota(v))$}{
        $\mathrm{prod} \leftarrow \mathrm{prod} \cdot \FCount(H, \mathrm{memo})$ \; \label{line:mult}
      }
      $\mathrm{sum} \leftarrow \mathrm{sum} +
      \phi(\iota(v),\mathrm{FP}(v,\mathcal{T})) \cdot \mathrm{prod}$
      \; \label{line:weightedsum}
    }
    $\mathrm{memo}[G] = \mathrm{sum}$ \;
    \KwRet $\mathrm{sum}$\;
  }
\end{algorithm}

\begin{theorem}\label{theorem:cliquepicking}
  For an input UCCG $G$, Algorithm~\ref{alg:cliquepicking} returns
  the number of AMOs of $G$.
\end{theorem}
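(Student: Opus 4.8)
The plan is to show that Algorithm~\ref{alg:cliquepicking} is a faithful implementation of the formula in Proposition~\ref{proposition:fpFormula}, and then invoke that proposition together with Lemma~\ref{lemma:independentOrientation} (and the product decomposition $\hamo(G)=\prod_H \hamo(H)$ over UCCGs) to conclude correctness. Concretely, the proof proceeds by well-founded induction on $|V_G|$, the induction being well-founded because every recursive call \FCount{$H$,\,$\mathrm{memo}$} in line~\ref{line:mult} is on a component $H\in\chordalcomps_G(\iota(v))$ which lives inside $G[V\setminus\iota(v)]$; since $\iota(v)$ is a maximal clique of the UCCG $G$ and $G$ is connected with at least one edge, $\iota(v)\neq\emptyset$ actually $|\iota(v)|\geq 2$ in the nontrivial case, so $|V_H|<|V_G|$ strictly. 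The base case is $|V_G|\le 1$ (or, more conveniently, a single-vertex graph), for which the clique tree has one node $r$ with $\iota(r)=V_G$, $\mathrm{FP}(r,\mathcal T)=\emptyset$, the \textbf{foreach} loop is empty so $\mathrm{prod}=1$, and $\phi(V_G,\emptyset)=|V_G|!=1$, giving $\mathrm{sum}=1=\hamo(G)$, as the empty graph / single vertex has exactly one (trivial) AMO.

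For the inductive step, first I would argue that the memoization is harmless: the dictionary $\mathrm{memo}$ is only ever read (line~3) after having been written with the correct value (the penultimate line), so a cache hit returns exactly what a fresh computation would return; hence it suffices to analyze the cache-miss branch. In that branch the algorithm computes a rooted clique tree $\mathcal T=(T,r,\iota)$ — which exists by the stated fact~(i) about chordal graphs — initializes $\mathrm{sum}\leftarrow 0$, and then performs a BFS over $T$ starting from $r$: the queue $Q$ is seeded with $r$, and each popped node $v$ pushes its children, so every node of $T$ is processed exactly once (standard BFS invariant on a tree). For each node $v$, the inner \textbf{foreach} multiplies $\FCount(H,\mathrm{memo})$ over all $H\in\chordalcomps_G(\iota(v))$; by the induction hypothesis each such call returns $\hamo(H)$, so after the loop $\mathrm{prod}=\prod_{H\in\chordalcomps_G(\iota(v))}\hamo(H)$. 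Line~\ref{line:weightedsum} then adds $\phi(\iota(v),\mathrm{FP}(v,\mathcal T))\cdot\mathrm{prod}$ to $\mathrm{sum}$. Summing over all $v\in V_T$, the returned value equals $\sum_{v\in V_T}\phi(\iota(v),\mathrm{FP}(v,\mathcal T))\times\prod_{H\in\chordalcomps_G(\iota(v))}\hamo(H)$, which is precisely the right-hand side of Proposition~\ref{proposition:fpFormula}, hence equals $\hamo(G)$.

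Two small points deserve explicit care. First, one must confirm that the $\phi$ values the algorithm needs are exactly the evaluable ones: by Lemma~\ref{lemma:inputPhi} the set $\mathrm{FP}(v,\mathcal T)$ is a chain $X_1\subsetneq\dots\subsetneq X_\ell$ of subsets of $\iota(v)$, so Lemma~\ref{lemma:efficientPhi} applies and $\phi(\iota(v),\mathrm{FP}(v,\mathcal T))$ is well-defined and computable; the caption's remark that traversing the clique tree with BFS allows for simple computation of $\mathrm{FP}$ is about efficiency, not correctness, so I would not dwell on it here (it belongs to the complexity analysis). Second, I would note that Proposition~\ref{proposition:fpFormula} is stated for an arbitrary rooted clique tree of $G$, so it does not matter which clique tree the algorithm picks in line~3. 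The main obstacle in this proof is essentially zero at the level of Theorem~\ref{theorem:cliquepicking} itself — all the combinatorial heavy lifting (the existence of a unique $\prec_\alpha$-minimal clique, the flower/bouquet machinery, the counting-exactly-once argument) has already been discharged in Proposition~\ref{proposition:fpFormula}; the only genuine content left is the bookkeeping that (a) the BFS visits each clique-tree node exactly once, (b) the recursion terminates and the induction is well-founded because $|V_H|<|V_G|$, and (c) memoization preserves correctness. So the proof is a routine but careful ``the code implements the formula'' argument, and I would keep it short.
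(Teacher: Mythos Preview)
Your proposal is correct and follows essentially the same approach as the paper: both argue that Algorithm~\ref{alg:cliquepicking} evaluates exactly the formula of Proposition~\ref{proposition:fpFormula}, with correctness of the recursive calls handled by induction. The only cosmetic difference is the choice of induction parameter---you induct on $|V_G|$ (noting $|V_H|<|V_G|$ since $\iota(v)\neq\emptyset$), whereas the paper inducts on $|\cliques(G)|$ with complete graphs as the base case; both are valid well-orderings here, and your extra remarks on memoization and BFS bookkeeping are details the paper simply leaves implicit.
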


\begin{example}\label{example:cliquePicking}
  We consider a rooted clique tree~$(T,r,\iota)$ for the
  graph $G$ from Example~\ref{example:separators}. The root is labeled
  with $r$ and the function $\iota$ is visualized in
  \textcolor{ba.blue}{blue}. The edges of the clique tree are
  labeled with the corresponding minimal separators.

  \begin{center}
    \tikzset{
      tree node/.style = {
        draw,
        fill,
        circle,
        inner sep     = 0pt,
        minimum width = 1mm,
        outer sep     = 5pt,
      }
    }
    \tikz{
      \node[inner sep=0pt, minimum width=1mm, circle, label={[label distance=0.2mm]$r$}] at (0,0) {};
      \node[tree node] (r) at (0,  0) {};
      \node[tree node] (a) at (0,-.5) {};
      \node[tree node] (b) at (0, -1) {};
      \draw[edge] (0,0) -- (0, -.5) -- (0,-1);

      \node[anchor=west, color=ba.blue, font=\small] (c1) at (1,  0.33)   {$\{1,2,3\}$};
      \node[anchor=west, color=ba.blue, font=\small] (c2) at (1, -0.5)    {$\{2,3,4,5\}$};
      \node[anchor=west, color=ba.blue, font=\small] (c3) at (1, -1.33)   {$\{2,3,5,6\}$};
      \graph[use existing nodes, edges = {edge, |->, color=ba.blue}] {
        r ->[bend left=10]  c1;
        a -> c2;
        b ->[bend right=10] c3;
      };

      \node[anchor=west, color=ba.gray!65!black, font=\tiny] (s1) at (1, -0.1)   {$\{2,3\}$};
      \node[anchor=west, color=ba.gray!65!black, font=\tiny] (s2) at (1, -0.9)  {$\{2,3,5\}$};
      \node (e1) at (-0.1, -0.25) {};
      \node (e2) at (-0.1, -0.75) {};
      \graph[use existing nodes, edges = {thin, color=ba.gray!65!black}] {
        s1 -- e1;
        s2 -- e2;
      };

      \node[anchor=west, font=\small] at (2.75,  0.33) {$\phi\big(\{1,2,3\},   \emptyset\big)$};
      \node[anchor=west, font=\small] at (2.75, -0.5)  {$\phi\big(\{2,3,4,5\}, \big\{\{2,3\}\big\}\big)$};
      \node[anchor=west, font=\small] at (2.75, -1.33) {$\phi\big(\{2,3,5,6\}, \big\{\{2,3\},\{2,3,5\}\big\}\big)$};
      \node[anchor=west, font=\small] at (7.4,  0.33) {$=6$};
      \node[anchor=west, font=\small] at (7.4, -0.5)  {$=20$};
      \node[anchor=west, font=\small] at (7.4, -1.33) {$=16$};
    }
  \end{center}
  Algorithm~\ref{alg:cliquepicking} traverses the tree $T$ from the root
  $r$ to the bottom and computes the values shown at the right. 
  The only case in which we
  obtain a non-trivial subgraph is for $S=\{1,2,3\}$ (an
  induced path on $\{4,5,6\}$). Therefore:
  \begin{equation}
  \hamo(G)=6\cdot 3 + 20\cdot 1 + 16\cdot 1 = 54.\tag*{\exampleqed}
  \end{equation}
\end{example}

Since clique trees can be computed in linear time~\cite{Blair1993}, an
iteration of the algorithm runs in polynomial time due to
Lemma~\ref{lemma:efficientPhi} and~\ref{lemma:inputPhi}.  We prove in
the next section that Algorithm~\ref{alg:cliquepicking} performs at most
$2\cdot|\cliques(G)|-1$ recursive calls, which implies overall
polynomial run time.

%
%
\begin{figure*}[!htbp]
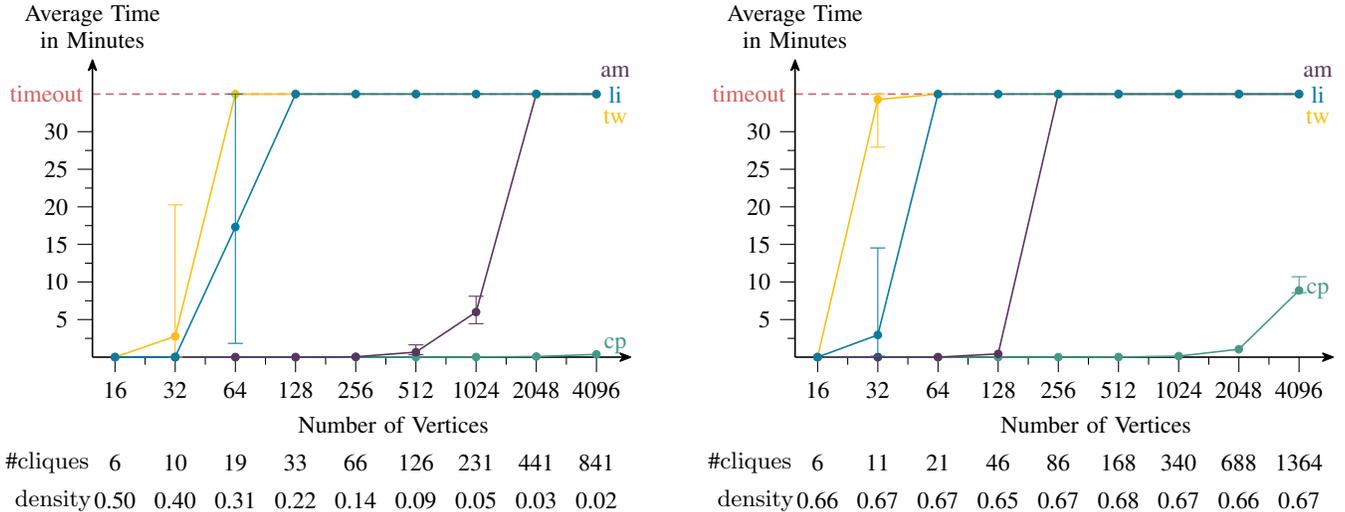

  \input{img/subtree-logn-wo-headline.tikz}\quad\input{img/interval-wo-headline.tikz}
  \caption{Experimental results for the solvers
    \textcolor{ba.pine}{Clique-Picking (cp)},
    \textcolor{ba.violet}{AnonMAO (am)}, 
    \textcolor{ba.yellow}{TreeMAO (tw)}, and
    \textcolor{ba.blue}{LazyIter (li)} on random chordal graphs with
    $n = 16, 32, \dots, 4096$ vertices. For the left plot, we used
    graphs generated with the subtree intersection method and density
    parameter $k = \log{n}$; the right plot contains the results for
    random interval graphs. At the bottom, we present the number of
    maximal cliques as well as the graph density $|E|/\binom{|V|}{2}$.
  }
  \label{fig:exp}
\end{figure*}

\section{The Complexity of \#\kern-0.5ptAMO}
\label{sec:compl}
We analyze the run time of the Clique-Picking algorithm by bounding
the number of connected chordal subgraphs that we encounter. The
following proposition shows that this number can be bounded by
$\mathcal{O}(|\cliques(G)|)$. Recall that we have $|\cliques(G)| \leq |V|$ in
chordal graphs and, thus, we only have to handle a linear number of
recursive calls.

\begin{proposition} \label{prop:subpbound}
  Let $G$ be a UCCG. The number of distinct UCCGs explored by \texttt{count} is bounded by $2|\cliques(G)|-1$. 
\end{proposition}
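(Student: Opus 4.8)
The plan is to bound the size of the set $\mathcal{U}$ of pairwise-distinct UCCGs on which \texttt{count} is ever invoked; by memoisation this is exactly the number of recursive calls. A call \texttt{count}$(G)$ recurses directly on the graphs in $\mathcal{H}(G):=\bigcup_{v\in V_T}\chordalcomps_G(\iota(v))$ (viewed as a set of distinct graphs), where $(T,r,\iota)$ is the clique tree it computes. The base case is easy: if $G$ is a single clique then $\chordalcomps_G(\iota(r))=\emptyset$, so $\mathcal{U}=\{G\}$ and $|\mathcal{U}|=1=2|\cliques(G)|-1$. The subtle point, and the reason the argument cannot be a routine induction, is that a naive level-by-level induction with potential $2|\cliques(\cdot)|-1$ fails: the subproblems $G[C]$ spawned at different nodes $v$, and at different recursion depths, overlap heavily, and it is precisely this overlap that keeps the count linear. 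Already for a $G$ on three maximal cliques one can have the potentials of the children of \texttt{count}$(G)$ sum to strictly more than $2(|\cliques(G)|-1)$ (e.g.\ when one $\chordalcomps_G(\iota(v))$ contains an induced path, which recurses further), yet memoisation collapses the whole recursion to only $2|\cliques(G)|-1$ distinct graphs. So the argument must be global.

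The global argument fixes the clique tree $\mathcal{T}=(T,r,\iota)$ of $G$ and exhibits an injection of $\mathcal{U}$ into $V_T\sqcup E_T$, which has size $|\cliques(G)|+(|\cliques(G)|-1)=2|\cliques(G)|-1$. To prepare it I need a structural lemma describing $\chordalcomps_G(K)$ for a maximal clique $K=\iota(v)$ in terms of $\mathcal{T}$: each component $C\in\chordalcomps_G(K)$ induces a chordal graph $G[C]$ whose maximal cliques are the inclusion-maximal sets among $\{\iota(x)\cap C : x\in V_T\}$, and the contributing nodes $x$ form a connected subtree $\mathrm{supp}(C)$ of $T-v$; hence the clique tree of $G[C]$ is a labelled restriction of $\mathcal{T}$ to that subtree (this uses the first-Meek-rule description of $\chordalcomps$ from the proof sketch of Theorem~\ref{thm:lintimesubp} to see how a ``cut piece'' of $\mathcal{T}$ is split into distance layers). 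Iterating, every $H=G[C]\in\mathcal{U}$ carries a clique tree that embeds into $\mathcal{T}$ with $\mathrm{supp}(C)$ a connected subtree of $T$, and $C$ comes equipped with canonical data: the minimal separator that was ``peeled off'' when $G[C]$ was split off (an edge label of $\mathcal{T}$ on the boundary of $\mathrm{supp}(C)$), together with the direction in $\mathcal{T}$ from $\mathrm{supp}(C)$ towards the node whose clique was removed. I would prove these facts transfer through the recursion, so that all of $\mathcal{U}$ is governed by the single tree $\mathcal{T}$.

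The injection then sends $G\mapsto r$, and sends each $H=G[C]\neq G$ either to a node of $T\setminus\{r\}$ or to an edge of $T$ selected from the data above; the natural choice uses the $S$-flower order $\prec_T$ from the analysis of Proposition~\ref{proposition:fpFormula}, charging $H$ to the $\prec_T$-minimal flower (equivalently, tree edge) through which $\mathrm{supp}(C)$ connects to the removed clique. The heart of the proof — and the step I expect to be the main obstacle — is injectivity: one $H$ may be produced by removing several different maximal cliques $\iota(v)$, and a single cut piece of $\mathcal{T}$ may be broken by the Meek-rule orientation into several distance layers, each a distinct subproblem, so one must argue that, summed over all of $\mathcal{U}$ and all recursion depths, these demands never collide and never exceed $|V_T|+|E_T|$. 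Since the demand is not additive over the recursion tree, this has to be verified directly on $\mathcal{T}$, and I expect the bulk of the technical work to be a case analysis, driven by the bouquet structure and $\prec_T$, showing that the node or edge assigned to each $C$ is uniquely determined and never reused. The bound $2|\cliques(G)|-1$ follows, and together with the linear-time clique-tree construction and Lemmas~\ref{lemma:efficientPhi} and~\ref{lemma:inputPhi} it gives the polynomial running time of Clique-Picking.
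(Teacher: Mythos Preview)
Your plan is in the right spirit---controlling $\mathcal{U}$ via the single clique tree $\mathcal{T}$ of $G$---but it misses the key simplifying lemma that makes the paper's argument short, and without it your proposed injection remains vague at exactly the point you yourself flag as the ``main obstacle.''

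The paper does not build an injection into $V_T\sqcup E_T$. Instead it proves directly (Lemma~\ref{lemma:flspbijection}, supported by Lemma~\ref{lemma:spminsep}) that \emph{every} UCCG $H\neq G$ on which \texttt{count} is ever called satisfies $V_H=F\setminus S$ for some minimal separator $S\in\separators(G)$ and some $S$-flower $F$ of the original graph $G$. This holds across all recursion depths: the accumulated ``peeled-off'' neighbours $S_H$ form a minimal separator of $G$, and $H$ is precisely one flower of that separator with $S_H$ removed. Once you have this, the count is immediate: the number of explored UCCGs is at most one (for $G$) plus the total number of flowers in $G$, and a two-line edge-counting argument on the clique tree (a separator with $k$ flowers occupies at least $k-1$ tree edges, so the ratio $k/(k-1)$ is maximised at $k=2$) gives at most $2(|\cliques(G)|-1)$ flowers.

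Your approach instead tries to track, for each $H$, a supporting subtree and a boundary edge, and then argue injectivity of an assignment into $V_T\sqcup E_T$. The difficulty you anticipate---that the same $H$ arises from removing different cliques, and that a single piece of $\mathcal{T}$ may split into several distance layers---disappears once you know $H$ \emph{is} a flower (minus its separator): distinct $H$'s are distinct flowers, full stop, and no injectivity case analysis is needed. So the gap in your plan is not fatal, but the route is harder than necessary; the clean step you are missing is to prove that the cumulative separator $S_H$ is a minimal separator of $G$ and that $V_H$ is exactly one connected piece of $G[V\setminus S_H]$ among the cliques containing $S_H$.
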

\begin{proof}[Sketch of Proof]
  We observe that there is a bijection between $S$-flowers of $G$ and
  distinct UCCGs. The linear bound
  follows, as a separator $S$, that has a bouquet of size $k$, is associated
  with $k-1$ edges of the clique tree. Therefore, we have at most
  $|\cliques(G)|-1 - (k-1)$ further separators and the maximum number of
  flowers is obtained if the quotient $k / (k-1)$ is
  maximized~--~which is the case for $k=2$.
\end{proof}

We are now able to bound the run time of Clique-Picking:
\begin{theorem}
  \label{thm:runtime}
  The Clique-Picking algorithm runs in time
  $\mathcal{O}\big(\,|\cliques(G)|^2 \cdot (|V| + |E|)\,\big)$.
\end{theorem}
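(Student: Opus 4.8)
The plan is to write the total running time as (the number of \emph{value-producing} invocations of \texttt{count}) $\times$ (the cost of a single such invocation). Because of the memoization in Algorithm~\ref{alg:cliquepicking}, \texttt{count} does substantial work only the first time it is called on a given UCCG; every later call on the same graph is a dictionary lookup. Proposition~\ref{prop:subpbound} bounds the number of distinct UCCGs encountered, hence the number of value-producing calls, by $2|\cliques(G)|-1 = O(|\cliques(G)|)$. Throughout I work in the unit-cost RAM model, so a dictionary operation and an arithmetic operation on the (possibly superpolynomially large) counts each cost $O(1)$; otherwise every estimate below is multiplied by the $O(|V|\log|V|)$ bit-length of the numbers involved.

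Fix one value-producing call \texttt{count}$(H)$ on a UCCG $H=(V_H,E_H)$, and let $\mathcal{T}=(T,r,\iota)$ be its rooted clique tree, built in time $O(|V_H|+|E_H|)$~\cite{Blair1993}. The BFS over $T$ visits $|\cliques(H)|$ nodes. At a node $v$ we (i)~compute $\chordalcomps_H(\iota(v))$, which by Theorem~\ref{thm:lintimesubp} is a single run of Algorithm~\ref{alg:lbfs} costing $O(|V_H|+|E_H|)$; (ii)~evaluate $\phi(\iota(v),\mathrm{FP}(v,\mathcal{T}))$; and (iii)~issue $|\chordalcomps_H(\iota(v))|$ recursive calls and perform that many multiplications. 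The members of $\chordalcomps_H(\iota(v))$ are pairwise vertex-disjoint subsets of $V_H$, so there are at most $|V_H|$ of them at each node; hence, summed over the whole tree, parts (i)~and~(iii) cost $O\big(|\cliques(H)|\cdot(|V_H|+|E_H|)\big)$ (the recursive calls issued in (iii) are memo lookups, charged separately).

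The delicate point is part (ii): applying Lemma~\ref{lemma:efficientPhi} naively at each node uses $\Theta(\ell^2)$ arithmetic operations with $\ell=|\mathrm{FP}(v,\mathcal{T})|$, which sums to $\Theta(|\cliques(H)|^3)$ and overshoots the target. The fix is to share subproblems across the tree. Using the running-intersection property of $\mathcal{T}$, one checks that for a child $v$ of $u$ the set $\iota(u)\cap\iota(v)$ contains every member of $\mathrm{FP}(u,\mathcal{T})$ that lies in $\iota(v)$, and that these members are exactly a prefix $X_1\subsetneq\dots\subsetneq X_k$ of the chain ordering of $\mathrm{FP}(u,\mathcal{T})$ guaranteed by Lemma~\ref{lemma:inputPhi}; consequently $\mathrm{FP}(v,\mathcal{T})$ has chain ordering $X_1\subsetneq\dots\subsetneq X_k\subsetneq\iota(u)\cap\iota(v)$. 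Processing the clique tree top-down and memoizing every intermediate value $\phi(X_i,\{X_1,\dots,X_{i-1}\})$ produced by Lemma~\ref{lemma:efficientPhi}, the only genuinely new subproblem at $v$ is $\phi\big(\iota(u)\cap\iota(v),\{X_1,\dots,X_k\}\big)$, and it together with the final value $\phi(\iota(v),\mathrm{FP}(v,\mathcal{T}))$ costs $O(k)$ further operations; so all $\phi$-evaluations inside \texttt{count}$(H)$ together cost $O(|\cliques(H)|^2)$. Since $|\cliques(H)|\le|V_H|$ in chordal graphs, this is absorbed into $O\big(|\cliques(H)|\cdot(|V_H|+|E_H|)\big)$, which is therefore the total cost of one value-producing call (routine bookkeeping, such as precomputing the factorials $0!,\dots,|V_H|!$ and the sizes $|S\setminus X_i|$, is likewise dominated).

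It remains to assemble the estimate. Every UCCG $H$ appearing in the recursion is obtained from $G$ only by deleting vertices and orienting edges, so $|V_H|\le|V|$ and $|E_H|\le|E|$; moreover $|\cliques(H)|\le|\cliques(G)|$, which follows from the correspondence between the UCCGs of the recursion and the $S$-flowers of $G$ used in the proof of Proposition~\ref{prop:subpbound} (equivalently, the maximal cliques of each $H$ inject into $\cliques(G)$). Multiplying the $O(|\cliques(G)|)$ value-producing calls, each of cost $O\big(|\cliques(G)|\cdot(|V|+|E|)\big)$, and adding the $O(|\cliques(G)|^2\cdot|V|)$ charged memo lookups, yields the claimed $O\big(|\cliques(G)|^2\cdot(|V|+|E|)\big)$. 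I expect the crux to be the monotone-chain structure of $\mathrm{FP}$ along root-to-leaf paths in part (ii): without it the inclusion--exclusion of Lemma~\ref{lemma:efficientPhi}, which is efficient only relative to a fixed chain, cannot be amortized and the bound degrades by a factor of $|\cliques(G)|$; a secondary, more routine point is pinning down $|\cliques(H)|\le|\cliques(G)|$ for the recursive subgraphs through the flower/clique-tree structure.
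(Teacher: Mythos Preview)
Your argument is correct and follows the same high-level decomposition as the paper: bound the number of distinct UCCGs by $O(|\cliques(G)|)$ via Proposition~\ref{prop:subpbound}, then bound the work per UCCG by $O(|\cliques(G)|\cdot(|V|+|E|))$. The one place where you diverge from the paper is the analysis of the $\phi$-evaluations, and there you work much harder than necessary.

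Your claim that ``applying Lemma~\ref{lemma:efficientPhi} naively at each node \dots\ sums to $\Theta(|\cliques(H)|^3)$ and overshoots the target'' is not actually true, and the paper's proof exploits this. Because $\mathrm{FP}(v,\mathcal{T})$ is, by Lemma~\ref{lemma:inputPhi}, a \emph{strict} chain $X_1\subsetneq\cdots\subsetneq X_\ell$ of \emph{proper} subsets of the maximal clique $\iota(v)$, one has $\ell<|\iota(v)|$. Hence the straightforward evaluation of Lemma~\ref{lemma:efficientPhi} at node $v$ already costs only $O(|\iota(v)|^2)\le O(|E_H|)$, since $\iota(v)$ is a clique of $H$. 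Summed over all $|\cliques(H)|$ nodes this gives $O(|\cliques(H)|\cdot|E_H|)$ directly, with no need for your cross-node memoization of $\phi$-subproblems or the running-intersection argument about how $\mathrm{FP}$ grows along root-to-leaf paths. Your amortization is valid and yields the same bound, but the paper gets there in one line; what you identified as the ``crux'' dissolves once you notice that the chain length is bounded by the clique size rather than by the tree depth.
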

\begin{proof}[Sketch of Proof]
  The algorithm explores $\mathcal{O}(|\cliques(G)|)$ distinct UCCGs
  by Proposition~\ref{prop:subpbound}. For each UCCG we compute a
  clique tree, and for all nodes $v\in V_T$ the set $\cC_G(\iota(v))$
  and the value $\phi(\iota(v),\mathrm{FP}(v, \mathcal{T}))$. Both can
  be done in time
  $\mathcal{O}(|V| + |E|)$ by Theorem~\ref{thm:lintimesubp} and by using the formula from
  Lemma~\ref{lemma:efficientPhi}. 
\end{proof}  

As one would expect, the Clique-Picking algorithm can~--~with slight
modifications~--~also be used to sample Markov
equivalent DAGs uniformly at random. Hence, this problem can be solved
in polynomial time, too.


\begin{theorem}\label{theorem:samplingtime}
  There is an algorithm that, given a connected chordal graph $G$, uniformly
  samples AMOs of $G$ in time $\mathcal{O}(|V|+|E|)$ after an initial
  $\mathcal{O}(\cliques(G)^2\cdot |V| \cdot |E|)$ setup.  
\end{theorem}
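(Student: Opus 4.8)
The strategy is to turn the counting recursion of Proposition~\ref{proposition:fpFormula} into a sampling procedure in the standard way: first run Clique-Picking to memoize, for every UCCG $H$ that the algorithm explores, the value $\hamo(H)$ together with, for each node $v$ of its rooted clique tree, the ``weight'' $w(v) = \phi(\iota(v),\mathrm{FP}(v,\mathcal{T}))\cdot\prod_{H\in\chordalcomps_G(\iota(v))}\hamo(H)$. By Proposition~\ref{proposition:fpFormula} these weights sum to $\hamo(G)$, and each AMO of $G$ is accounted for in exactly one term (the unique least clique with respect to $\prec_\alpha$, as established in the proof sketch of that proposition). Sampling then proceeds top-down: pick a clique-tree node $v$ with probability $w(v)/\hamo(G)$; sample a permutation $\pi$ of $\iota(v)$ uniformly among the $\phi(\iota(v),\mathrm{FP}(v,\mathcal{T}))$ admissible ones; place $\pi(\iota(v))$ at the front of the topological ordering; and recurse independently on each $H\in\chordalcomps_G(\iota(v))$, concatenating the returned orderings in any order. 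Finally, reverse the resulting PEO to obtain the AMO (Lemma~\ref{lemma:peomao}). Correctness is immediate from the fact that the recursion exactly mirrors the counting formula and that each AMO is produced with probability (weight of its unique representative node)/(total), i.e. $1/\hamo(G)$.

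\medskip
The setup cost is exactly the cost of running Clique-Picking once and additionally storing the clique trees and the per-node weights, which is $\mathcal{O}(\cliques(G)^2\cdot|V|\cdot|E|)$ by Theorem~\ref{thm:runtime}; the only extra bookkeeping is linear in the output of that run, so the asymptotics are unchanged. For the per-sample bound, the key point is that the recursion tree visited during one sample has total size $\mathcal{O}(|V|+|E|)$: each recursive call on $H\in\chordalcomps_G(\iota(v))$ operates on a vertex-disjoint subgraph of $V\setminus\iota(v)$ (the $\chordalcomps_G(\iota(v))$ are connected components of $G^{\iota(v)}[V\setminus\iota(v)]$), so across one level the subproblems are disjoint, and the total vertex count strictly decreases down each branch. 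Hence the work per sample is dominated by (a) selecting a clique-tree node from a precomputed distribution, (b) sampling one admissible permutation of a clique, and (c) reading off the precomputed list $\chordalcomps_G(\iota(v))$ — all of which we need to implement in time linear in the sizes of the objects involved.

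\medskip
The main obstacle is step (b): sampling a permutation of $S=\iota(v)$ uniformly at random among those that avoid all prefixes in $\mathrm{FP}(v,\mathcal{T}) = \{X_1\subsetneq\dots\subsetneq X_\ell\}$, in time $\mathcal{O}(|S|+\ell)$ rather than by rejection. The idea is to use the structure behind Lemma~\ref{lemma:efficientPhi}: a permutation is admissible iff, reading it left to right, the first position at which it ``leaves'' the nested chain $X_1\subset\dots\subset X_\ell$ does so without having just completed some $X_i$ as a prefix. Concretely, one decides the outcome by choosing, with the right probabilities derived from the $\phi(X_i,\{X_1,\dots,X_{i-1}\})$ and factorials, which ``level'' of the chain the permutation first escapes, then fills in the already-determined initial block as a uniform admissible permutation of that smaller instance (recursively, but the recursion has depth $\le\ell$ and total work $\mathcal{O}(|S|+\ell)$ if the partial factorials and $\phi$-values are precomputed), and fills the remaining $|S\setminus X_i|$ positions with a uniform random permutation. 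One must check that the probabilities telescope correctly so that every admissible permutation is output with probability $1/\phi(S,\mathrm{FP}(v,\mathcal{T}))$; this is a direct combinatorial verification using exactly the identity in Lemma~\ref{lemma:efficientPhi}. Everything else — the top-level node choice via a precomputed alias/prefix-sum table, the disjointness argument for the $\mathcal{O}(|V|+|E|)$ bound, and the final reversal — is routine.
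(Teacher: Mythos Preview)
Your high-level plan is exactly the paper's: sample a clique-tree node proportional to its weight in Proposition~\ref{proposition:fpFormula}, sample an admissible permutation of that clique, then recurse on the disjoint $\chordalcomps_G(\iota(v))$. The correctness argument via the unique least clique and the disjointness argument for the per-sample bound are also the same. Two points, however, need fixing.

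\textbf{Setup cost.} Theorem~\ref{thm:runtime} gives $\mathcal{O}(|\cliques(G)|^2\cdot(|V|+|E|))$ for Clique-Picking, not $\mathcal{O}(|\cliques(G)|^2\cdot|V|\cdot|E|)$. The extra $|V|$ factor in the theorem's setup bound is \emph{not} ``linear bookkeeping on top of Clique-Picking''; it comes from precomputing, for every clique encountered and every way of removing an initial segment from its forbidden-prefix chain, the $\phi$-values that the permutation sampler will need. The paper handles this via an alternative form of Lemma~\ref{lemma:efficientPhi} and a dynamic program over pairs (current clique size, index of remaining forbidden prefixes), costing $\mathcal{O}(|V|\cdot|E|)$ per subgraph. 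You need this step; without it your sampler cannot read off its weights in $O(1)$.

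\textbf{The permutation sampler.} Your ``escape-level'' description does not obviously produce admissible permutations. As written --- pick a level $i$, fill an initial block with an admissible permutation of the smaller instance, then fill the remaining $|S\setminus X_i|$ positions uniformly --- the output \emph{has} $X_i$ as a prefix, which is exactly what must be avoided (this is the ``bad'' term $|S\setminus X_i|!\cdot\phi(X_i,\{X_1,\dots,X_{i-1}\})$ being \emph{subtracted} in Lemma~\ref{lemma:efficientPhi}, not a summand you can sample from). The paper's sampler instead proceeds vertex-by-vertex: for each candidate first vertex $v$, the number of admissible completions is a $\phi$-value depending only on the smallest $i$ with $v\in X_i$; sample $v$ proportional to this weight and recurse on $K\setminus\{v\}$ with the truncated chain. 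This costs $\mathcal{O}(|K|^2)$ per clique, and since the cliques chosen along one sampling path are vertex-disjoint (hence their internal edges are disjoint), the total is $\mathcal{O}(|V|+|E|)$.
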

\begin{proof}[Sketch of Proof]
  We sample the AMOs recursively: Draw a clique $K$
  proportional to the number of AMOs counted at this clique; 
  uniformly draw a permutation of $K$ that does not start with a
  forbidden prefix; recurse on subgraphs.
\end{proof}

We summarize the findings of this section:\footnote{Additionally, we remark that the size of interventional Markov equivalence classes
  can also be computed in polynomial time. 
}

\begin{theorem}\label{theorem:main}
  The problems $\hamo$ and uniform sampling from a Markov
  equivalence class are in $\P$.
\end{theorem}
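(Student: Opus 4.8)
The plan is to assemble the machinery of Sections~\ref{sec:lbfsmaos}--\ref{sec:compl} and reduce both tasks to the corresponding problems on a single UCCG. First I would recall that, given a CPDAG $G$, one can extract its undirected connected (chordal) components~--~the UCCGs~--~in linear time, and that by the product formula stated in Section~\ref{sec:preliminaries} we have $\hamo(G)=\prod_{H}\hamo(H)$, the product ranging over the UCCGs $H$ of $G$; hence it suffices to compute $\hamo(H)$ for each $H$ and multiply. For a single UCCG $H$, Theorem~\ref{theorem:cliquepicking} guarantees that Algorithm~\ref{alg:cliquepicking} returns $\hamo(H)$, and Theorem~\ref{thm:runtime} bounds its running time by $\mathcal{O}(|\cliques(H)|^2\cdot(|V_H|+|E_H|))$, which is polynomial because $|\cliques(H)|\le|V_H|$ for chordal graphs. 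Summing over the (at most $|V_G|$) components, the value $\hamo(G)$ is computed in polynomial time, establishing the first half of the theorem.

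For uniform sampling I would use the same decomposition: an element of the MEC is obtained by keeping the already-directed part of the CPDAG, picking an AMO of each UCCG $H$ independently, and taking the union of these orientations; since the choices for different UCCGs are independent, sampling each UCCG's AMO uniformly and independently yields a uniformly random DAG from the class. It thus suffices to sample a uniform AMO of a single connected chordal graph, which is exactly the content of Theorem~\ref{theorem:samplingtime}: after an $\mathcal{O}(|\cliques(H)|^2\cdot|V_H|\cdot|E_H|)$ setup~--~essentially one run of Clique-Picking that records the partial counts at every clique-tree node and every recursive subgraph~--~each sample costs only $\mathcal{O}(|V_H|+|E_H|)$. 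Running the setup once per component gives a polynomial-time uniform sampler overall.

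The conceptual effort has already been spent in the earlier sections; what remains here is only to verify that these reductions preserve correctness and polynomial running time, which is routine. The real substance behind the theorem~--~and the step I would expect to be the main obstacle if it were not already in hand~--~is the correctness of the Clique-Picking recursion, i.e.\ Proposition~\ref{proposition:fpFormula}: that the flower/forbidden-prefix construction counts each AMO \emph{exactly} once, combined with Proposition~\ref{prop:subpbound}, which bounds the number of distinct UCCGs (hence of memoized recursive calls) by $2|\cliques(G)|-1$ so that the recursion stays polynomial. For the sampling claim the only extra point to check is that the recursive sampling probabilities match the counting recursion term by term, so that the probability of outputting a fixed AMO telescopes down to $1/\hamo(H)$.
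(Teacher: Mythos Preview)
Your outline matches the paper's proof almost exactly: reduce $\hamo$ on a CPDAG to its UCCGs via the product formula, invoke Theorems~\ref{theorem:cliquepicking} and~\ref{thm:runtime} for counting, and Theorem~\ref{theorem:samplingtime} for sampling. The one point the paper adds that you omit is the bit-complexity check: Theorem~\ref{thm:runtime} only bounds the number of \emph{arithmetic operations}, but the values being manipulated can be as large as $n!$, so to conclude membership in $\P$ one must observe that $n!\le 2^{n\log n}$ fits in polynomially many bits and hence each arithmetic step is itself polynomial-time. This is routine, but it is the only non-citation content in the paper's proof, and your write-up should include it.
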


The following theorem shows that Theorem~\ref{theorem:main} is tight
in the sense that counting Markov equivalent
DAGs that encode additional background knowledge (e.\,g., that are represented
as so-called \emph{PDAGs} or \emph{MPDAGs}) is not in $\P$ under standard complexity-theoretic
assumptions.

\begin{theorem}
  The problem of counting the number of AMOs is
  \sharpP-complete for PDAGs and MPDAGs. 
\end{theorem}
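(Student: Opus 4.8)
The plan is to prove the two inclusions separately: membership in $\sharpP$ by a direct certificate argument, and $\sharpP$-hardness by a parsimonious reduction from counting the linear extensions of a finite partial order, which is $\sharpP$-complete by the classical theorem of Brightwell and Winkler. Since every MPDAG is in particular a PDAG, it suffices to prove membership for PDAGs and hardness already for MPDAGs.

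For membership, note that an AMO of a partially directed graph $G$ is witnessed by an orientation of the undirected edges, an object of polynomial size, and that deciding whether a given orientation is acyclic and introduces no new v-structure takes polynomial time. Hence a nondeterministic machine that guesses an orientation and accepts iff it is an AMO has exactly $\hamo(G)$ accepting computations, so $\hamo$ restricted to PDAGs (and to MPDAGs) lies in $\sharpP$; recognising that the input is a valid PDAG/MPDAG, if required, is also polynomial since consistency and closure under Meek's rules can be checked in polynomial time.

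For hardness, given a partial order $P=(V,\le)$ presented by its full comparability relation, I would build the partially directed graph $G_P$ on vertex set $V$ whose skeleton is the complete graph $K_{|V|}$, orienting $u\to v$ for every strictly comparable pair $u<v$ and leaving every incomparable pair undirected. Three claims then have to be verified. (i)~$G_P$ is consistent: any linear extension of $P$ is a DAG extending $G_P$. (ii)~$G_P$ is an MPDAG: since the skeleton is complete, no two vertices are nonadjacent, so Meek's rules R1, R3, R4 — each of which requires a nonadjacent pair in its premise — can never fire, while R2 cannot fire because the directed part of $G_P$ is precisely the transitive relation~$<$. (iii)~$\hamo(G_P)$ equals the number of linear extensions of $P$: again using completeness of the skeleton, no orientation of $G_P$ can contain a v-structure, so every acyclic orientation of $G_P$ is an AMO; an acyclic orientation of a complete graph is a transitive tournament, i.e.\ a total order, and it extends $G_P$ iff it is a linear extension of $P$, giving a bijection. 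As the construction is polynomial and count-preserving, $\hamo$ is $\sharpP$-hard on MPDAGs, hence also on PDAGs, and combining with membership yields $\sharpP$-completeness for both classes.

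The step I expect to require the most care is claim~(ii), the verification that $G_P$ is genuinely an MPDAG — that is, simultaneously consistent and closed under all four Meek rules; the completeness of the skeleton makes each subcase short, but they should be written out explicitly. A secondary point worth a remark is that the notion of AMO of a PDAG used in the theorem statement is exactly the one counting the DAGs in the corresponding restricted Markov equivalence class, which for MPDAGs follows from Meek's characterisation, so that the reduction indeed speaks about the intended problem.
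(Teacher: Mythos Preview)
Your proposal is correct and follows essentially the same approach as the paper: both reduce from Brightwell--Winkler (counting linear extensions / topological orderings) via the complete-skeleton trick, so that v-structures cannot arise and AMOs correspond bijectively to linear extensions. The only minor difference is that the paper constructs a PDAG from an arbitrary DAG (keeping just the DAG's edges directed) and then appeals to the polynomial-time Meek closure to pass to MPDAGs, whereas you start from the full comparability relation and verify directly that the resulting graph is already Meek-closed; your route is slightly more explicit for the MPDAG case but otherwise identical in substance.
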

\section{Experimental Evaluation of Clique-Picking}
\label{sec:exp}
We evaluate the practical performance of the
Clique-Picking algorithm by comparing it to three state-of-the-art
algorithms for $\hamo$. \emph{AnonMAO}~\cite{Ganian2020} is the best root-picking
method; \emph{TreeMAO}~\cite{Talvitie2019} utilizes dynamic
programming on the clique tree; and \emph{LazyIter}~\cite{Teshnizi20}
combines techniques from intervention design with dynamic programming.

Figure~\ref{fig:exp} shows the run time of the four algorithms on
random chordal graphs~--~details of the random graph generation and
further experiments can be found in the supplementary
material\footnote{The source code and supplementary material are
  available at https://github.com/mwien/CliquePicking.}. We chose the
random subtree intersection method (left
plot in Fig.~\ref{fig:exp}) as it generates a broad range of chordal
graphs~\cite{SekerHET17}; and we complemented these with random interval graphs (right
plot) as \emph{AnonMAO} runs provably in polynomial
time on this subclass of chordal graphs~\cite{Ganian2020}.

The Clique-Picking algorithm outperforms its competitors in both
settings. For the subtree intersection graphs, it solves all instances
in less than a minute, while the other solvers are not able to solve
instances with more than 1024 vertices. The large instances of the interval graphs
are more challenging, as they are denser and have more maximal
cliques. However, Clique-Picking is
still able to solve all instances, while the best competitor,
\emph{AnonMAO}, can not handle graphs with 256 or more vertices. 

\section{Conclusion}
We presented the first polynomial-time algorithms for counting and
sampling Markov equivalent DAGs. Our novel Clique-Picking approach
utilizes the clique tree without applying cumbersome dynamic
programming on it. As a result, the algorithm is not only of theoretical but also of high
practical value, being the fastest algorithm by a large margin. 

\section*{Acknowledgements}
This work was supported by the Deutsche Forschungsgemeinschaft (DFG)
grant LI634/4-2.

The authors thank Paula Arnold for her help in setting up the experiments.
\bibliography{countingmec}
\clearpage


\begin{strip}
  \centering
  \textbf{\huge Appendix}
\end{strip}

\part{Detailed Proofs}
We present the detailed proofs that are missing within the main
paper. This part of the appendix is structured as the main paper,
i.\,e., for every section of the paper (that contains lemmas or
theorems) there is a section here that contains the corresponding
detailed proofs. For the reader's convenience, we repeated the
statements of the lemmas and theorems and, if appropriate, recall some
central definitions. Some of the proofs require additional auxiliary lemmas
that did not appear within the main text. These new lemmas are
marked with a~$\star$.

\section*{Proofs of Section 3:\\ Lexicographic BFS and AMOs}
\begin{claim lemma}
   A topological ordering $\tau$ of the vertices of a UCCG $G$ represents an AMO if, and only if, it is
   the reverse of a perfect elimination ordering.
\end{claim lemma}

\begin{proof}
   For the first direction, assume $\tau$ is a topological ordering representing an
   AMO. By definition of AMOs, there can not be a v-structure and, thus,
   if two vertices $x,y \in N(u)$ precede $u$
   in $\tau$, they need to be neighbors. This implies that the neighbors
   of $u$ preceding $u$ in $\tau$ form a clique. Thus, the reverse
   of $\tau$ is a perfect elimination ordering.

   For the second direction, assume $\rho$ is a perfect elimination ordering and orient the edges
   according to the topological ordering that is the reverse of
   $\rho$. Clearly, the orientation is acyclic. Moreover, there can be
   no v-structure, as two vertices $x,y$ preceding $u$ in the reverse
   of $\rho$ are neighbors. Thus, the reverse of $\rho$ represents an AMO.
\end{proof}

\begin{claim corollary}
   Every LBFS ordering $\tau$ of a UCCG $G$ represents an AMO.
\end{claim corollary}

\begin{proof}
  Follows immediately from Lemma~\ref{lemma:peomao} and the fact
  that an LBFS always outputs a PEO when performed on a chordal graph~\cite{Rose1976}.
\end{proof}

In order to prove the next lemmas of the main paper, we require
further auxiliary lemmas and definitions. A \emph{state}~$\mathcal{X}$
of the LBFS algorithm is a tuple
$(V(\mathcal{X}), \mathcal{S}(\mathcal{X}))$, where $V(\mathcal{X})$
is the sequence of already visited vertices at some point of the algorithm
and $\mathcal{S}(\mathcal{X})$ the current sequence of sets after the
last vertex in $V(\mathcal{X})$ has been processed.  From now
on, the term \emph{preceding neighbors} of $v$ at state $\mathcal{X}$
(denoted by $P_{\mathcal{X}}(v)$) describes the set
$N(v) \cap V(\mathcal{X})$, i.\,e., all neighbors of $v$ which were
visited before $v$ at state $\mathcal{X}$ of the LBFS.

\setcounter{new lemma}{11}

\begin{new lemma}
  \label{lemma:samepred}
  In a state $\mathcal{X}$ of the LBFS algorithm performed on a
  chordal graph $G$, two unvisited vertices $u$ and $v$ are in the same set if, and only if, the
  preceding neighbors of $u$ and $v$ are identical, i.\,e.,
  $P_{\mathcal{X}}(u) = P_{\mathcal{X}}(v)$. These preceding neighbors form a clique.
\end{new lemma}

\begin{proof}
  The second statement follows from the fact that the LBFS algorithm produces
  a valid PEO for chordal graphs.  Thus, when a vertex $w$ is visited, all preceding neighbors form
  a clique. This also holds for subsets of these neighbors and, hence,
  for every state $\mathcal{X}$.

  Let us now prove the first statement. First, observe that $P_{\mathcal{X}}(u) =
  P_{\mathcal{X}}(v)$ implies that $u$ and $v$ are in the same set. For each
  visited vertex, the sets are partitioned as described in
  line~\ref{line:partrefine} of Algorithm~\ref{alg:lbfs}. As this visited vertex is either a neighbor of
  $u$ and $v$ or of neither of them, these
  vertices are put in the same set.

  We show the other direction by contradiction. Assume $u$ and $v$ are
  in the same set, but the sets of preceding neighbors differ, i.\,e.,
  $P_{\mathcal{X}}(u) \neq P_{\mathcal{X}}(v)$. Moreover,
  assume without loss of generality that $x$ is the earliest visited vertex
  that is (i) in exactly one of these sets and that (ii) is a neighbor of $u$ but not $v$. When $x$
  is visited, $u$ and $v$ are in the same set by
  the argument above (the sets of preceding neighbors of $u$ and
  $v$ are identical in this state) and, according to the
  partitioning in line~\ref{line:partrefine}, $u$ and $v$ are put in different sets in
  $\mathcal{S}$. As no sets are joined in the LBFS, it follows
  that $u$ and $v$ stay in different sets and, thus, are not in the same set. A contradiction.
\end{proof}

Due to this result, we will also refer to the \emph{preceding
  neighbors of a set} $S \in \mathcal{S}(\mathcal{X})$ at state
$\mathcal{X}$ as $P_{\mathcal{X}}(S)$, which means the preceding
neighbors of any node in $S$.

\begin{new lemma}
  \label{lemma:superset}
  Let $\mathcal{X}$ be the state of an LBFS performed on a
  chordal graph~$G=(V,E)$, and let
  $S_1,S_2\in\mathcal{S}(\mathcal{X})$ such that $S_1$ precedes $S_2$
  in $\mathcal{S}(\mathcal{X})$
  and such that there are vertices 
  $u\in S_1$ and $v\in S_2$ with $\{u,v\}\in E$.
  Then $P_{\mathcal{X}}(S_1)\supsetneq P_{\mathcal{X}}(S_2)$.
\end{new lemma}

\begin{proof}
  Assume for the sake of contradiction that there is a vertex $x \in
  P_{\mathcal{X}}(S_2)\setminus P_{\mathcal{X}}(S_1)$. Let $u \in S_1$ and $v \in S_2$
  be the vertices with edge $u - v$.

  Vertex $v$ is visited after $u$ by the LBFS, as $S_2$ succeeds~$S_1$. Since $x$ was already visited as well, we have that the preceding neighbors of
  $v$ (when $v$ is visited) do not form a clique. This contradicts
  the fact that the LBFS ordering is the reverse of a PEO.
\end{proof}

\setcounter{lemma}{1}

\begin{claim lemma}
  Every AMO of a UCCG $G$ can be represented by an LBFS ordering.
\end{claim lemma}

\begin{proof}
  We show how the LBFS algorithm
  can be used to obtain a topological ordering that represents an AMO
  $\alpha$.

  The LBFS algorithm can freely choose a vertex from the first set. We
  restrict the algorithm to choose a vertex that has no
  incoming edges in $\alpha$ from an unvisited vertex. If such a vertex always
  exists, we obtain a topological ordering which represents $\alpha$.

  It is left to show that such a vertex indeed exists. We prove this
  by induction, where the base case is the source vertex of~$\alpha$,
  which can be chosen, as all vertices are in the first set.  Assume
  the first $k-1$ vertices were chosen from the (at that state) first
  set. We show that it is possible to pick a $k$th vertex from the
  first set, which has no incoming edges from an unvisited
  vertex.

  Let $\mathcal{X} = (V(\mathcal{X}), \mathcal{S}(\mathcal{X}))$ be
  the state when picking the $k$th vertex and assume,
  for the sake of contradiction, that there is no such vertex in the first
  set of $\mathcal{S}(\mathcal{X})$. This means, for all vertices $x$ in the
  first set $X$ of $\mathcal{S}(\mathcal{X})$, an unvisited vertex $z$
  exists with $x \leftarrow z$ in $\alpha$. There has to be at least one
  $z$ that is not in $X$, as otherwise there would be a cycle in $\alpha$. 
  This $z$ is in a later set than $x$, and there
  is an edge between $x$ and $z$. We deduce
  $P_{\mathcal{X}}(z)\subsetneq P_{\mathcal{X}}(x)$ with Lemma~\ref{lemma:superset}.

  Let $u \in P_{\mathcal{X}}(x)\setminus P_{\mathcal{X}}(z)$ be a
  preceding neighbor of $x$ but not $z$.
  The edge between $u$ and $v$ is correctly directed $u
  \rightarrow x$ by induction hypothesis. However, if $\alpha$ would contain the
  edge $x\leftarrow z$, then there would be a v-structure in $\alpha$,
  as there is no edge between $u$ and $z$. A contradiction.
\end{proof}

\begin{claim lemma}
  Every LBFS ordering starts with a maximal clique.
\end{claim lemma}

\begin{proof}
  At the beginning of the LBFS, an arbitrary vertex $u$ is chosen. In the
  subsequent steps, as long as there exists a vertex that is adjacent to
  all previously visited vertices, all vertices in the first set in $\mathcal{S}(\mathcal{X})$ have
  this property (in line~\ref{line:partrefine} of Algorithm~\ref{alg:lbfs} they are put in
  front sets of previously visited vertices). 
\end{proof}

\begin{claim corollary}
  Every AMO can be represented by a topological ordering which
  starts with a maximal clique.
\end{claim corollary}

\begin{proof}
  By Lemma~\ref{lemma:lbfsbegmc}, every LBFS ordering starts with
  a maximal clique. The statement follows from the fact that every AMO can be represented by an LBFS ordering
  (Lemma~\ref{lemma:allextlbfs}). 
\end{proof}

For the sake of readability, we repeat the following definition from the main paper:
\setcounter{definition}{0}
\begin{definition}
  Let $G$ be a UCCG over $V$, $K$ be a
  clique in $G$, and let $\pi(K)$ be a permutation of $K$.
  \begin{enumerate}
  \item The $\pi(K)$-orientation of $G$, also denoted $G^{\pi(K)}$,
    is the union of all AMOs of $G$ that can be represented by a
    topological ordering beginning with $\pi(K)$.
  \item Let $G^{K}$ denote the union of $\pi(K)$-orientations of $G$
    over all $\pi$, i.\,e.,\ let $G^{K} = \bigcup_{\pi} G^{\pi(K)}$.
  \item 
  Denote by $\chordalcomps_G(\pi(K))$ the undirected connected components 
  of $G^{\pi(K)}[V  \setminus K]$ and let 
  $\chordalcomps_G(K)$ denote the undirected connected components 
  of $G^{K}[V  \setminus K]$.
  \end{enumerate}
\end{definition}

We introduce a linear-time algorithm for finding the undirected
components of $G^{\pi(K)}$. Algorithm~\ref{alg:findsubproblems} proceeds as
Algorithm~\ref{alg:lbfs} in the main paper, with the 
exception that it makes sure that the vertices of the clique $K$ are
visited in order $\pi(K)$. In this way, the algorithm characterizes exactly the
$\pi(K)$-orientation, which makes the proofs cleaner compared to showing
the correctness of Algorithm~\ref{alg:lbfs} directly. Afterward, in Proposition~\ref{prop:cliqueid}, we
observe that indeed the permutation $\pi(K)$ does not influence orientations beyond the initial clique.
This will allow us to immediately conclude the correctness
of Algorithm~\ref{alg:lbfs} from the main paper (Theorem~\ref{thm:lintimesubp}).

\setcounter{algocf}{2}

\begin{algorithm}
  \caption{The algorithm computes $\chordalcomps_G(\pi(K))$.}
  \label{alg:findsubproblems}
  \DontPrintSemicolon
  \SetKwInOut{Input}{input}\SetKwInOut{Output}{output}
  \SetKwFor{Rep}{repeat}{}{end}
  \Input{A UCCG $G$, clique $K$, and permutation $\pi(K)$.}
  \Output{$\chordalcomps_G(\pi(K))$.}
  $\mathcal{S}\gets$ sequence of sets initialized with $(K,V\setminus K)$ \;
  $\tau\gets\text{empty list}$, $L\gets\emptyset$ \;
  \While{$\mathcal{S}$ is non-empty}{
    $X\gets\text{first non-empty set of $\mathcal{S}$}$ \;
    \uIf{$X \subseteq K$}{
      $\hbox to 0pt{$v$}\phantom{X}\gets\text{the first vertex of } X \text{ in } \pi(K)$ \;  \label{line:findvertex}}
    \Else{
      $\hbox to 0pt{$v$}\phantom{X}\gets\text{arbitrary vertex from $X$}$
    }
    Add vertex $v$ to the end of $\tau$. \;
    \If{$v$ is neither in a set in $L$ nor in $K$}{
      $L\gets L\cup \{X\}$ \; \label{line:settol}
      Output the undirected components of $G[X]$. \label{line:outsubs}
    }
    $X\gets X\setminus\{v\}$ \;
    Denote the current $\mathcal{S}$ by $(S_1, \dots, S_k)$. \;
    Replace each $S_i$  by $S_i \cap N(v), S_i
    \setminus N(v)$. \; \label{line:partrefine2}
    Remove all empty sets from $\mathcal{S}$. \; 
  }
\end{algorithm}

\begin{new lemma}
  \label{lemma:algpikcorrect}
  Algorithm~\ref{alg:findsubproblems} computes the undirected
  connected components of $G^{\pi(K)}$ in time
  $\mathcal{O}(|V|+|E|)$. Moreover, all remaining directed edges in $G^{\pi(K)}$
  are oriented as given by LBFS ordering $\tau$. 
\end{new lemma}

\begin{proof}  
  We first show that the edges between vertices in any set in $L$ are
  correctly left undirected, i.\,e., they are undirected in $G^{\pi(K)}$. Note that by definition of $G^{\pi(K)}$
  an edge $u - v$ is undirected, if there are AMOs represented by topological
  orderings that have $\pi(K)$ at the beginning and orient $u \rightarrow v$ and $u \leftarrow v$, respectively.

  Note that the algorithm starts
  with vertices in $K$ in the order $\pi(K)$. By
  Corollary~\ref{cor:lbfsordermao}, the LBFS ordering will always
  represent an AMO with these properties.
  
  Whenever, in a certain state $\mathcal{X}$ (after the initial clique
  $K$ has been visited), the algorithm chooses
  from the first set $X \in \mathcal{S}(\mathcal{X})$, it can choose the
  vertex arbitrarily. For any two neighbors $u$ and $v$ in $X$, there is an AMO with $u \rightarrow v$ (if
  we choose $u$ as first vertex) and one with $u \leftarrow v$ (if we
  choose $v$). Thus, the edge between $u$ and $v$ is undirected in
  $G^{\pi(K)}$. Hence, when a set $X$ is appended to $L$ in
  line~\ref{line:settol} of Algorithm~\ref{alg:findsubproblems}, edges in $G^{\pi(K)}[X]$ are undirected.

  We show that all remaining edges are oriented in the way
  given by the LBFS ordering $\tau$ in $G^{\pi(K)}$. 
  Clearly, the internal edges of $K$ are oriented correctly.
  We prove the correctness of the remaining edges by induction: For each vertex $u$, we show that the
  edge to every vertex $v$, which comes after it in the LBFS and is not in the same set
  in $L$, is directed as $u \rightarrow v$ in
  $G^{\pi(K)}$. This means that every AMO of $G$ whose
  topological ordering starts with $\pi(K)$ contains the edge $u \rightarrow v$.

  This holds for $K$, as edges to vertices that are not in~$K$ are
  always directed outwards from $K$.
  Assume the stated property holds for all previous vertices in the
  LBFS ordering $\tau$, we show
  it also holds for vertex $v$. Let $\mathcal{X}$ be the state of the
  LBFS before the first vertex from the set of $v$ in $L$ was
  visited. For any subsequent vertex $w$, which is not in
  the same set in $L$ as $v$, there is a preceding neighbor $u$ of $v$
  at state $\mathcal{X}$, which is not a preceding neighbor of $w$ at
  state $\mathcal{X}$. Otherwise
  $w$ would have been in the same set at state $\mathcal{X}$ by
  Lemma~\ref{lemma:samepred} and, thereby, be in the same set in $L$ as $v$. But
  then we have $u \rightarrow v - w$, with the correctness of edge $u
  \rightarrow v$ following from the induction hypothesis ($u$ cannot
  be in the same set in $L$ as $v$ by definition of $\mathcal{X}$). By the first
  Meek rule, it follows that $v \rightarrow w$ is in every
  AMO of $G^{\pi(K)}$.

  The sets in $L$ do not necessarily induce connected
  subgraphs. Thus, the algorithm returns the connected components of
  these sets, which are exactly the undirected connected components of
  $G^{\pi(k)}$.
  
  For the run time observe that the algorithm can be implemented in linear time with
  the same techniques used to implement the standard LBFS.
\end{proof}

We need further vocabulary to prove the next lemma.  Let
$H \in \chordalcomps_G(\pi(K))$ be returned by
Algorithm~\ref{alg:findsubproblems}. We denote the set of preceding
neighbors of the vertices in  $H$ at state $\mathcal{X}_{\text{out}}$,
the state when $H$ was output, by $P_{\text{out}}(H)$.
Note that this is a slight abuse of notation as $H$ is not
a set in $\mathcal{S}$ but a graph.

\begin{claim lemma}
  The undirected components in $\chordalcomps_G(\pi(K))$ are chordal and it holds that:
  \[
    \hamo(G^{\pi(K)}) = \prod_{H \in \chordalcomps_G(\pi(K))} \hamo(H).
  \]
\end{claim lemma}
\begin{proof}
  The chordality of the
  graphs in $\chordalcomps_G(\pi(K))$ follows from the
  correctness of Algorithm~\ref{alg:findsubproblems}
  (Lemma~\ref{lemma:algpikcorrect}) and the
  fact that the undirected components returned by
  Algorithm~\ref{alg:findsubproblems} are induced
  subgraphs of~$G$.
  
  The second part, i.\,e., the recursive formula, follows from the property that each component $H$ in
  $\chordalcomps_G(\pi(K))$ can be oriented independently of the
  directed part of $G^{\pi(K)}$. To prove this, we have to show that whenever a vertex $u$ is a
  parent of vertex $v \in V_H$, then $u$ is a parent of all vertices in
  $H$.  This fact was proven for CPDAGs as Lemma~10 by~\citet{He2008}
  and the property then follows analogously for the UCCGs of the
  $\pi(K)$-orientation as in Theorem~4 and Theorem~5 of the same paper.

  By Lemma~\ref{lemma:algpikcorrect}, the parents of a
  vertex $v \in V_H$ in the $\pi(K)$-orientation are
  the vertices in $P_{\text{out}}(H)$. All vertices in this
  set are neighbors of $v$, considered before $v$ and not in the same
  set in $L$. All other neighbors of $v$ are visited later or are in
  the same set in $L$ as $v$ and can, thus, not be parents. The set
  $P_{\text{out}}(H)$ is by definition the same for each vertex in
  $H$. 
\end{proof}

\begin{claim proposition}
  Let $G$ be a UCCG and $K$ be a clique of $G$.
  For each permutation $\pi(K)$ it is true that all edges of $G^{\pi(K)}$
  coincide with the edges of $G^K$, excluding the edges connecting 
  the vertices in $K$. 
  Hence, $\cC_G(\pi(K)) = \cC_G(K)$ and it holds that:
  \[
    \sum_{\text{$\pi$ over $K$}} \hamo(G^{\pi(K)}) = |K|! \times \prod_{\balap{H \in \chordalcomps_G(K)}} \hamo(H).
  \]
\end{claim proposition}

\begin{proof}
  We prove the statement by showing that, for two arbitrary permutations $\pi(K)$ and
  $\pi'(K)$, the edges in $G^{\pi(K)}$ and $G^{\pi'(K)}$ coincide,
  excluding the edges connecting the vertices in $K$.

  The graph
  $G^{\pi(K)}$ is defined as the union of all AMOs, which can be
  represented by a topological ordering starting with $\pi(K)$. Take
  such an AMO $\alpha$ and, in a corresponding topological ordering $\tau$, replace
  $\pi(K)$ by $\pi'(K)$ obtaining a new topological ordering $\tau'$.
  The orientation $\alpha'$ represented by $\tau'$ is, by definition, acyclic and,
  moreover, moral. For the latter property, assume for a
  contradiction, that there is a v-structure (immorality) $a
  \rightarrow b \leftarrow c$. Because $\alpha$ is moral and only edge
  directions internal in $K$ have been changed in $\alpha'$, it has to
  hold that either
  \begin{enumerate}
  \item two vertices of $a,b,c$ are in $K$ (w.l.o.g.\ assume these are $a$ and
    $b$), but then we have $b \rightarrow c \not\in K$ and not $b
    \leftarrow c$, or
  \item all three vertices are in $K$, but then $a \rightarrow b
    \leftarrow c$ is no induced subgraph.
  \end{enumerate}
  Hence, such a v-structure can not exist and $\alpha'$ is moral as
  well. The reverse direction follows equivalently.

  Therefore, the union of all AMOs, which can be represented by a
  topological ordering $\tau'$ starting with $\pi'(K)$, yields the exact
  same graph as for $G^{\pi(K)}$, excluding the internal edges in $K$. Thus,
  $\chordalcomps_G(\pi(K)) = \chordalcomps_G(\pi'(K))$ and, by definition,
  $\chordalcomps_G(\pi(K)) = \chordalcomps_G(K)$. The recursive formula is immediately implied by
  this fact and Lemma~\ref{lemma:independentOrientation}.
 \end{proof}
 
 \begin{claim theorem}
  For a chordal graph $G$ and a clique $K$, Algorithm~\ref{alg:lbfs}
  computes $\chordalcomps_G(K)$ in time
  $\mathcal{O}(|V| + |E|)$. 
\end{claim theorem}

\begin{proof}
  By Lemma~\ref{lemma:algpikcorrect},
  Algorithm~\ref{alg:findsubproblems} correctly computes the UCCGs of
  the $\pi(K)$-orientation. As these UCCGs are identical for each
  orientation $\pi(K)$ and
  Algorithm~\ref{alg:lbfs} proceeds just as
  Algorithm~\ref{alg:findsubproblems}, with the only difference that the
  former makes no restriction on the order the vertices in $K$ are
  visited (thus visiting them in arbitrary permutation $\pi'(K)$), the
  correctness follows.
\end{proof}

\section*{Proofs of Section 4:\\[.5ex] Counting MA-Orientations with\\ Minimal Separators and Maximal Cliques}
\begin{claim lemma}
  Let $\alpha$ be an AMO of a chordal graph~$G$ and $\tau_1$,
  $\tau_2$ be two topological orderings that represent $\alpha$. Then
  $\tau_1$ and $\tau_2$ have a common prefix $S\in\separators(G)\cup\cliques(G)$.
\end{claim lemma}
\begin{proof}
  Assume by Corollary~\ref{cor:startclique} that $\tau_1$ starts with the maximal clique $K_1$ and $\tau_2$ with
  the maximal clique $K_2$. Since every AMO of a UCCG has a
  unique source, $\tau_1$ and $\tau_2$ start with the same vertex and,
  hence, $K_1\cap K_2=S\neq\emptyset$.

  We first show that $\tau_1$ and $\tau_2$ have to start with
  $S$. Assume for a contradiction that in $\tau_1$ there is a vertex
  $u \not\in S$ before a $v \in S$. The edge between $u$ and $v$ is
  directed as $u \rightarrow v$ in $\alpha$, but as $v \in K_2$ and
  $u \not\in K_2$, the ordering $\tau_2$ implies $u \leftarrow v$.

  If $K_1=K_2$ then $S\in\cliques(G)$ and we are done. We prove that
  otherwise $S$ is a minimal separator in $G$
  that separates $P_1=K_1\setminus S$ from $P_2=K_2\setminus S$. Note
  that the minimality follows by definition. It remains to show that
  $S$ indeed separates $P_1$ and $P_2$. For a contradiction, let
  $x_1 \in P_1 - x_2 - \dots - x_{k-1} - x_k \in P_2$ be a shortest
  $P_1$-$P_2$-path in $G[V\setminus S]$ with $x_i\not\in K_1\cup K_2$
  for $i \in \{2, \dots, k-1\}$.
  According to $\tau_1$, we have the edge $x_1 \rightarrow x_2$ in
  $\alpha$. Since we consider a shortest path,
  $x_{i-1} - x_{i} - x_{i+1}$ is always an induced subgraph and, thus,
  an iterative application of the first Meek rule implies
  $x_{k-1} \rightarrow x_k$.  However, $\tau_2$ would imply the edge
  $x_{k-1} \leftarrow x_k$ in $\alpha$. A contradiction.
\end{proof}

We repeat the following definition from the main text:
\begin{definition}
For a set $S$ and a collection $R$ of subsets of~$S$, 
we define $\phi(S,R)$ as the number of
all permutations of $S$ that do not have 
a set $S'\in R$ as prefix. 
\end{definition}

\begin{claim proposition}
  Let $G$ be a UCCG. Then:
  \[\hamo(G)=\sum_{S\in\Delta(G)\cup\Pi(G)}
  \phi(S)
  \times 
  \prod_{\balap{H\in\chordalcomps_G(S)}}\hamo(H).
  \]
\end{claim proposition}
\begin{proof}
  By the choice of $S$ and the definition of $\chordalcomps_G(S)$,
  everything counted by the formula is a topological ordering
  representing an AMO.  We argue that every AMO~$\alpha$ is counted
  exactly once. Let $S\in\separators(G)\cup\cliques(G)$ be the
  smallest common prefix of all topological orderings in
  $\mathrm{top}(\alpha)$~--~which is well-defined by
  Lemma~\ref{lemma:minimalSeparator}. First observe that, by the
  minimality of $S$, $\alpha$ is counted at the term for $S$: There is
  no other prefix $\tilde S\subsetneq S$ of the topological orderings
  with $\tilde S\in\separators(G)$ and $\tilde S\in\phi(S)$.

  On the other hand, $S$ is the only term in the sum at which we can
  count $\alpha$, as for any larger $\tilde S$ with $S\subsetneq \tilde
  S$ that is a prefix of some $\tau\in\mathrm{top}(\alpha)$, we have
  $S$ is considered in $\phi(\tilde S)$.
\end{proof}

\section*{Proofs of Section 5:\\ The Clique-Picking Algorithm}
\begin{claim lemma}
  Let $S$ be a set and $R=\{X_1,\dots,X_{\ell}\}$ be a collection of
  subsets of $S$ with $X_1\subsetneq X_2\subsetneq\dots\subsetneq
  X_{\ell}$. Then:
  \[
    \phi(S,R) = |S|!
    -\sum_{i=1}^{\ell}|S\setminus X_i|!\cdot\phi(X_i,\{X_1,\dots,X_{i-1}\}).    
  \]
\end{claim lemma}

\begin{proof}
  We prove the statement by induction over $\ell$ with the base case
  $\phi(S,\emptyset)=|S|!$. Consider a set $S$ and a collection
  $R=\{X_1,\dots,X_{\ell}\}$ of subsets of $S$. We can compute
  $\phi(S,R)$ by taking $\phi(S,\{X_1,\dots,X_{\ell-1}\})$ (the number
  of permutations of $S$ that do not start with
  $X_1,\dots,X_{\ell-1}$) and by subtracting the number of
  permutations that start with $X_{\ell}$ but none of the other $X_i$, i.\,e.,
  \begin{align*}
    \phi(S,R) &= \phi(S,\{X_1,\dots,X_{\ell-1}\})\\
    &\qquad - |S\setminus X_{\ell}|!\cdot\phi(X_{\ell},\{X_1,\dots,X_{\ell-1}\}).
  \end{align*}
  Inserting the induction hypothesis, we obtain:
  \begin{align*}
    \phi(S,R) &= |S|! -\sum_{i=1}^{\ell-1}|S\setminus X_i|!\cdot\phi(X_i,\{X_1,\dots,X_{i-1}\})\\
     &\qquad - |S\setminus
       X_{\ell}|!\cdot\phi(X_{\ell},\{X_1,\dots,X_{\ell-1}\})\\
    &=|S|!-\sum_{i=1}^{\ell}|S\setminus X_i|!\cdot\phi(X_i,\{X_1,\dots,X_{i-1}\}).
  \end{align*}
\end{proof}

Recall, for the following proof, the definition of a forbidden prefix:
\begin{definition}
  Let $G$ be a UCCG, $\mathcal{T} = (T, r, \iota)$ a rooted clique tree of
  $G$, $v$ a node in $T$ and $r = x_1, x_2, \dots, x_p = v$ the unique
  $r$-$v$-path. We define the set $\mathrm{FP}(v, \mathcal{T})$ to contain
  all sets $\iota(x_i) \cap \iota(x_{i+1}) \subseteq \iota
  (v)$ for $1 \leq i < p$. 
\end{definition}

\begin{claim lemma}
  We can order the elements of the set $\mathrm{FP}(v, \mathcal{T})$ as $X_1
  \subsetneq X_2 \subsetneq \dots \subsetneq X_\ell$.
\end{claim lemma}

\begin{proof}
  The ordering of the sets is given by the natural order along the
  path from the root $r$ to node $v$.
  The sets in $\mathrm{FP}(v, \mathcal{T})$
  satisfy $\iota(x_i)\cap \iota(x_{i+1})\subseteq \iota(v)$. By the
  definition of a clique tree, we have $\iota(x_i)\cap \iota(x_{i+1})\subseteq \iota(y)$ for
  each $y$ that lies on the $x_i$-$v$-path in $T$. Hence, each such $y$
  can only add supersets of $\iota(x_i) \cap \iota(x_{i+1})$ to $\mathrm{FP}(v, \mathcal{T})$.
\end{proof}

Recall the definition of $S$-flowers and bouquets:
\begin{definition} 
  An \emph{$S$-flower} for a minimal separator~$S$ is a maximal set
  $F\subseteq\{\,K\mid K\in\cliques(G)\wedge S\subseteq K\,\}$ such that $\bigcup_{K\in F}K$ is
  connected in $G[V\setminus S]$. The \emph{bouquet}~$\bouquet(S)$ of a
  minimal separator $S$ is the set of all $S$-flowers.
\end{definition}

\begin{claim lemma}
  An $S$-flower $F$ is a connected subtree in a rooted clique tree $(T,r,\iota)$.
\end{claim lemma}
\begin{proof}
  Assume for a contradiction that $F$ is not
  connected in $T$. Then there are cliques $K_1,K_2\in F$ that
  are connected by the unique path $K_1-\tilde K-\dots-K_2$ with $\tilde
  K\not\in F$. Since $\iota^{-1}(S)$ is connected, we
  have $S\subseteq\tilde K$. By the maximality of~$F$, we
  have $K_1\cap\tilde K=S$. But then $S$ separates $K_1\setminus S$
  from $K_2\setminus S$, which contradicts the definition of $S$-flowers.
\end{proof}

\begin{claim lemma}
  For any minimal separator $S$, the bouquet $\bouquet(S)$ is a
  partition of $\iota^{-1}(S)$.
\end{claim lemma}
\begin{proof}
  For each $x\in\iota^{-1}(S)$, the maximal clique $\iota(x)$ is in
  some $S$-flower by definition. However, no maximal clique can be
  in two $S$-flowers, as these flowers would then be in the same connected
  component in $G[V\setminus S]$.
\end{proof}

\begin{claim lemma}
 There is a unique least $S$-flower in $\bouquet(S)$ with respect to $\prec_T$.
\end{claim lemma}
\begin{proof}
  Assume, there is no unique least $S$-flower.
  Then there are two minimal $S$-flowers which are
  incomparable. However, by
  Lemma~\ref{lemma:connflowers} and~\ref{lemma:bouquetpart}, and the definition
  of the partial order, there has to be another $S$-flower closer to
  the root and, thus, lesser given the partial order. A contradiction.
\end{proof}

\begin{claim lemma}
  Let $\alpha$ be an AMO such that every topological ordering that
  represents $\alpha$ has the minimal separator~$S$ as prefix. Then
  every $F\in\bouquet(S)$ contains a clique~$K$ such that there is a
  $\tau\in\mathrm{top}(\alpha)$ starting with $K$.
\end{claim lemma}
\begin{proof}
  Let $\tau$ be a topological ordering representing $\alpha$ that starts with $S$.
  By Proposition~\ref{cor:startclique}, there is at least one clique $K$ with
  $S\subseteq K$ such that $\tau$ has the form $\tau=(S,K\setminus S,V\setminus K)$. Let $F\in\bouquet(S)$ be
  the flower containing $K$ and $F'\neq F$ be another $S$-flower with
  some $K'\in F'$. Observe that $K\setminus S$ is disconnected from
  $K'\setminus S$ in $G[V\setminus S]$. Therefore, there is a
  topological ordering of the form $(S,K'\setminus S, V\setminus K')$
  that represents $\alpha$ as well.
\end{proof}
\begin{claim proposition}
  Let $G$ be a UCCG and $\mathcal{T} = (T, r, \iota)$ be a rooted
  clique tree of $G$. Then:
  \[
    \hamo(G) = \sum_{\balap{v\in V_T}} \phi(\iota(v), \mathrm{FP}(v, \mathcal{T})) 
    \times \prod_{\balap{H \in \chordalcomps_G(\iota(v))}} \hamo(H).
  \]
\end{claim proposition}

\begin{proof}
  We have to show that every
  AMO~$\alpha$ is counted exactly once. Recall that
  $\mathrm{top}(\alpha)=\{\tau_1,\dots,\tau_{\ell}\}$ is the set of topological orderings
  that represent $\alpha$, and that the rooted clique tree $(T,r,\iota)$
  implies a partial order $\prec_T$ on flowers, which in return
  defines partial order $\prec_{\alpha}$ on the set of maximal cliques that are at the
  beginning of some $\tau\in\mathrm{top}(\alpha)$.

  \begin{claim}
    There is a unique least maximal clique $K\in\cliques(G)$ with respect to $\prec_{\alpha}$.
  \end{claim}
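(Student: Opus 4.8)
The plan is to reduce the statement to a single structural fact about the clique tree. Write $\mathcal{K}_{\alpha}$ for the set of maximal cliques that appear at the beginning of some $\tau\in\mathrm{top}(\alpha)$; this set is non-empty by Corollary~\ref{cor:startclique}, and by Lemma~\ref{lemma:minimalSeparator} all orderings in $\mathrm{top}(\alpha)$ share a well-defined smallest common prefix $S\in\separators(G)\cup\cliques(G)$, which is moreover ancestrally closed in $\alpha$ (any $\alpha$-predecessor of a prefix vertex is again a prefix vertex). First I would dispose of the case $S\in\cliques(G)$: for any $K\in\mathcal{K}_\alpha$, $S$ is a prefix of a $\tau$ beginning with $K$, so $S\subseteq K$ or $K\subseteq S$, and maximality of both cliques forces $K=S$; hence $\mathcal{K}_\alpha=\{S\}$ and the claim is trivial. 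So assume $S\in\separators(G)$. Since minimal separators of chordal graphs are cliques, the same argument gives $S\subseteq K$ for every $K\in\mathcal{K}_\alpha$; thus the nodes of $\mathcal{K}_\alpha$ all lie inside the connected subtree $\iota^{-1}(S)$ of $T$, which $\bouquet(S)$ partitions into flowers (Lemmas~\ref{lemma:connflowers} and~\ref{lemma:bouquetpart}), and every flower meets $\mathcal{K}_\alpha$ by Lemma~\ref{lemma:BSeveryAMO}.

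The second step is a flower comparison. If $K_1,K_2\in\mathcal{K}_\alpha$ lie in distinct $S$-flowers $F_1\neq F_2$, then $K_1\cap K_2=S$: a common vertex outside $S$ would place $K_1\setminus S$ and $K_2\setminus S$ in the same component of $G[V\setminus S]$, contradicting $F_1\neq F_2$. Hence, by the definition of $\prec_{\alpha}$, one has $K_1\prec_{\alpha}K_2$ exactly when $F_1\prec_T F_2$ for such pairs. Let $F_{\min}$ be the least flower of Lemma~\ref{lemma:flowersAreOrdered}; its top node, namely the node $x_0$ of $\iota^{-1}(S)$ closest to $r$, lies on the $r$-path of every other flower, so $F_{\min}\prec_T F$ for all $F\neq F_{\min}$. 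Consequently every $K\in\mathcal{K}_\alpha$ lying outside $F_{\min}$ is strictly above, under $\prec_{\alpha}$, every clique of $\mathcal{K}_\alpha\cap F_{\min}$, so every $\prec_{\alpha}$-minimal element of $\mathcal{K}_\alpha$ lies in $F_{\min}$, and it remains to single out the right clique inside $F_{\min}$.

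I would finish by proving that $\iota(x_0)\in\mathcal{K}_\alpha$, i.e.\ that the root-most clique of $\iota^{-1}(S)$ is ancestrally closed in $\alpha$ (equivalently, that the nodes of $\mathcal{K}_\alpha$ form a connected subtree of $T$, which I expect to be all of $\iota^{-1}(S)$). Granting this, $\iota(x_0)$ is a $T$-ancestor of every node of $\iota^{-1}(S)$, hence of every node of $\mathcal{K}_\alpha$; for any other $K=\iota(x)\in\mathcal{K}_\alpha$, the argument in the proof of Lemma~\ref{lemma:minimalSeparator} (now applicable since $\iota(x_0)\in\mathcal{K}_\alpha$) gives $\iota(x_0)\cap K=:\tilde S\in\separators(G)$ with $\tilde S$ separating $\iota(x_0)\setminus\tilde S$ from $K\setminus\tilde S$, so $x_0$ and $x$ lie in distinct $\tilde S$-flowers; since $x_0$ is a $T$-ancestor of $x$, its $\tilde S$-flower is $\prec_T$ below that of $x$, and therefore $\iota(x_0)\prec_{\alpha}K$. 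Combined with the second step, $\iota(x_0)$ is $\prec_{\alpha}$ below every other element of $\mathcal{K}_\alpha$, so it is the unique $\prec_{\alpha}$-least maximal clique.

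The hard part is this last fact, that $\iota(x_0)$ is ancestrally closed. I would argue by contradiction: an $\alpha$-edge $u\to v$ with $v\in\iota(x_0)$ and $u\notin\iota(x_0)$ must have $v\notin S$ (otherwise $u$ would be a prefix vertex and hence in $\iota(x_0)$); every $s\in S$ precedes $v$ in all of $\mathrm{top}(\alpha)$ and lies together with $v$ in the clique $\iota(x_0)$, so $s\to v$ in $\alpha$; morality then forbids the v-structure $u\to v\leftarrow s$, forcing $u$ adjacent to all of $S$; and then, propagating along the clique tree from $x_0$ towards the node realizing the edge $u{-}v$ by an iterated application of the first Meek rule and using the ancestral closure of $S$ and of the cliques supplied by Lemma~\ref{lemma:BSeveryAMO} in the sibling flowers, one forces $u$ into $\bigcap_{K\in\mathcal{K}_\alpha}K=S\subseteq\iota(x_0)$, the desired contradiction. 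Making this propagation precise, essentially showing that no $\alpha$-edge can enter $\iota(x_0)$ from outside once $S$ is fixed, is where the real work lies.
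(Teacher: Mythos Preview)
Your first two steps are sound and mirror the paper's setup: the case $S\in\cliques(G)$ is trivial, and when $S\in\separators(G)$ every $\prec_\alpha$-minimal element of $\mathcal{K}_\alpha$ must lie in the least $S$-flower $F_{\min}$. The divergence from the paper, and the genuine gap, is in your third step: the claim that $\iota(x_0)\in\mathcal{K}_\alpha$ (equivalently, that $\iota(x_0)$ is ancestrally closed in $\alpha$) is false in general, so your ``hard part'' cannot be completed.

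Here is a counterexample. Take the chordal graph on $\{s,a,b,c,d\}$ with maximal cliques $K_1=\{s,a,b\}$, $K_2=\{s,b,c\}$, $K_3=\{s,d\}$ and the clique tree $K_2-K_1-K_3$ rooted at $K_2$. Consider the AMO $\alpha$ given by $s\to a\to b\to c$ and $s\to d$ (together with $s\to b$, $s\to c$). The smallest common prefix is $S=\{s\}$, and the root-most node of $\iota^{-1}(\{s\})$ is $x_0$ with $\iota(x_0)=K_2$. But $a\to b$ is an edge of $\alpha$ with $b\in K_2$ and $a\notin K_2$, so $K_2$ is \emph{not} ancestrally closed; indeed no $\tau\in\mathrm{top}(\alpha)$ can begin with $K_2$ since $b$ needs $a$ first. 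One computes $\mathcal{K}_\alpha=\{K_1,K_3\}$ and the unique $\prec_\alpha$-least clique is $K_1$, not $\iota(x_0)=K_2$. Your parenthetical expectation that $\mathcal{K}_\alpha=\iota^{-1}(S)$ also fails here.

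The paper avoids identifying the least clique explicitly. Instead it argues by induction on the number $\mu$ of distinct maximal cliques appearing at the start of the orderings under consideration: for $\mu>1$, restrict to those orderings whose starting clique lies in $F_{\min}$ (non-empty by Lemma~\ref{lemma:BSeveryAMO}), observe that $\mu$ strictly drops, and apply the induction hypothesis. Since every clique in $\mathcal{K}_\alpha\cap F_{\min}$ is $\prec_\alpha$-below every clique in $\mathcal{K}_\alpha\setminus F_{\min}$ (your second step), the unique minimum found inside $F_{\min}$ is the global one. This recursion is exactly what your direct approach tried to shortcut; the example shows the shortcut does not exist.
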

  \begin{proof}
    Let $\mathrm{top}'(\alpha)\subseteq\mathrm{top}(\alpha)$ be an arbitrary subset of the
    topological orderings that represent~$\alpha$ and let $\mu$ be the
    number of different maximal cliques with which elements in
    $\mathrm{top}'(\alpha)$ start. We prove the claim by induction over
    $\mu$. In the base case, all elements in $\mathrm{top}'(\alpha)$ start with
    the same set $S\in\cliques(G)$ and, of course, this is the unique
    least maximal clique. For $\mu>1$ we observe
    that, by Lemma~\ref{lemma:minimalSeparator}, all
    $\tau\in\mathrm{top}'(\alpha)$ start with the same $S\in\separators(G)$.

    Consider the bouquet $\bouquet(S)$, which is partially ordered
    by~$\prec_T$. Lemma~\ref{lemma:flowersAreOrdered} states that there is a
    unique least $S$-flower $F\in\bouquet(S)$ with respect to
    $\prec_T$, and by the definition of~$\prec_{\alpha}$ the maximal
    cliques occurring in $F$ precede the others. Therefore, we reduce
    $\mathrm{top}'(\alpha)$ to the set $\mathrm{top}''(\alpha)$ of topological orderings that
    start with a maximal clique in $F$. This set is non-empty by
    Lemma~\ref{lemma:BSeveryAMO} and contains, by the induction
    hypothesis, a unique least maximal clique.
  \end{proof}

  We complete the proof by showing that
  the formula counts~$\alpha$ at the term for the unique least
  maximal clique~$K$ from the previous claim. To see this, we need to
  prove that (i) $\alpha$ can be counted at the clique $K$ (i.\,e., there is no
  set $S\in \mathrm{FP}(\iota^{-1}(K), \mathcal{T})$ preventing $\alpha$
  from being counted), and (ii) that
  $\alpha$ is not counted somewhere else (i.\,e., there is some $S\in
  \mathrm{FP}(\iota^{-1}(K'), \mathcal{T})$ for all other $K'\in\cliques(G)$ that can be at the beginning
  of some $\tau\in\mathrm{top}(\alpha)$).

  \begin{claim}
    Let $\alpha$ be an AMO and $K\in\cliques(G)$ be the least
    maximal clique (with respect to $\prec_{\alpha}$) that is a prefix of
    some $\tau\in\mathrm{top}(\alpha)$. Then there is no $S\in\Delta(G)$ with
    $S\in \mathrm{FP}(\iota^{-1}(K), \mathcal{T})$ that is a prefix of $\tau$.
  \end{claim}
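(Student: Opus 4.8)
I would argue by contradiction: suppose some $S\in\mathrm{FP}(\iota^{-1}(K),\mathcal{T})$ is a prefix of $\tau$, and aim to exhibit a maximal clique $K'$ that also begins some topological ordering of $\alpha$ but satisfies $K'\prec_\alpha K$, contradicting the minimality of $K$. First I would locate $S$ in the clique tree. By definition $S=\iota(x_i)\cap\iota(x_{i+1})$ for an edge $\{x_i,x_{i+1}\}$ on the root-to-$\iota^{-1}(K)$ path, so $S$ is a minimal separator, and since both endpoints contain $S$ the edge lies inside the connected subtree $\iota^{-1}(S)$; hence $x_i$ and $x_{i+1}$ (equivalently $\iota(x_i)$ and $\iota(x_{i+1})$) lie in \emph{distinct} $S$-flowers $F_i\ni x_i$, $F_{i+1}\ni x_{i+1}$ by Lemmas~\ref{lemma:connflowers} and~\ref{lemma:bouquetpart}. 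A running-intersection argument along the segment $x_i,x_{i+1},\dots,\iota^{-1}(K)$ gives $\iota(x_i)\cap K=S$ (so $S\subsetneq K$), and shows that the root-most node of the $S$-flower $F_x$ containing $K$ lies on $x_{i+1},\dots,\iota^{-1}(K)$; therefore the path from $F_x$ to the root passes through $x_i\in F_i$, i.e.\ $F_i\prec_T F_x$ (and in particular $F_i\neq F_x$).

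Next I would build $K'$ inside $F_i$. Cutting the clique tree at the edge $\{x_i,x_{i+1}\}$ partitions $V$ into $V_r$ (the root side, containing $\iota(x_i)$) and $V_x$ (the other side, containing $K$), with $V_r\cap V_x=S$ and $S$ separating $V_r\setminus S$ from $V_x\setminus S$ in $G$. Because $S$ is a prefix of $\tau$, every vertex of $S$ has all its $\alpha$-parents in $S$, all $\alpha$-edges leaving $S$ point outward, and there is no $\alpha$-edge between $V_r\setminus S$ and $V_x\setminus S$; so, writing $D^*$ for the connected component of $G[V\setminus S]$ that contains $\iota(x_i)\setminus S$ (note $D^*\subseteq V_r\setminus S$), I can reorder $\tau$ into $(S,\tau|_{D^*},\dots)$, which is still a topological ordering of $\alpha$. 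Its prefix $(S,\tau|_{D^*})$ is a topological ordering, beginning with the clique $S$, of the AMO $\alpha|_{S\cup D^*}$ of the UCCG $G[S\cup D^*]$.

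I would then use the following auxiliary fact: \emph{every AMO of a UCCG having a topological ordering that starts with a clique $C$ also has one that starts with a maximal clique $C'\supseteq C$}. This follows from a ``forced'' LBFS in the spirit of Lemmas~\ref{lemma:allextlbfs} and~\ref{lemma:lbfsbegmc}: run the LBFS so that it both represents the given AMO and, while vertices of $C$ remain unvisited, always picks the $\alpha$-smallest unvisited vertex of $C$ — possible because such a vertex is simultaneously in the first LBFS set and has no unvisited $\alpha$-parent — after which (Lemma~\ref{lemma:lbfsbegmc}) the ordering completes a maximal clique containing $C$. Applied to $\alpha|_{S\cup D^*}$ with $C=S$, this yields a maximal clique $K'$ of $G[S\cup D^*]$ with $S\subseteq K'$ that begins a topological ordering of $\alpha|_{S\cup D^*}$; one checks that $K'$ is maximal in $G$ and that $K'\setminus S\subseteq D^*$, hence $K'\in F_i$, and appending the remaining part of $\tau$ gives a topological ordering of $\alpha$ starting with $K'$.

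Finally, $K'\subseteq V_r$, $K\subseteq V_x$, and $S\subseteq K'$ force $K'\cap K=S\in\Delta(G)$; together with $K'\in F_i$, $K\in F_x$, and $F_i\prec_T F_x$ this gives $K'\prec_\alpha K$, the contradiction we wanted. I expect the main obstacle to be the clique-tree and flower bookkeeping of the first step — in particular establishing that the two endpoints of the $S$-labelled edge belong to different flowers, that $\iota(x_i)\cap K=S$, and above all that $F_i$ genuinely precedes $F_x$ in $\prec_T$ rather than being $\prec_T$-incomparable — and, secondarily, verifying that the forced LBFS of the third step can respect both of its constraints simultaneously.
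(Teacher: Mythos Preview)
Your proposal is correct and follows the same contradiction skeleton as the paper: exhibit an $S$-flower $F'$ with $F'\prec_T F_K$ (where $F_K$ is the $S$-flower of $K$), find a maximal clique $K'\in F'$ that begins some $\tau'\in\mathrm{top}(\alpha)$, and conclude $K'\prec_\alpha K$. The paper's proof is much shorter only because your steps~4--6 (cutting the tree at the $S$-edge, reordering $\tau$ to $(S,\tau|_{D^*},\dots)$, and the forced-LBFS ``extend a clique prefix to a maximal-clique prefix'' fact) are precisely the content of Lemma~\ref{lemma:BSeveryAMO}, which the paper invokes in one line, and your flower bookkeeping in steps~1--3 is compressed to the single remark ``since $S\in\mathrm{FP}(\iota^{-1}(K),\mathcal{T})$, there is another flower $F'\in\bouquet(S)$ with $F'\prec_T F$.''
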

  \begin{proof}
    Assume for a contradiction that there would be such a
    $S\in\Delta(G)$ and let $F\in\bouquet(S)$ be the $S$-flower
    containing~$K$. Since $S\in \mathrm{FP}(\iota^{-1}(K), \mathcal{T})$, there is another flower
    $F'\in\bouquet(S)$ with $F'\prec_T
    F$. Lemma~\ref{lemma:BSeveryAMO} tells us that there is another
    clique $K'\in F'$ that is at the beginning of some
    $\tau'\in\mathrm{top}(\alpha)$. However, then we have
    $K'\prec_{\alpha}K$~--~contradicting the minimality of $K$.
  \end{proof}
  \begin{claim}
    Let $\tau_1,\tau_2\in\mathrm{top}(\alpha)$ be two topological orderings
    starting with $K_1,K_2\in\cliques(G)$, respectively. If
    $K_1\prec_{\alpha} K_2$ then $K_1\cap K_2=S\in
    \mathrm{FP}(\iota^{-1}(K_2), \mathcal{T})$.
  \end{claim}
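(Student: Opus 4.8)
The plan is to exhibit, on the unique $r$-$\iota^{-1}(K_2)$-path in $T$, a single edge whose associated minimal separator is exactly $S=K_1\cap K_2$; together with the trivial observation $S=K_1\cap K_2\subseteq K_2=\iota(\iota^{-1}(K_2))$, this is precisely the statement $S\in\mathrm{FP}(\iota^{-1}(K_2),\mathcal{T})$. Unfolding the hypothesis $K_1\prec_{\alpha}K_2$ using the definition of $\prec_{\alpha}$, we know that $S\in\separators(G)$, that $K_1$ lies in a flower $F_1\in\bouquet(S)$ and $K_2$ in a flower $F_2\in\bouquet(S)$ with $F_1\neq F_2$, and that $F_1\prec_T F_2$.

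First I would fix the node $w:=\iota^{-1}(K_2)$ and, since $F_2$ is a connected subtree by Lemma~\ref{lemma:connflowers}, the node $f_2\in F_2$ closest to the root $r$; note that the $r$-$w$-path passes through $f_2$. By the definition of $\prec_T$, the relation $F_1\prec_T F_2$ yields a node $g\in F_1$ on the path from $f_2$ to $r$; since $F_1\neq F_2$ and distinct flowers are disjoint (Lemma~\ref{lemma:bouquetpart}), $g\neq f_2$, so $g$ is a strict ancestor of $f_2$ and $f_2$ has a parent $p$ lying on the $f_2$-$g$-path. Because $f_2$ is the top node of $F_2$, $p\notin F_2$; because $f_2,g\in\iota^{-1}(S)$ and $\iota^{-1}(S)$ is a connected subtree of $T$, the entire $f_2$-$g$-path — in particular $p$ — lies inside $\iota^{-1}(S)$. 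Since $\bouquet(S)$ partitions $\iota^{-1}(S)$, we get $p\in\iota^{-1}(S)\setminus F_2$, i.e.\ $p$ lies in a flower distinct from $F_2$, and the edge $p-f_2$ lies on the $r$-$w$-path.

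It remains to argue that the label of the edge $p-f_2$ equals $S$. Since $p,f_2\in\iota^{-1}(S)$ we have $S\subseteq\iota(p)\cap\iota(f_2)$. Suppose for contradiction the inclusion is strict and pick $v\in(\iota(p)\cap\iota(f_2))\setminus S$. Then $\iota(p)\setminus S$ and $\iota(f_2)\setminus S$ are cliques in $G[V\setminus S]$ (hence connected) both containing $v$, so $(\iota(p)\cup\iota(f_2))\setminus S$ is connected in $G[V\setminus S]$; by the maximality in the definition of an $S$-flower, $\iota(p)$ and $\iota(f_2)$ would then lie in the same flower, forcing $p\in F_2$ — contradicting $p\notin F_2$. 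Hence $\iota(p)\cap\iota(f_2)=S$, so $S$ occurs as the label of an edge on the $r$-$w$-path, and with $S\subseteq\iota(w)$ this gives $S\in\mathrm{FP}(\iota^{-1}(K_2),\mathcal{T})$, as required.

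The main obstacle I anticipate is the middle paragraph: pinning $p$ down to $\iota^{-1}(S)\setminus F_2$. One must be careful that the $r$-$w$-path genuinely crosses a flower boundary while staying inside the subtree $\iota^{-1}(S)$ — if the path left $\iota^{-1}(S)$ immediately above $f_2$, the desired edge would not be present there. The relation $F_1\prec_T F_2$ is exactly what rules this out: it forces the path from $f_2$ towards the root to revisit $\iota^{-1}(S)$ at the $F_1$-node $g$, and since $\iota^{-1}(S)$ is a subtree it could never have left it in between, which traps $p$ inside $\iota^{-1}(S)\setminus F_2$. Everything else is the routine maximality argument for $S$-flowers.
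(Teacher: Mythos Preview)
Your proof is correct and follows essentially the same approach as the paper: locate an edge on the $r$-$\iota^{-1}(K_2)$-path that crosses between two distinct $S$-flowers inside the connected subtree $\iota^{-1}(S)$, and then argue via the maximality of $S$-flowers that the label of this edge is exactly $S$. The only cosmetic difference is which boundary edge is chosen---you take the edge just above the top node $f_2$ of $F_2$, whereas the paper takes the first edge on the path that enters $F_1$; both choices work for the same reason, and your explicit maximality argument for $\iota(p)\cap\iota(f_2)=S$ is in fact more detailed than the paper's one-line appeal to the definition and Lemma~\ref{lemma:bouquetpart}.
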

  \begin{proof}
    Since $K_1$ and $K_2$ correspond to $\tau_1,\tau_2\in\mathrm{top}(\alpha)$,
    we have $K_1\cap K_2=S\in\cliques(G)\cup\separators(G)$ by
    Lemma~\ref{lemma:minimalSeparator}~--~in fact, $S$ is a prefix of
    $\tau_1$ and $\tau_2$. As we assume $K_1\prec_{\alpha} K_2$, we have
    $K_1\neq K_2$ and, thus, $S\in\Delta(G)$. Let
    $F_1,F_2\in\bouquet(S)$ be the $S$-flowers containing $K_1$ and
    $K_2$, respectively. The order $K_1\prec_{\alpha}K_2$ implies
    $F_1\prec_T F_2$ (item (iii) in the definition of~$\prec_{\alpha}$), meaning that $F_1$ contains some node of
    $(T,r,\iota)$ that is on the unique path from $F_2$ to the root of
    $T$. But then, by the definition of $S$-flowers and
    Lemma~\ref{lemma:bouquetpart}, the first edge on this path that
    leads to a node in $F_1$ connects two nodes $x,y$ with
    $\iota(x)\cap\iota(y)=S$. Hence, $S\in \mathrm{FP}(\iota^{-1}(K_2), \mathcal{T})$.
  \end{proof}
  This completes the proof of Proposition~\ref{proposition:fpFormula}.
\end{proof}

\begin{claim theorem}
  For an input UCCG $G$, Algorithm~\ref{alg:cliquepicking} returns
  the number of AMOs of $G$.
\end{claim theorem}
\begin{proof}
  Observe that recursive calls are performed in
  line~\ref{line:mult} if  $\chordalcomps_G(\iota(v))\neq\emptyset$. The only graphs with
  $\chordalcomps_G(S)=\emptyset$ for all $S\in\cliques(G)$ are the
  complete graphs, i.\,e., the graphs with $|\Pi(G)|=1$.
  We have $|\cliques(H)|<|\cliques(G)|$ for all graphs $G$ and
  $H=G[V\setminus S]$ with $S\in\cliques(G)$. Hence, we may assume by
  induction over $|\cliques(G)|$ that the subproblems are handled
  correctly~--~the base case being given by complete graphs.

  The correctness of the algorithm follows from
  Proposition~\ref{proposition:fpFormula}, as it traverses the clique
  tree with a BFS in order to compute the sets
  $\mathrm{FP}(v, \mathcal{T})$ and evaluate this formula. 
\end{proof}

\section*{Proofs of Section 6:\\ The Complexity of \#\kern-0.5ptAMO}
Recall that for $H \in \chordalcomps_G(\pi(K))$, we defined
$P_{\text{out}}(H)$ to be the set of preceding neighbors of the
vertices of $H$ at state $\mathcal{X}_{\text{out}}$ when $H$ was
output during Algorithm~\ref{alg:findsubproblems}. Now, we consider $H \in
\chordalcomps_G(K)$. We use the same notation to refer to the preceding neighbors in the
analogously defined state in Algorithm~\ref{alg:lbfs}. In fact, as
$P_{\text{out}}(H)$ is independent of the permutation $\pi(K)$ (Proposition~\ref{prop:cliqueid}), it will
be the same set.

We denote the previously visited vertices at
$\mathcal{X}_{\text{out}}$, which are not in $P_{\text{out}}(H)$, by $W$, i.\,e.,
$W = V(\mathcal{X}_{\text{out}}) \setminus P_{\text{out}}(H)$.

\begin{new lemma}
  \label{lemma:spminsep}
  Let $G$ be a chordal graph, $K \in \cliques(G)$, and $H \in
  \chordalcomps_G(K)$. Then, $P_{\text{out}}(H)$ separates~$V_H$ from $W =
  V(\mathcal{X}_{\text{out}}) \setminus P_{\text{out}}(H)$ and is a
  minimal separator of $G$.
\end{new lemma}

\begin{proof}
  The set $P_{\text{out}}(H)$ is a proper subset of all previously
  visited vertices ($V_H$ is not part of the \emph{maximal} clique $K$ Algorithm~\ref{alg:lbfs}
  starts with). Since $P_{\text{out}}(H)$ contains all visited
  neighbors of $V_H$, it separates $V_H$ from $W$. To see this, assume for
  sake of contradiction that there is a path from $v \in V_H$ to $w \in
  W$  in $G$ without a vertex in $P_{\text{out}}(H)$. Consider the shortest such path and let
  $y$ be the first vertex with successor $z$ preceding it in
  the LBFS ordering produced by Algorithm~\ref{alg:lbfs}: $v - \dots - x - y - z - \dots - w$. Then
  $\{x,z\}\in E_G$, as  the LBFS computes a PEO. Hence, the path is
  not the shortest path and, thus, $y$ cannot
  exist. Since there can be no direct edge from $v$ to $w$, the set
  $P_{\text{out}}$ is indeed a separator.

  We prove that there is a vertex in $W$, which is a neighbor of
  all vertices in $P_{\text{out}}(H)$. Consider the vertex in
  $P_{\text{out}}(H)$, which is visited last (denoted by $p$). When
  vertex $p$ is processed, it has to have a neighbor $x \in W$, which was
  previously visited, else $p$ would be part of $H$. This is because the
  preceding neighbors would be identical to the ones of the vertices
  in $H$ (i.\,e., $\mathcal{P}_{\text{out}}(H) \setminus \{p\}$),
  meaning by Lemma~\ref{lemma:samepred} that $p$ would be in the same
  set in $\mathcal{S}$. It would follow that
  either $p$ and the vertices in $H$ are appended to $L$ when $p$ is
  visited or were already appended to $L$ previously. In both cases, $p$ would
  be in $V_H$, which is a contradiction.

  Hence, such a vertex $x$ has to exist. Moreover, $x$
  has to be connected to all vertices in $P_{\text{out}}(H)$ because
  of the PEO property (all preceding neighbors of a vertex form a
  clique).

  From the first part of the proof, we know that $x$ and $y \in H$ are
  separated by $P_{\text{out}}(H)$. As both $x$ and $y$ are fully
  connected to $P_{\text{out}}(H)$, it follows that this set is also a \emph{minimal} $x-y$ separator.

\end{proof}

\begin{new lemma} \label{lemma:flspbijection}
  Let $G$ be a chordal graph for which the number of AMOs is computed
  with the function $\texttt{count}$ in
  Algorithm~\ref{alg:cliquepicking}. Let $H$ be any chordal graph for
  which \texttt{count} is called in the recursion (for $H \neq G$).
  Then $V_H = F \setminus S$ for some $S$-flower $F$ in $G$ with $S \in \separators(G)$.
\end{new lemma}

\begin{proof}
  Let $S_H$ be the union of all sets $P_{\text{out}}(\tilde{G})$ for
  $\tilde{G}$ on the recursive call stack from the input graph $G$ to
  currently considered subgraph $H$. We
  define $P_{\text{out}}(G) = \emptyset$ for convenience.

  Let $H\neq G$, we show by induction that (i) $S_H$ is a minimal
  separator in $G$, (ii) $S_H$ is fully connected to $V_H$ in $G$, and 
  (iii)~$V_H = F \setminus S_H$ for some $S_H$-flower $F$.

  In the base case, $H \in \chordalcomps_G(K)$ for some clique $K \in \cliques(G)$.  By Lemma~\ref{lemma:spminsep},
  $S_H$ is a minimal separator in $G$, which is by definition
  connected to all vertices in $H$. Hence, as $H$ is connected, $V_H
  \subseteq F \setminus S_H$ holds for an $S_H$-flower $F$. We show the equality by
  contradiction. Assume there is a vertex $v \in F \setminus S$ but
  not in $V_H$. Then $v$ can neither be a vertex in $W$ nor the neighbor
  of a vertex in $W$, as by the definition of
  flowers there has to be a path from $v$ to $V_H$ in $G[V_G
  \setminus S_H]$~--~this would violate that $V_H$ is separated from
  $W$ by $S_H$ (Lemma~\ref{lemma:spminsep}). Moreover, $v$ is a neighbor of all vertices in $S_H$.
  Hence, we have $P_{\text{out}}(v) = P_{\text{out}}(H) = S_H$ and $v \in V_H$. A
  contradiction.

  Assume \texttt{count} is called with a graph $H \in
  \chordalcomps_{G'}(K)$ for some graph $G'$ and $K\in\cliques(G')$. By induction hypothesis,
  we have that $S_{G'}$ is a minimal separator in $G$ and fully
  connected to $V_{G'}$. Moreover, $V_{G'} = F' \setminus S_{G'}$ for some
  $F'$-flower of $S_{G'}$. Now, $P_{\text{out}}(H)$ is by
  Lemma~\ref{lemma:spminsep} a minimal separator in $G'$ for some vertices $x$
  and $y$. As $x$ and $y$ are connected to every vertex in $S_{G'}$,
  it follows that $S_H = S_{G'} \cup P_{\text{out}}(H)$ is a minimal
  $x$-$y$ separator in $G$. Furthermore, $S_H$ is fully connected to
  $V_H$ in $G$ and it can be easily seen that $V_H \subseteq F \setminus S_H$. To
  show equality, observe that every vertex $v$ in $F \setminus S_H$
  is in $V_{G'}$ (if it is not separated from $V_H$ by $S_H$ in $G$, it is
  clearly not separated from $V_H$ by $S_{G'}$ in $G$, hence $v$ is in $F' \setminus S_{G'} = V_{G'}$). Thus, the same argument
  as in the base case applies and the statement follows.
\end{proof}

\begin{claim proposition}
  Let $G$ be a UCCG. The number of distinct UCCGs explored by \texttt{count} is bounded by $2|\cliques(G)|-1$. 
\end{claim proposition}

\begin{proof}
  By Lemma~\ref{lemma:flspbijection}, it remains to bound the number
  of flowers in $G$. Each flower is associated with a minimal
  separator $S$ and there are at most $|\cliques(G)|-1$ such
  separators, as they are associated with the edges of the clique
  tree~\cite{Blair1993}.  Let $r$ (which is initially
  $|\cliques(G)|-1$) be an upper bound for the number of remaining
  separators. Now consider separator $S$. If $\bouquet(S)$ has $k$
  flowers, $S$ can be found on at least $k-1$ edges of the clique
  tree, namely the edges between the flowers (by
  Proposition~\ref{lemma:bouquetpart} the flowers partition the
  bouquet and by definition of flowers, the intersection of cliques
  from two $S$-flowers has to be a subset of $S$). Thus, we have at
  most $r - (k-1)$ remaining separators. The maximum number of flowers
  is obtained when the quotient $k / (k-1)$ is maximal. This is the
  case for $k = 2$. It follows that there are at most
  $2(|\cliques(G)| - 1)$ flowers.

  When bounding the number of explored UCCGs, we additionally take
  into account the input graph and obtain as bound
  $2(|\cliques(G)| - 1) + 1 = 2|\cliques(G)| - 1$.
\end{proof}

\begin{claim theorem}
  The Clique-Picking algorithm runs in time
  $\mathcal{O}\big(\,|\cliques(G)|^2 \cdot (|V| + |E|)\,\big)$.
\end{claim theorem}

\begin{proof}
  By Proposition~\ref{prop:subpbound}, \texttt{count} explores
  $\mathcal{O}(|\cliques(G)|)$ distinct UCCGs. For each of them, the clique tree is computed in
  time $\mathcal{O}(|V| + |E|)$. Afterwards, for each maximal clique,
  the subproblems are computed by Algorithm~\ref{alg:lbfs} in time
  $\mathcal{O}(|V| + |E|)$ by Theorem~\ref{thm:lintimesubp}.

  For the computation of $\phi(S,\mathrm{FP}(v,\mathcal{T}))$, note
  that $\mathrm{FP}$ can be
  computed straightforwardly: Traverse the clique tree with a
  BFS, keep track of the nodes on the path from root $r$ to any
  visited node, compute $\mathrm{FP}$ with its definition.

  The function $\phi$ can be evaluated with the formula from
  Lemma~\ref{lemma:efficientPhi}. There are $\mathcal{O}(|S|)$
  subproblems and for each a sum over $\mathcal{O}(|S|)$ terms has to
  be computed (as $l$ is always smaller than $|S|$). Because $S$ is a
  clique, the effort is in $\mathcal{O}(|E|)$. 
\end{proof}

\begin{algorithm}
    \caption{The algorithm returns a uniform permutation of clique $K$ under the constraint that the permutation does not start with a set in $\mathrm{FP}$.}
  \label{alg:drawperm}
  \SetKwInOut{Input}{input}\SetKwInOut{Output}{output}
  \DontPrintSemicolon
  \Input{Clique $K$, list $\mathrm{FP} = (X_1, X_2, \dots, X_\ell)$ with
  $X_1 \subsetneq X_2 \subsetneq \dots \subsetneq X_\ell$.}
  \Output{Permutation $\pi$.}
  \SetKwFunction{FPerm}{drawperm}
  \SetKwFunction{FVertex}{drawvertex}
  \SetKwFunction{FConcat}{concat}
  \SetKwProg{Fn}{function}{}{end}
  \Fn{\FPerm{$K$, $\mathrm{FP}$, $\mathrm{memo}$}}{
    \ForEach{$v \in K$}{
      \uIf{$v$ in $X_\ell$}{
        Let $i$ be smallest such that $v \in X_i$. \;
        $\mathrm{wt}(v) \gets \phi(K \setminus v, \{X_i \setminus v, \dots, X_\ell \setminus v\})$ \;
      }
      \Else{
        $\mathrm{wt}(v) \gets \phi(K \setminus v, \emptyset)$ \;
      }
    }
    $v \leftarrow \FVertex(\mathrm{wt})$ \;
    \uIf{$v$ in $X_\ell$}{
      Let $i$ be smallest such that $v \in X_i$. \;
      $\tilde{\mathrm{FP}} \gets \{X_i \setminus v, \dots, X_\ell \setminus v\}$ \;
    }
    \Else{
      $\tilde{\mathrm{FP}} \gets \emptyset$ \;
    }
    \uIf{$|K \setminus v| = 0$}{
      \KwRet $(v)$ \;
    }
    \Else{
      \KwRet $\FConcat(v, \FPerm(K \setminus v, \tilde{\mathrm{FP}}))$ \; 
    }
  }
\end{algorithm} 

With $\spe(K, \mathrm{FP})$ we denote the set of permutations
of clique $K$ without a prefix in $\mathrm{FP}$ and with $\pe(K,
\mathrm{FP})$ its size (which coincides with $\phi(K, \mathrm{FP})$). 

\begin{new lemma} \label{lemma:drawpermcorrect}
  Algorithm~\ref{alg:drawperm} samples a permutation $\pi \in \spe(K,
  \mathrm{FP})$ uniformly at random.
\end{new lemma}

\begin{proof}
  We show this by induction over the size of the clique~$K$. In the
  base case, we have a singleton $K = \{v\}$ and $\pi = (v)$ will be chosen with
  probability $1 = 1 / \pe(K, \mathrm{FP})$.

  Let $K$ be a clique with $|K| > 1$. Then the algorithm will
  choose a vertex $v$ and, recursively, find a uniform
  permutation for the remaining clique
  $\tilde{K} = K \setminus v$ (regarding new forbidden prefixes $\tilde{\mathrm{FP}}$). This 
  $\tilde{\mathrm{FP}}$ contains only sets $X_i, \dots, X_\ell\in\mathrm{FP}$ 
  that contain $v$, as the other forbidden prefixes cannot occur with $v$
  being picked first. Hence,
  \[
    \tilde{\mathrm{FP}} = \{X_i \setminus v, \dots, X_\ell \setminus v\}
  \]
  with $i$ being the smallest index such that $v \in X_i$.
  For the probability $\mathrm{Pr}(v_\pi \; | \; K, \mathrm{FP})$ that the first vertex $v$ is chosen
  according to a permutation $\pi$ (given some $K$ and $\mathrm{FP}$), it holds that:
  \[
    \mathrm{Pr}(v_\pi \; | \; K, \mathrm{FP}) = \frac{\pe(K \setminus v_\pi,
      \tilde{\mathrm{FP}})}{\pe(K, \mathrm{FP})}.
  \]
  This is due to the fact that \texttt{drawvertex} samples $v$
  proportional to the weights $\mathrm{wt}(v)$, which is the number
  of permutations without forbidden prefix in $\tilde{\mathrm{FP}}$
  that start with $v$.

  Now, we want to compute $\mathrm{Pr}(\pi \; | \; K, \mathrm{FP})$,
  i.\,e., the probability that the vertices in $K$ with forbidden
  prefixes $\mathrm{FP}$ are permuted according to (some not
  forbidden) $\pi$. For this, we make use of the induction hypothesis:
  \[
    \mathrm{Pr}(\tilde{\pi} \; | \; K \setminus v_\pi, \tilde{\mathrm{FP}}) = \frac{1}{\pe(K \setminus v_\pi, \tilde{\mathrm{FP}})}
  \]
  with $\tilde{\pi}$ being the remaining part of permutation $\pi$
  after removing $v_\pi$.  Note in particular that $\tilde{\mathrm{FP}}$ is
  valid because
  \[
    X_i \setminus v_\pi \subset X_{i+1} \setminus v_\pi \dots \subset X_\ell \setminus v_\pi
  \]
  and hence the induction hypothesis applies. We conclude:
  \begin{align*}
\mathrm{Pr}(\pi \; | \; K, \mathrm{FP}) &= \mathrm{Pr}(v_{\pi} \; | \; K, \mathrm{FP}) \cdot
                               \mathrm{Pr}(\tilde{\pi} \; | \; K \setminus v_\pi,
                               \tilde{\mathrm{FP}}) \\
              &= \frac{\pe(K \setminus v_\pi, \tilde{\mathrm{FP}})}{\pe(K, \mathrm{FP})
                \cdot \pe(K \setminus v_\pi, \tilde{\mathrm{FP}})} \\
              &= \frac{1}{\pe(K, \mathrm{FP})}.\qedhere
  \end{align*}
\end{proof}

\begin{algorithm}
  \caption{The algorithm uniformly samples an AMO from a UCCG $G$.
   A modified version of Clique-Picking (\texttt{precount}) has to be
   executed on $G$ in advance.}
  \label{alg:sampling}
  \SetKwInOut{Input}{input}\SetKwInOut{Output}{output}
  \DontPrintSemicolon
  \Input{A UCCG $G$.}
  \Output{AMO of $G$.}
  \SetKwFunction{FSample}{sample}
  \SetKwFunction{FClique}{drawclique}
  \SetKwFunction{FPerm}{drawperm}
  \SetKwFunction{FCount}{precount}
  \SetKwFunction{FConcat}{concat}
  \SetKwProg{Fn}{function}{}{end}
  \Fn{\FSample{$G$, $\mathrm{memo}$}}{
    $(K, \mathrm{FP}, \chordalcomps_G(K)) \leftarrow \FClique(\mathrm{memo}[G])$ \;
    $\tau \leftarrow \FPerm(K, \mathrm{FP}, \mathrm{memo})$ \;
    \ForEach{$H \in \chordalcomps_G(K)$}{
      $\tau \leftarrow \FConcat(\tau, \FSample(H, \mathrm{memo}))$ \;
    }
    \KwRet $\tau$ \;
  }
  \;
  $\mathrm{memo} \leftarrow \FCount(G)$ \;
  $\tau \leftarrow \FSample(G, \mathrm{memo})$ \;
  \KwRet AMO of $G$ given by $\tau$ \; 
\end{algorithm}

\begin{claim theorem} 
  There is an algorithm that, given a connected chordal graph $G$, uniformly
  samples AMOs of $G$ in time $\mathcal{O}(|V|+|E|)$ after an initial
  $\mathcal{O}(\cliques(G)^2\cdot |V| \cdot |E|)$ setup.  
\end{claim theorem}

\begin{proof}
  We first prove that Algorithm~\ref{alg:sampling} samples uniformly
  and will take care of the run time afterwards.

  Denote with $\mathrm{Pr}(\tau_{\alpha}(G))$ the
  probability that we draw a topological ordering $\tau$ of the vertices in
  $G$ that represents $\alpha$. We show the theorem by induction, similarly to the proof of
  Theorem~\ref{theorem:cliquepicking}. For the base case of a single clique $K$,
  the algorithm returns a uniformly sampled AMO, as $\mathrm{FP}$ is empty and \texttt{drawperm}
  returns a uniformly sampled permutation. It follows that for each
  AMO $\alpha$:
  \[
    \mathrm{Pr}(\tau_{\alpha}(G)) = 1 / |K|! = 1 / \hamo(G).
  \]
  For an arbitrary graph $G$, let $K_{\alpha}$ be
  the unique clique at which $\alpha$ is considered (such a clique
  exists by the proof of Proposition~\ref{proposition:fpFormula}) with
  forbidden prefixes~$\mathrm{FP}$,
  and $\pi_{\alpha}$ the
  corresponding permutation of $K_\alpha$.
  The term $\mathrm{Pr}(K_{\alpha} \; | \; \mathrm{FP})$  describes the probability that $K_{\alpha}$ is drawn: 
  \begin{align*}
    &\relphantom{=}{} \mathrm{Pr}(\tau_{\alpha}(G)) \\
    &= \mathrm{Pr}(K_{\alpha} \; | \; \mathrm{FP}) \mathrm{Pr}(\pi_{\alpha} \; | \; K_{\alpha}, \mathrm{FP}) \prod_{H \in
      \chordalcomps_G(K_\alpha)} \mathrm{Pr}(\tau_{\alpha[H]}(H)) \\
    &= \frac{\pe(K_{\alpha}, \mathrm{FP}) \prod_{H \in \chordalcomps_G(K_{\alpha})} \hamo(H)
      }{\hamo(G) \cdot \pe(K_{\alpha},
      \mathrm{FP}) \prod_{H \in
      \chordalcomps_G(K_\alpha)}\hamo(H)} \\
    &= \frac{1}{\hamo(G)}.
  \end{align*}
For the second step, we make use of the following facts. By induction
hypothesis we have
\[
  \prod_{H \in \chordalcomps_G(K_\alpha)} \mathrm{Pr}(\tau_{\alpha[H]}(H)) = \frac{1}{\prod_{H \in
      \chordalcomps_G(K_\alpha)} \hamo(H)};
\]
and
\[
  \mathrm{Pr}(K_{\alpha} \; | \; \mathrm{FP}) = \frac{\pe(K_\alpha, \mathrm{FP}) \cdot \prod_{H \in \chordalcomps_G(K_{\alpha})} \hamo(H)}{\hamo(G)}
\]
as $K_{\alpha}$ is picked with probability proportional to the number of
AMOs that are considered at clique $K_{\alpha}$ (function \texttt{drawclique} in
Algorithm~\ref{alg:sampling}); and by Lemma~\ref{lemma:drawpermcorrect}:
\[
  \mathrm{Pr}(\pi_{\alpha} \; | \; K_{\alpha}, \mathrm{FP}) = \frac{1}{\pe(K_\alpha, \mathrm{FP})}.
\] 

We discuss the run time of Algorithm~\ref{alg:sampling}. Assuming
a sampling generator has been initialized during \texttt{precount},
the function \texttt{drawclique}
can be performed in $\mathcal{O}(1)$ with the Alias
Method~\cite{Vose1991}. Assuming, furthermore, pointers to the
subproblems $H \in \mathcal{C}_G(K)$, there is no additional effort in
obtaining those. Since only $\mathcal{O}(|V|)$ subproblems are considered,
these steps take $\mathcal{O}(|V|)$ time in total.

Hence, \texttt{drawperm} is the most expensive step and it remains to
bound its cost. For each place in the permutation~$\pi$, which is
build, the algorithm goes once through all vertices $v$ and assigns
the values $\mathrm{wt}$. We assume that the values for $\phi$ are
precomputed. This is feasible as $\phi$ only depends on two parameters: the size
of the current $K$ and the index $i$ of the remaining forbidden
prefixes $X_i, \dots, X_\ell$. Assuming such precomputations,
\texttt{drawperm} can be implemented in time $\mathcal{O}(|K|^2)$ for a
clique $K$. In total, the
run time is bounded by $\mathcal{O}(|V| + |E|)$ (each vertex is placed once in a
permutation~$\pi$ and the cost are bounded by $|K|$, which is linear
in the number of neighbors of this vertex).

For the precomputation of $\phi$, it is possible to calculate all $\phi$-values in
time $\mathcal{O}(|V| \cdot |E|)$ by dynamic programming (similarly as
in the proof of Theorem~\ref{thm:runtime}). To see this, note that $\phi$ can also be
expressed as:
\[
  \phi(S,R) = |S|!-\sum_{i=1}^{\ell}|X_i|! \cdot\phi(S \setminus
  X_i,\{X_{i+1} \setminus X_i,\dots,X_\ell \setminus X_i\}).
\]
Hence, we have, for each subproblem and
clique, an effort of $\mathcal{O}(|V| \cdot |E|)$. Since
$\phi$ is the costliest precomputation, we conclude that
preprocessing takes in total:
\[
  \mathcal{O}(|\cliques(G)|^2 \cdot |V| \cdot |E|).\qedhere
\]
\end{proof}

We note that, while preprocessing requires an additional factor $|V|$ compared to
Clique-Picking in theory, we observed that the sets
$\mathrm{FP}$ are usually so small that its influence is
negligible for most practical cases.

\begin{claim theorem}
  The problems $\hamo$ and uniform sampling from a Markov
  equivalence class are in $\P$.
\end{claim theorem}

\begin{proof}
  We begin by proving the statement for $\hamo$. It is
  well-known that this problem
  reduces to counting AMOs of UCCGs~\cite{He2015}.
  This is the problem the Clique-Picking algorithm solves. By
  Theorem~\ref{thm:runtime}, it performs polynomially many arithmetic operations. Since $\hamo(G)$ is bounded above
  by $n!$ (let $n$ be $|V|$), all operations run in polynomial time
  because the involved
  numbers can be represented by polynomially many bits:
  \[
    n! \leq n^n = (2^{\log n})^n = 2^{n \cdot \log n}.
  \]
  
  We analyze the complexity of the uniform sampling problem.
  By Theorem~\ref{theorem:samplingtime}, sampling is possible in linear
  time assuming a modified version of Clique-Picking was performed as
  preprocessing step. This preprocessing can be performed in polynomial
  time. Moreover, the bit complexity of sampling is still polynomial
  (by a similar argument as above).
\end{proof}

\emph{Partially directed acyclic graphs} (PDAGs) are, as the name suggests,
partially directed graphs without directed cycles. \emph{Maximally oriented PDAGs}
(MPDAGs) are PDAGs such that \emph{none} of the four Meek rules~\cite{Meek1995}
can be applied. Both graph types are usually used in graphical
modeling to represent background knowledge.

 \begin{claim theorem}
  The problem of counting the number of AMOs is
  \sharpP-complete for PDAGs and MPDAGs.
\end{claim theorem}
 \begin{proof}
   Note that, clearly, both problems are in \sharpP.
   
   We reduce the \sharpP-hard problem of counting the number of
   topological orderings of a DAG~\cite{brightwell1991counting}
   to counting the number of AMOs of a PDAG.  Since we can
   exhaustively apply all Meek rules in polynomial
   time~\cite{Meek1995} without changing the number of AMOs, this
   implies that the problem of counting AMOs is \sharpP-complete for PDAGs and MPDAGs.
   
   Given a DAG $G = (V,E)$, we construct the PDAG $G'$ as follows: $G'$ has the
   same set of vertices $V$ as $G$ and we add all edges from $G$ to
   $G'$. We insert an undirected edge for all pairs of remaining nonadjacent
   vertices in $G'$.

   Each AMO of $G'$ can be represented by exactly one linear ordering
   of $V$ (because $G'$ is complete) and each topological ordering of
   $G$ is a linear ordering as well.
   We prove in two directions that a linear ordering of $V$ is an AMO of $G'$ if, and only if, it
   is a topological ordering of $G$.
   
   \begin{enumerate}
   \item[$\Rightarrow$)] If a linear ordering $\tau$ represents an AMO
     of $G'$, the edges in $G$ are correctly
     reproduced. Hence, it is a topological ordering of $G$.
   \item[$\Leftarrow$)] If a linear ordering $\tau$ is a topological ordering of $G$,
     the orientation of $G'$ according to it is, by definition, acyclic
     and reproduces the directed edges in $G'$. As $G'$ is complete,
     there can be no v-structures. Hence, $\tau$ represents an AMO of $G'$.\qedhere
   \end{enumerate}
 \end{proof}

  Notably, the reason Clique-Picking cannot be used to solve these
  counting problems can be directly
  connected to the hardness proof. Intuitively, these problems can be
  reduced to the setting that, when counting AMOs in UCCGs,
  some edge orientations in the chordal component are
  predetermined by \emph{background knowledge}. Hence, in
  the Clique-Picking algorithm, when counting the number of
  permutations for a clique $K$, we have to count only those consistent with the
  background knowledge. But this is equivalent to the
  problem of counting the number of topological orderings.

 \clearpage 
 \part{Experimental Evaluation\\ of the Clique-Picking Algorithm}\label{section:experiments}
 We evaluated the practical performance of the Clique-Picking algorithm in a
 series of experiments. For this, we compared Clique-Picking, denoted
 (cp) in the following, with the state-of-the-art root picking
 algorithm \emph{AnonMAO} (am)~\cite{Ganian2020}, a
 solver known as \emph{TreeMAO} that performs dynamic programming on a tree decomposition
 (tw)~\cite{Talvitie2019}, and a recently proposed algorithm, which
 combines concepts from intervention design with dynamic
 programming, called \emph{LazyIter} (li)~\cite{Teshnizi20}. With these three competitors, we cover all
 currently known ways to approach the problem of counting Markov equivalent DAGs.

 We omitted the classic root-picking algorithm~\cite{He2015} as
 well as  MemoMAO~\cite{Talvitie2019}, as AnonMAO is an improved
 version of these algorithms, which has been shown to clearly
 outperform them in previous experiments~\cite{Ganian2020}.

 All our experiments were performed on a desktop computer equipped
 with 14 GB of RAM and an Intel Core i7 X980 with 6 cores of 3.33 Ghz
 each. The system runs Ubuntu 20.04 LTS, 64bit. The Clique-Picking
 algorithm was compiled with \texttt{gcc 9.3.0} using \texttt{g++
   -std=c++11 -O3 -march=native -lgmp -lgmpxx}. The dynamic program
 and the solver using tree decomposition were compiled with the same
 version of \texttt{gcc} and the makefiles provided by the
 authors. For the Python solver LazyIter we used
 \texttt{PyPy 7.3.1} (compiled with \texttt{gcc 9.3.0} for
 \texttt{Python~3.6.9}) in order to obtain a compiled and comparable version.

 \section{Generation of Random Chordal Graphs}
 As input for the solvers, we use various classes of random chordal
 graphs, which we explain in detail in the following. The results of
 our experiments can be found in Fig.~\ref{figure:experiments} and
 Fig.~\ref{figure:experiments2}. Beside the time used by the 
 solvers, we also present the number of maximal cliques as well as the
 density $|E|/\binom{|V|}{2}$ that the graphs of the various random graph
 models have. In Fig.~\ref{figure:cfp}, the reader finds a cumulative plot over all experiments performed.
 
 \subsection{Random Subtree Intersections}
 The first family of random chordal graphs that we study is due to~\citet{SekerHET17}. It is
 based on the characterization of chordal graphs as intersection graph
 of subtrees (take a tree, some subtrees of this tree, and build the
 intersection graph of these subtrees~--~the result is a chordal
 graph). The family is generated by an algorithm that obtains two
 parameters as input: $n$ the number of vertices and $k$, a density
 parameter. In the first phase, a random tree on $n$ vertices is
 generated with the algorithm of~\citet{RodionovC04}. Afterward, $n$ subtrees are sampled randomly with
 the following procedure: Initialize the subtree to some random vertex
 $v$, then grow it by adding a random neighbor of the tree until the
 subtree reaches a size $s$ randomly drawn from
 $\{1,\dots,2k-1\}$. Finally, we take the intersection graph of these
 $n$ subtrees, which is a chordal graph on $n$ vertices.

 We performed three experiments on this graph class. In each
 experiment we choose $n=2^i$ for $i\in\{4,\dots,12\}$ and $k$ as
 either $k=\log n$ (Fig.~\ref{figure:experiments}.A), $k=2\log n$ (Fig.~\ref{figure:experiments}.B), or
 $k=\sqrt{n}$ (Fig.~\ref{figure:experiments}.C). In each case, we generated ten random graphs
 for every value of $n$ and took the mean time used by the
 algorithms. As mentioned above, we also display the number of maximal cliques
 in the graph and the mean density of the graphs, i.\,e.\ $|E| / {|V| \choose 2}$.
 
 Our solver (cp) clearly performs best, solving even the largest
 instances with 4096 vertices in a short amount of time. This also
 holds for denser graphs, e.\,g.\ with $k = \sqrt{n}$. We can observe
 that the run time of (cp) only increases slowly for higher
 densities. The solver (am) can be clearly distinguished as second
 best, however, for larger and even moderately dense graphs the method
 timeouts, i.\,e., graphs with more than 256 vertices are infeasible
 for $k=2\log{n}$ and $k=\sqrt{n}$. The remaining methods (tw) and (li)
 perform comparably poorly and are only able to solve graphs with less
 than 100 vertices. Interestingly, the run time of both algorithms has
 an extremely high variance. For the case of~(tw), this might be
 explained by the fact that the run time depends superexponentially on the
 size of the largest clique (denoted $c$) including a factor $c! \cdot 2^c
 \cdot c^2$.

 \subsection{Random Interval Graphs}
 The next class of random chordal graphs that we consider are \emph{random interval
   graphs.} An interval graph is the intersection graph of a set of
 intervals. It is well-known that interval graphs are a subclass of
 chordal graphs and, thus, they are well-suited as input for our
 experiments. The solver (am) was proven to run in polynomial time on
 interval graphs~\cite{Ganian2020}, which makes for an interesting
 comparison.

 We generate random interval
 graphs with the algorithm by Scheinerman~\shortcite{Scheinerman88}, which
 simply draws~$2n$ random variables in $[0,1]$, pairs these as
 intervals, and build the intersection graph of these intervals.

 As in the previous section, we generated for every value $i\in\{4,\dots,12\}$ ten
 random interval graphs on $2^i$ vertices. We ran all algorithms on
 these instances and built the mean of their runtime for each $i$. The
 results can be found in Fig.~\ref{figure:experiments}.D.

 The instances of this test set are the densest ones that we considered
 during our experiments, with about two thirds of the possible edges
 being present in the graph. We observe that (tw) and (li) are only able to solve the smallest
 instances. The (am) algorithm manages to solve
 instances up to 128 vertices, but timeouts for larger ones. While, as
 mentioned above, the
 run time of (am) is bounded by a polynomial for this class of graphs, the
 degree of the polynomial appears to be too large for practical use. In contrast, the (cp) algorithm manages to
 solve \emph{all} instances, though it has to be noted that it takes significant time (around 10
 minutes) for the largest ones with 4096 vertices. An explanation for this is the higher number
 of maximal cliques compared to the first set of experiments.
 Recall that the number of recursively explored subgraphs 
 is bounded by $2\cdot |\cliques(G)| - 1$. Since the graphs are dense, these
 subgraphs are also relatively large.

 \subsection{Random Perfect Elimination Orderings}
 Another characterization of chordal graphs is based on \emph{perfect
   elimination orderings,} which we have discussed in
 Section~\ref{sec:lbfsmaos}. Recall that a perfect elimination ordering
 of a graph $G=(V,E)$ is a permutation $\pi$ of its vertices such that
 for each $v\in V$ the set $\{\,w\mid \pi(v)<\pi(w)\,\}$ is a
 clique. It is well-known that a graph is chordal if, and only if, it
 has a perfect elimination ordering~\cite{Blair1993}. This leads to a
 simple randomized algorithm to generate chordal graphs: Start with a graph of $n$ isolated
 vertices $v_1,\dots,v_n$; iterate over the vertices in this order and
 for every $v_i$ initialize a set $S=\{\,v_j\mid v_j\in N(v_i)\wedge
 j>i\,\}$. Randomly choose a size $s$ and, while $|S|<s$, add some
 further vertices~$v_j$ with $j>i$ to $S$; finally make $S$ a clique
 and adjacent to $v_i$. This algorithm (as well as other refined
 versions of it) tends to produce quite dense
 graphs~\cite{SekerHET17}. Therefore, we use \emph{constant size} bounds
 for the set $S$ rather than a function that grows with~$n$. More
 precisely, we run the algorithm with a parameter $k$ and let it pick
 for every vertex a size bound $s\in\{k/2,\dots,2k\}$ at random. Note that
 this does \emph{not} imply that the degree is bounded by $2k$.

 We performed the same experiments as in the previous
 sections on this graph class for $k=2$ and $k=4$. The results can
 be found in Fig.~\ref{figure:experiments}.E and
 Fig.~\ref{figure:experiments}.F, respectively.

 The instances generated with this method are quite interesting,
 because they have significantly different properties than, for example,
 the ones generated with the subtree intersection method. In
 particular, there is a large number of maximal cliques. Hence, these
 test sets contain some of the hardest instances for (cp). Still, the
 algorithm outperforms the other methods by a large margin.
 Only for the largest graphs with 4096 vertices and density parameter $k=4$, there are some
 instances that (cp) could not solve within 30 minutes.

 Notably, the (am) algorithm, which again takes the second place,
 performs worse on these instances compared to the ones generated with
 the subtree intersection method, too. For instance in
 Fig.~\ref{figure:experiments}.E the graphs are sparser than in
 Fig.~\ref{figure:experiments}.A, but (am) still takes more time; the
 same holds for Fig.~\ref{figure:experiments}.F and
 Fig.~\ref{figure:experiments}.B. This makes sense, as the number of
 subproblems of (am) was shown to depend on the size of the clique
 tree as well~\cite{Ganian2020}. The other two
 methods are less affected by the increase of maximal
 cliques and perform similar to the other experiments.
 
 \subsection{Random Tree Thickening}\label{section:thickening}
 In this section, we mimic the experiments performed by He, Jia, and
 Yu~\shortcite{He2015}, Talvitie and Koivisto~\shortcite{Talvitie2019}, and Ganian, Hamm, and
 Talvitie~\shortcite{Ganian2020}. \citet{He2015} proposed the following
 way of generating random chordal graphs with $n$ vertices and $kn$
 edges, which we call \emph{random tree thickening}. We start with a
 random tree on $n$ vertices, which is generated by the
 Rodionov-Choo-Algorithm~\cite{RodionovC04}. Then, as long as the
 graph in construction has less than $kn$ edges, we repeatedly pick two
 non-adjacent vertices at random and connect them by an edge if the
 resulting graph is still chordal.

 The advantage of this way of generating random chordal graphs is that
 we have fine control over the density of the generated
 graphs. However, the sketched algorithm is by far the slowest random
 chordal graph generator and becomes infeasible for larger values of
 $n$. We have therefore restricted the experiments in this sections to
 graphs with $n=2^i$  vertices for $i\in\{4,\dots,10\}$. As density
 parameter $k$ we used $k=3$ (Fig.~\ref{figure:experiments2}.A), $k=\log n$
 (Fig.~\ref{figure:experiments2}.B), and $k=\sqrt n$
 (Fig.~\ref{figure:experiments2}.C).

 As the instances generated with this method are comparatively small
 and sparse, they are easy instances for (cp), which solves them all
 in seconds. For the other methods, the same insights as before hold.

 \section{Summary of the Experiments}
 The results of our experiments reveal that the Clique-Picking
 approach is superior to all state-of-the-art strategies for counting
 MAOs of chordal graphs by orders of magnitudes. In particular, the Clique-Picking algorithm
 even outperforms other solvers on instances on which they should
 naturally perform quite strong: It is not only faster on dense, but
 also sparse instances, and while (am) runs in polynomial time on interval graphs,
 it is still much slower than Clique-Picking on these graphs. 

  \begin{figure}
   \input{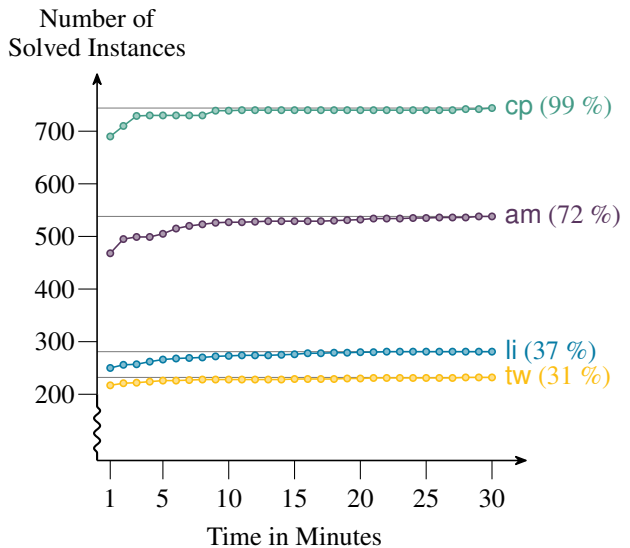}
   \caption{A cumulative distribution function plot that shows the
     performance of the four solvers
     \textcolor{ba.pine}{Clique-Picking (cp)},
     \textcolor{ba.violet}{AnonMAO (am)},
     \textcolor{ba.yellow}{TreeMAO (tw)}, and
     \textcolor{ba.blue}{LazyIter (li)} on all 750 random chordal
     graphs that where used for experiments in this section.  The
     $x$-axis shows the time in minutes, and the $y$-axis the number
     of instances the solver solved (having $x$ minutes per
     instance).}
   \label{figure:cfp}
 \end{figure}
 
 We summarize our experiments in
 Fig.~\ref{figure:cfp}, which contains a \emph{cumulative
   distribution function plot} over all experiments that we
 performed. Overall. we produced 750 random chordal graphs
 for the experiments in this section.

 As seen in the previous sections, the solvers (tw) and (li) take the
 last places in our competition. It has to be noted that (li) even produced
 incorrect answers for a few instances (we double-checked these by
 brute-force enumeration). The (am) solver takes second
 place solving $72\%$ of the instances, struggling in particular with
 larger and denser graphs.

 From the 750 graphs, there were only six
 which the Clique-Picking algorithm could not solve in 30 minutes
 (these were dense graphs generated by the random elimination
 ordering algorithm with $k=4$, which have a relatively large number of maximal
 cliques). Moreover, (cp) was also extremely fast in solving most of
 the instances, almost 700 of them were solved in one minute or less (most of them in a few
 seconds), while the second-best method (am) solved significantly less instances even in the
 whole 30 minutes. This shows that the Clique-Picking algorithm is not only of
 theoretical value, but currently the by far most practical algorithm for
 counting Markov equivalent DAGs.

 \begin{figure*}
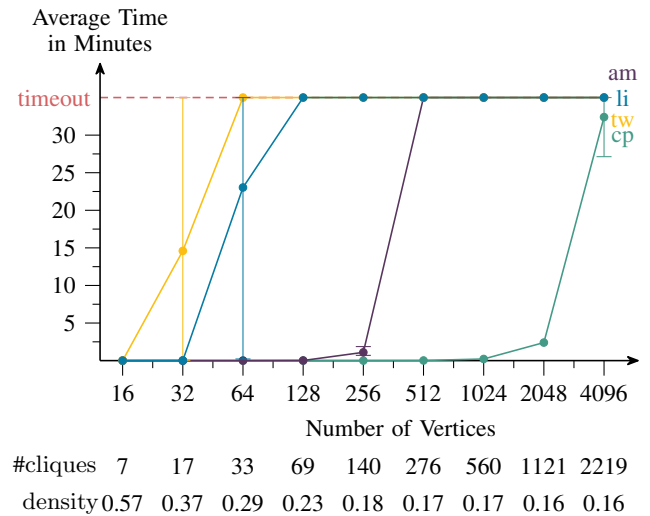

   \input{img/subtree-logn.tikz}\quad\input{img/subtree-2logn.tikz}\\[0.25cm]
   \input{img/subtree-sqrtn.tikz}\quad\input{img/interval.tikz}\\[0.25cm]
   \input{img/peo-2.tikz}\quad\input{img/peo-4.tikz}
   \caption{Comparison of the solvers
     \textcolor{ba.pine}{Clique-Picking (cp)},
     \textcolor{ba.violet}{AnonMAO (am)},
     \textcolor{ba.yellow}{TreeMAO (tw)}, and
     \textcolor{ba.blue}{LazyIter (li)}. The plots show the
     mean time (dot) as well as the minimum and maximum time (thin
     lines) each solver needed over \emph{ten} random graphs. A solver
     that requires more than 30 minutes or runs out of memory obtains
     a \textcolor{ba.red}{timeout}, which contributes 35 minutes to
     the mean time. At the bottom of each plot we present the average number of
     maximal cliques as well as the average density.}
   \label{figure:experiments}
 \end{figure*}

 \begin{figure*}
   \centering
   \begin{tikzpicture}
  \draw[tick] (0.8,0) -- (0.8,-0.2) node[below] {16};
  \draw[tick] (1.6,0) -- (1.6,-0.2) node[below] {32};
  \draw[tick] (1.2000000000000002,0) -- (1.2000000000000002,-0.1);
  \draw[tick] (2.4000000000000004,0) -- (2.4000000000000004,-0.2) node[below] {64};
  \draw[tick] (2,0) -- (2,-0.1);
  \draw[tick] (3.2,0) -- (3.2,-0.2) node[below] {128};
  \draw[tick] (2.8000000000000003,0) -- (2.8000000000000003,-0.1);
  \draw[tick] (4,0) -- (4,-0.2) node[below] {256};
  \draw[tick] (3.6,0) -- (3.6,-0.1);
  \draw[tick] (4.800000000000001,0) -- (4.800000000000001,-0.2) node[below] {512};
  \draw[tick] (4.4,0) -- (4.4,-0.1);
  \draw[tick] (5.6000000000000005,0) -- (5.6000000000000005,-0.2) node[below] {1024};
  \draw[tick] (5.2,0) -- (5.2,-0.1);
  \draw[tick] (0.5,0.5) -- (0.3,0.5) node[left] {5};
  \draw[tick] (0.5,0.25) -- (0.4,0.25);
  \draw[tick] (0.5,1) -- (0.3,1) node[left] {10};
  \draw[tick] (0.5,0.75) -- (0.4,0.75);
  \draw[tick] (0.5,1.5) -- (0.3,1.5) node[left] {15};
  \draw[tick] (0.5,1.25) -- (0.4,1.25);
  \draw[tick] (0.5,2) -- (0.3,2) node[left] {20};
  \draw[tick] (0.5,1.75) -- (0.4,1.75);
  \draw[tick] (0.5,2.5) -- (0.3,2.5) node[left] {25};
  \draw[tick] (0.5,2.25) -- (0.4,2.25);
  \draw[tick] (0.5,3) -- (0.3,3) node[left] {30};
  \draw[tick] (0.5,2.75) -- (0.4,2.75);
  \node[tick] at (-0.1, -1.4) {\#$\mathrm{cliques}$};
  \node[tick] at (-0.1, -1.9) {\phantom{\#}$\mathrm{density}$};
  \node[tick] at (0.8,-1.4) {10};
  \node[tick] at (0.8,-1.9) {0.40};
  \node[tick] at (1.6,-1.4) {24};
  \node[tick] at (1.6,-1.9) {0.19};
  \node[tick] at (2.4000000000000004,-1.4) {50};
  \node[tick] at (2.4000000000000004,-1.9) {0.10};
  \node[tick] at (3.2,-1.4) {106};
  \node[tick] at (3.2,-1.9) {0.05};
  \node[tick] at (4,-1.4) {212};
  \node[tick] at (4,-1.9) {0.02};
  \node[tick] at (4.800000000000001,-1.4) {428};
  \node[tick] at (4.800000000000001,-1.9) {0.01};
  \node[tick] at (5.6000000000000005,-1.4) {865};
  \node[tick] at (5.6000000000000005,-1.9) {0.01};
  \node[tick, text width=7cm, align=center] at (3.5, -0.9) {Number of Vertices};
  \node[tick, text width=2cm, align=center] at (0.5, 4.4) {Average Time\\ in Minutes};
  \node[baseline, fill=ba.gray!50, inner sep=0.5ex] at (0, 5.2) {A};
  \node[baseline, anchor=west, inner sep=0.5ex] at (0.33, 5.175) {Tree Thickening, $k=3$};
  \draw[timeout] (0.5, 3.5) node[left] {timeout} -- (5.6000000000000005, 3.5);
  \draw[axis] (0.5,0) to (6.1000000000000005, 0);
  \draw[axis] (0.5,0) to (0.5, 4);
  \draw[semithick, color=ba.pine] (0.8,0) -- (1.6,0);
  \draw[semithick, color=ba.pine] (1.6,0) -- (2.4000000000000004,0);
  \draw[semithick, color=ba.pine] (2.4000000000000004,0) -- (3.2,0);
  \draw[semithick, color=ba.pine] (3.2,0) -- (4,0);
  \draw[semithick, color=ba.pine] (4,0) -- (4.800000000000001,0);
  \draw[semithick, color=ba.pine] (4.800000000000001,0) -- (5.6000000000000005,0.0016666666666666668);
  \node[mean_dot, color=ba.pine] at (0.8,0) {};
  \draw[thin, color=ba.pine] (0.8,0) -- (0.8,0);
  \node[mean_dot, color=ba.pine] at (1.6,0) {};
  \draw[thin, color=ba.pine] (1.6,0) -- (1.6,0);
  \node[mean_dot, color=ba.pine] at (2.4000000000000004,0) {};
  \draw[thin, color=ba.pine] (2.4000000000000004,0) -- (2.4000000000000004,0);
  \node[mean_dot, color=ba.pine] at (3.2,0) {};
  \draw[thin, color=ba.pine] (3.2,0) -- (3.2,0);
  \node[mean_dot, color=ba.pine] at (4,0) {};
  \draw[thin, color=ba.pine] (4,0) -- (4,0.0016666666666666668);
  \node[mean_dot, color=ba.pine] at (4.800000000000001,0) {};
  \draw[thin, color=ba.pine] (4.800000000000001,0) -- (4.800000000000001,0.0016666666666666668);
  \node[mean_dot, color=ba.pine] at (5.6000000000000005,0.0016666666666666668) {};
  \draw[thin, color=ba.pine] (5.6000000000000005,0.0016666666666666668) -- (5.6000000000000005,0.0033333333333333335);
  \node[tick, color=ba.pine] at (5.8500000000000005,-0.19833333333333333) {cp};
  \draw[semithick, color=ba.violet] (0.8,0) -- (1.6,0);
  \draw[semithick, color=ba.violet] (1.6,0) -- (2.4000000000000004,0);
  \draw[semithick, color=ba.violet] (2.4000000000000004,0) -- (3.2,0);
  \draw[semithick, color=ba.violet] (3.2,0) -- (4,0);
  \draw[semithick, color=ba.violet] (4,0) -- (4.800000000000001,0.0016666666666666668);
  \draw[semithick, color=ba.violet] (4.800000000000001,0.0016666666666666668) -- (5.6000000000000005,0.03833333333333334);
  \node[mean_dot, color=ba.violet] at (0.8,0) {};
  \draw[thin, color=ba.violet] (0.8,0) -- (0.8,0.0016666666666666668);
  \node[mean_dot, color=ba.violet] at (1.6,0) {};
  \draw[thin, color=ba.violet] (1.6,0) -- (1.6,0);
  \node[mean_dot, color=ba.violet] at (2.4000000000000004,0) {};
  \draw[thin, color=ba.violet] (2.4000000000000004,0) -- (2.4000000000000004,0);
  \node[mean_dot, color=ba.violet] at (3.2,0) {};
  \draw[thin, color=ba.violet] (3.2,0) -- (3.2,0);
  \node[mean_dot, color=ba.violet] at (4,0) {};
  \draw[thin, color=ba.violet] (4,0) -- (4,0.0016666666666666668);
  \node[mean_dot, color=ba.violet] at (4.800000000000001,0.0016666666666666668) {};
  \draw[thin, color=ba.violet] (4.800000000000001,0.0016666666666666668) -- (4.800000000000001,0.0033333333333333335);
  \node[mean_dot, color=ba.violet] at (5.6000000000000005,0.03833333333333334) {};
  \draw[thin, color=ba.violet] (5.6000000000000005,0.03833333333333334) -- (5.6000000000000005,0.04166666666666667);
  \node[tick, color=ba.violet] at (5.8500000000000005,0.43833333333333335) {am};
  \draw[semithick, color=ba.yellow] (0.8,0) -- (1.6,0);
  \draw[semithick, color=ba.yellow] (1.6,0) -- (2.4000000000000004,0);
  \draw[semithick, color=ba.yellow] (2.4000000000000004,0) -- (3.2,0);
  \draw[semithick, color=ba.yellow] (3.2,0) -- (4,0);
  \draw[semithick, color=ba.yellow] (4,0) -- (4.800000000000001,0);
  \draw[semithick, color=ba.yellow] (4.800000000000001,0) -- (5.6000000000000005,0);
  \node[mean_dot, color=ba.yellow] at (0.8,0) {};
  \draw[thin, color=ba.yellow] (0.8,0) -- (0.8,0);
  \node[mean_dot, color=ba.yellow] at (1.6,0) {};
  \draw[thin, color=ba.yellow] (1.6,0) -- (1.6,0);
  \node[mean_dot, color=ba.yellow] at (2.4000000000000004,0) {};
  \draw[thin, color=ba.yellow] (2.4000000000000004,0) -- (2.4000000000000004,0);
  \node[mean_dot, color=ba.yellow] at (3.2,0) {};
  \draw[thin, color=ba.yellow] (3.2,0) -- (3.2,0);
  \node[mean_dot, color=ba.yellow] at (4,0) {};
  \draw[thin, color=ba.yellow] (4,0) -- (4,0.0016666666666666668);
  \node[mean_dot, color=ba.yellow] at (4.800000000000001,0) {};
  \draw[thin, color=ba.yellow] (4.800000000000001,0) -- (4.800000000000001,0.0016666666666666668);
  \node[mean_dot, color=ba.yellow] at (5.6000000000000005,0) {};
  \draw[thin, color=ba.yellow] (5.6000000000000005,0) -- (5.6000000000000005,0.0016666666666666668);
  \node[tick, color=ba.yellow] at (5.8500000000000005,0.2) {tw};
  \draw[semithick, color=ba.blue] (0.8,0) -- (1.6,0);
  \draw[semithick, color=ba.blue] (1.6,0) -- (2.4000000000000004,0.0016666666666666668);
  \draw[semithick, color=ba.blue] (2.4000000000000004,0.0016666666666666668) -- (3.2,0.0016666666666666668);
  \draw[semithick, color=ba.blue] (3.2,0.0016666666666666668) -- (4,0.006666666666666667);
  \draw[semithick, color=ba.blue] (4,0.006666666666666667) -- (4.800000000000001,3.1550000000000002);
  \draw[semithick, color=ba.blue] (4.800000000000001,3.1550000000000002) -- (5.6000000000000005,3.5);
  \node[mean_dot, color=ba.blue] at (0.8,0) {};
  \draw[thin, color=ba.blue] (0.8,0) -- (0.8,0.0016666666666666668);
  \node[mean_dot, color=ba.blue] at (1.6,0) {};
  \draw[thin, color=ba.blue] (1.6,0) -- (1.6,0.0016666666666666668);
  \node[mean_dot, color=ba.blue] at (2.4000000000000004,0.0016666666666666668) {};
  \draw[thin, color=ba.blue] (2.4000000000000004,0) -- (2.4000000000000004,0.0033333333333333335);
  \node[mean_dot, color=ba.blue] at (3.2,0.0016666666666666668) {};
  \draw[thin, color=ba.blue] (3.2,0.0016666666666666668) -- (3.2,0.005000000000000001);
  \node[mean_dot, color=ba.blue] at (4,0.006666666666666667) {};
  \draw[thin, color=ba.blue] (4,0.005000000000000001) -- (4,0.010000000000000002);
  \node[mean_dot, color=ba.blue] at (4.800000000000001,3.1550000000000002) {};
  \draw[thin, color=ba.blue] (4.800000000000001,0.05666666666666667) -- (4.800000000000001,3.5);
  \draw[thin, color=ba.blue] (4.700000000000001,0.05666666666666667) -- (4.9,0.05666666666666667);
  \draw[thin, color=ba.blue] (4.700000000000001,3.5) -- (4.9,3.5);
  \node[mean_dot, color=ba.blue] at (5.6000000000000005,3.5) {};
  \draw[thin, color=ba.blue] (5.6000000000000005,3.5) -- (5.6000000000000005,3.5);
  \node[tick, color=ba.blue] at (5.8500000000000005,3.5) {li};
\end{tikzpicture}\quad\begin{tikzpicture}
  \draw[tick] (0.8,0) -- (0.8,-0.2) node[below] {16};
  \draw[tick] (1.6,0) -- (1.6,-0.2) node[below] {32};
  \draw[tick] (1.2000000000000002,0) -- (1.2000000000000002,-0.1);
  \draw[tick] (2.4000000000000004,0) -- (2.4000000000000004,-0.2) node[below] {64};
  \draw[tick] (2,0) -- (2,-0.1);
  \draw[tick] (3.2,0) -- (3.2,-0.2) node[below] {128};
  \draw[tick] (2.8000000000000003,0) -- (2.8000000000000003,-0.1);
  \draw[tick] (4,0) -- (4,-0.2) node[below] {256};
  \draw[tick] (3.6,0) -- (3.6,-0.1);
  \draw[tick] (4.800000000000001,0) -- (4.800000000000001,-0.2) node[below] {512};
  \draw[tick] (4.4,0) -- (4.4,-0.1);
  \draw[tick] (5.6000000000000005,0) -- (5.6000000000000005,-0.2) node[below] {1024};
  \draw[tick] (5.2,0) -- (5.2,-0.1);
  \draw[tick] (0.5,0.5) -- (0.3,0.5) node[left] {5};
  \draw[tick] (0.5,0.25) -- (0.4,0.25);
  \draw[tick] (0.5,1) -- (0.3,1) node[left] {10};
  \draw[tick] (0.5,0.75) -- (0.4,0.75);
  \draw[tick] (0.5,1.5) -- (0.3,1.5) node[left] {15};
  \draw[tick] (0.5,1.25) -- (0.4,1.25);
  \draw[tick] (0.5,2) -- (0.3,2) node[left] {20};
  \draw[tick] (0.5,1.75) -- (0.4,1.75);
  \draw[tick] (0.5,2.5) -- (0.3,2.5) node[left] {25};
  \draw[tick] (0.5,2.25) -- (0.4,2.25);
  \draw[tick] (0.5,3) -- (0.3,3) node[left] {30};
  \draw[tick] (0.5,2.75) -- (0.4,2.75);
  \node[tick] at (-0.1, -1.4) {\#$\mathrm{cliques}$};
  \node[tick] at (-0.1, -1.9) {\phantom{\#}$\mathrm{density}$};
  \node[tick] at (0.8,-1.4) {9};
  \node[tick] at (0.8,-1.9) {0.53};
  \node[tick] at (1.6,-1.4) {22};
  \node[tick] at (1.6,-1.9) {0.32};
  \node[tick] at (2.4000000000000004,-1.4) {48};
  \node[tick] at (2.4000000000000004,-1.9) {0.19};
  \node[tick] at (3.2,-1.4) {97};
  \node[tick] at (3.2,-1.9) {0.11};
  \node[tick] at (4,-1.4) {200};
  \node[tick] at (4,-1.9) {0.06};
  \node[tick] at (4.800000000000001,-1.4) {401};
  \node[tick] at (4.800000000000001,-1.9) {0.04};
  \node[tick] at (5.6000000000000005,-1.4) {805};
  \node[tick] at (5.6000000000000005,-1.9) {0.02};
  \node[tick, text width=7cm, align=center] at (3.5, -0.9) {Number of Vertices};
  \node[tick, text width=2cm, align=center] at (0.5, 4.4) {Average Time\\ in Minutes};
  \node[baseline, fill=ba.gray!50, inner sep=0.5ex] at (0, 5.2) {B};
  \node[baseline, anchor=west, inner sep=0.5ex] at (0.33, 5.175) {Tree Thickening, $k=\log n$};
  \draw[timeout] (0.5, 3.5) node[left] {timeout} -- (5.6000000000000005, 3.5);
  \draw[axis] (0.5,0) to (6.1000000000000005, 0);
  \draw[axis] (0.5,0) to (0.5, 4);
  \draw[semithick, color=ba.pine] (0.8,0) -- (1.6,0);
  \draw[semithick, color=ba.pine] (1.6,0) -- (2.4000000000000004,0);
  \draw[semithick, color=ba.pine] (2.4000000000000004,0) -- (3.2,0);
  \draw[semithick, color=ba.pine] (3.2,0) -- (4,0);
  \draw[semithick, color=ba.pine] (4,0) -- (4.800000000000001,0);
  \draw[semithick, color=ba.pine] (4.800000000000001,0) -- (5.6000000000000005,0.0016666666666666668);
  \node[mean_dot, color=ba.pine] at (0.8,0) {};
  \draw[thin, color=ba.pine] (0.8,0) -- (0.8,0);
  \node[mean_dot, color=ba.pine] at (1.6,0) {};
  \draw[thin, color=ba.pine] (1.6,0) -- (1.6,0);
  \node[mean_dot, color=ba.pine] at (2.4000000000000004,0) {};
  \draw[thin, color=ba.pine] (2.4000000000000004,0) -- (2.4000000000000004,0);
  \node[mean_dot, color=ba.pine] at (3.2,0) {};
  \draw[thin, color=ba.pine] (3.2,0) -- (3.2,0);
  \node[mean_dot, color=ba.pine] at (4,0) {};
  \draw[thin, color=ba.pine] (4,0) -- (4,0.0016666666666666668);
  \node[mean_dot, color=ba.pine] at (4.800000000000001,0) {};
  \draw[thin, color=ba.pine] (4.800000000000001,0) -- (4.800000000000001,0.0016666666666666668);
  \node[mean_dot, color=ba.pine] at (5.6000000000000005,0.0016666666666666668) {};
  \draw[thin, color=ba.pine] (5.6000000000000005,0.0016666666666666668) -- (5.6000000000000005,0.0033333333333333335);
  \node[tick, color=ba.pine] at (5.8500000000000005,0.15166666666666667) {cp};
  \draw[semithick, color=ba.violet] (0.8,0) -- (1.6,0);
  \draw[semithick, color=ba.violet] (1.6,0) -- (2.4000000000000004,0);
  \draw[semithick, color=ba.violet] (2.4000000000000004,0) -- (3.2,0);
  \draw[semithick, color=ba.violet] (3.2,0) -- (4,0.0016666666666666668);
  \draw[semithick, color=ba.violet] (4,0.0016666666666666668) -- (4.800000000000001,0.016666666666666666);
  \draw[semithick, color=ba.violet] (4.800000000000001,0.016666666666666666) -- (5.6000000000000005,0.18500000000000003);
  \node[mean_dot, color=ba.violet] at (0.8,0) {};
  \draw[thin, color=ba.violet] (0.8,0) -- (0.8,0);
  \node[mean_dot, color=ba.violet] at (1.6,0) {};
  \draw[thin, color=ba.violet] (1.6,0) -- (1.6,0.0016666666666666668);
  \node[mean_dot, color=ba.violet] at (2.4000000000000004,0) {};
  \draw[thin, color=ba.violet] (2.4000000000000004,0) -- (2.4000000000000004,0);
  \node[mean_dot, color=ba.violet] at (3.2,0) {};
  \draw[thin, color=ba.violet] (3.2,0) -- (3.2,0.0016666666666666668);
  \node[mean_dot, color=ba.violet] at (4,0.0016666666666666668) {};
  \draw[thin, color=ba.violet] (4,0) -- (4,0.0033333333333333335);
  \node[mean_dot, color=ba.violet] at (4.800000000000001,0.016666666666666666) {};
  \draw[thin, color=ba.violet] (4.800000000000001,0.016666666666666666) -- (4.800000000000001,0.018333333333333333);
  \node[mean_dot, color=ba.violet] at (5.6000000000000005,0.18500000000000003) {};
  \draw[thin, color=ba.violet] (5.6000000000000005,0.16833333333333333) -- (5.6000000000000005,0.19666666666666666);
  \node[tick, color=ba.violet] at (5.8500000000000005,0.43500000000000005) {am};
  \draw[semithick, color=ba.yellow] (0.8,0) -- (1.6,0);
  \draw[semithick, color=ba.yellow] (1.6,0) -- (2.4000000000000004,0.020000000000000004);
  \draw[semithick, color=ba.yellow] (2.4000000000000004,0.020000000000000004) -- (3.2,2.2183333333333333);
  \draw[semithick, color=ba.yellow] (3.2,2.2183333333333333) -- (4,3.5);
  \draw[semithick, color=ba.yellow] (4,3.5) -- (4.800000000000001,3.5);
  \draw[semithick, color=ba.yellow] (4.800000000000001,3.5) -- (5.6000000000000005,3.5);
  \node[mean_dot, color=ba.yellow] at (0.8,0) {};
  \draw[thin, color=ba.yellow] (0.8,0) -- (0.8,0.0016666666666666668);
  \node[mean_dot, color=ba.yellow] at (1.6,0) {};
  \draw[thin, color=ba.yellow] (1.6,0) -- (1.6,0.0033333333333333335);
  \node[mean_dot, color=ba.yellow] at (2.4000000000000004,0.020000000000000004) {};
  \draw[thin, color=ba.yellow] (2.4000000000000004,0.0016666666666666668) -- (2.4000000000000004,0.08166666666666667);
  \node[mean_dot, color=ba.yellow] at (3.2,2.2183333333333333) {};
  \draw[thin, color=ba.yellow] (3.2,0.14333333333333334) -- (3.2,3.5);
  \draw[thin, color=ba.yellow] (3.1,0.14333333333333334) -- (3.3000000000000003,0.14333333333333334);
  \draw[thin, color=ba.yellow] (3.1,3.5) -- (3.3000000000000003,3.5);
  \node[mean_dot, color=ba.yellow] at (4,3.5) {};
  \draw[thin, color=ba.yellow] (4,3.5) -- (4,3.5);
  \node[mean_dot, color=ba.yellow] at (4.800000000000001,3.5) {};
  \draw[thin, color=ba.yellow] (4.800000000000001,3.5) -- (4.800000000000001,3.5);
  \node[mean_dot, color=ba.yellow] at (5.6000000000000005,3.5) {};
  \draw[thin, color=ba.yellow] (5.6000000000000005,3.5) -- (5.6000000000000005,3.5);
  \node[tick, color=ba.yellow] at (5.8500000000000005,3.35) {tw};
  \draw[semithick, color=ba.blue] (0.8,0) -- (1.6,0.0016666666666666668);
  \draw[semithick, color=ba.blue] (1.6,0.0016666666666666668) -- (2.4000000000000004,0.0033333333333333335);
  \draw[semithick, color=ba.blue] (2.4000000000000004,0.0033333333333333335) -- (3.2,0.021666666666666667);
  \draw[semithick, color=ba.blue] (3.2,0.021666666666666667) -- (4,3.5);
  \draw[semithick, color=ba.blue] (4,3.5) -- (4.800000000000001,3.5);
  \draw[semithick, color=ba.blue] (4.800000000000001,3.5) -- (5.6000000000000005,3.5);
  \node[mean_dot, color=ba.blue] at (0.8,0) {};
  \draw[thin, color=ba.blue] (0.8,0) -- (0.8,0.0016666666666666668);
  \node[mean_dot, color=ba.blue] at (1.6,0.0016666666666666668) {};
  \draw[thin, color=ba.blue] (1.6,0.0016666666666666668) -- (1.6,0.0016666666666666668);
  \node[mean_dot, color=ba.blue] at (2.4000000000000004,0.0033333333333333335) {};
  \draw[thin, color=ba.blue] (2.4000000000000004,0.0033333333333333335) -- (2.4000000000000004,0.006666666666666667);
  \node[mean_dot, color=ba.blue] at (3.2,0.021666666666666667) {};
  \draw[thin, color=ba.blue] (3.2,0.015) -- (3.2,0.04166666666666667);
  \node[mean_dot, color=ba.blue] at (4,3.5) {};
  \draw[thin, color=ba.blue] (4,3.5) -- (4,3.5);
  \node[mean_dot, color=ba.blue] at (4.800000000000001,3.5) {};
  \draw[thin, color=ba.blue] (4.800000000000001,3.5) -- (4.800000000000001,3.5);
  \node[mean_dot, color=ba.blue] at (5.6000000000000005,3.5) {};
  \draw[thin, color=ba.blue] (5.6000000000000005,3.5) -- (5.6000000000000005,3.5);
  \node[tick, color=ba.blue] at (5.8500000000000005,3.65) {li};
\end{tikzpicture}\\[0.25cm]
   \begin{tikzpicture}
  \draw[tick] (0.8,0) -- (0.8,-0.2) node[below] {16};
  \draw[tick] (1.6,0) -- (1.6,-0.2) node[below] {32};
  \draw[tick] (1.2000000000000002,0) -- (1.2000000000000002,-0.1);
  \draw[tick] (2.4000000000000004,0) -- (2.4000000000000004,-0.2) node[below] {64};
  \draw[tick] (2,0) -- (2,-0.1);
  \draw[tick] (3.2,0) -- (3.2,-0.2) node[below] {128};
  \draw[tick] (2.8000000000000003,0) -- (2.8000000000000003,-0.1);
  \draw[tick] (4,0) -- (4,-0.2) node[below] {256};
  \draw[tick] (3.6,0) -- (3.6,-0.1);
  \draw[tick] (4.800000000000001,0) -- (4.800000000000001,-0.2) node[below] {512};
  \draw[tick] (4.4,0) -- (4.4,-0.1);
  \draw[tick] (5.6000000000000005,0) -- (5.6000000000000005,-0.2) node[below] {1024};
  \draw[tick] (5.2,0) -- (5.2,-0.1);
  \draw[tick] (0.5,0.5) -- (0.3,0.5) node[left] {5};
  \draw[tick] (0.5,0.25) -- (0.4,0.25);
  \draw[tick] (0.5,1) -- (0.3,1) node[left] {10};
  \draw[tick] (0.5,0.75) -- (0.4,0.75);
  \draw[tick] (0.5,1.5) -- (0.3,1.5) node[left] {15};
  \draw[tick] (0.5,1.25) -- (0.4,1.25);
  \draw[tick] (0.5,2) -- (0.3,2) node[left] {20};
  \draw[tick] (0.5,1.75) -- (0.4,1.75);
  \draw[tick] (0.5,2.5) -- (0.3,2.5) node[left] {25};
  \draw[tick] (0.5,2.25) -- (0.4,2.25);
  \draw[tick] (0.5,3) -- (0.3,3) node[left] {30};
  \draw[tick] (0.5,2.75) -- (0.4,2.75);
  \node[tick] at (-0.1, -1.4) {\#$\mathrm{cliques}$};
  \node[tick] at (-0.1, -1.9) {\phantom{\#}$\mathrm{density}$};
  \node[tick] at (0.8,-1.4) {9};
  \node[tick] at (0.8,-1.9) {0.53};
  \node[tick] at (1.6,-1.4) {21};
  \node[tick] at (1.6,-1.9) {0.39};
  \node[tick] at (2.4000000000000004,-1.4) {44};
  \node[tick] at (2.4000000000000004,-1.9) {0.25};
  \node[tick] at (3.2,-1.4) {91};
  \node[tick] at (3.2,-1.9) {0.19};
  \node[tick] at (4,-1.4) {184};
  \node[tick] at (4,-1.9) {0.12};
  \node[tick] at (4.800000000000001,-1.4) {371};
  \node[tick] at (4.800000000000001,-1.9) {0.09};
  \node[tick] at (5.6000000000000005,-1.4) {740};
  \node[tick] at (5.6000000000000005,-1.9) {0.06};
  \node[tick, text width=7cm, align=center] at (3.5, -0.9) {Number of Vertices};
  \node[tick, text width=2cm, align=center] at (0.5, 4.4) {Average Time\\ in Minutes};
  \node[baseline, fill=ba.gray!50, inner sep=0.5ex] at (0, 5.2) {C};
  \node[baseline, anchor=west, inner sep=0.5ex] at (0.33, 5.175) {Tree Thickening, $k=\sqrt n$};
  \draw[timeout] (0.5, 3.5) node[left] {timeout} -- (5.6000000000000005, 3.5);
  \draw[axis] (0.5,0) to (6.1000000000000005, 0);
  \draw[axis] (0.5,0) to (0.5, 4);
  \draw[semithick, color=ba.pine] (0.8,0) -- (1.6,0);
  \draw[semithick, color=ba.pine] (1.6,0) -- (2.4000000000000004,0);
  \draw[semithick, color=ba.pine] (2.4000000000000004,0) -- (3.2,0);
  \draw[semithick, color=ba.pine] (3.2,0) -- (4,0);
  \draw[semithick, color=ba.pine] (4,0) -- (4.800000000000001,0);
  \draw[semithick, color=ba.pine] (4.800000000000001,0) -- (5.6000000000000005,0.0033333333333333335);
  \node[mean_dot, color=ba.pine] at (0.8,0) {};
  \draw[thin, color=ba.pine] (0.8,0) -- (0.8,0);
  \node[mean_dot, color=ba.pine] at (1.6,0) {};
  \draw[thin, color=ba.pine] (1.6,0) -- (1.6,0);
  \node[mean_dot, color=ba.pine] at (2.4000000000000004,0) {};
  \draw[thin, color=ba.pine] (2.4000000000000004,0) -- (2.4000000000000004,0.0016666666666666668);
  \node[mean_dot, color=ba.pine] at (3.2,0) {};
  \draw[thin, color=ba.pine] (3.2,0) -- (3.2,0.0016666666666666668);
  \node[mean_dot, color=ba.pine] at (4,0) {};
  \draw[thin, color=ba.pine] (4,0) -- (4,0.0016666666666666668);
  \node[mean_dot, color=ba.pine] at (4.800000000000001,0) {};
  \draw[thin, color=ba.pine] (4.800000000000001,0) -- (4.800000000000001,0.0016666666666666668);
  \node[mean_dot, color=ba.pine] at (5.6000000000000005,0.0033333333333333335) {};
  \draw[thin, color=ba.pine] (5.6000000000000005,0.0033333333333333335) -- (5.6000000000000005,0.005000000000000001);
  \node[tick, color=ba.pine] at (5.8500000000000005,0.20333333333333334) {cp};
  \draw[semithick, color=ba.violet] (0.8,0) -- (1.6,0);
  \draw[semithick, color=ba.violet] (1.6,0) -- (2.4000000000000004,0);
  \draw[semithick, color=ba.violet] (2.4000000000000004,0) -- (3.2,0);
  \draw[semithick, color=ba.violet] (3.2,0) -- (4,0.018333333333333333);
  \draw[semithick, color=ba.violet] (4,0.018333333333333333) -- (4.800000000000001,0.5900000000000001);
  \draw[semithick, color=ba.violet] (4.800000000000001,0.5900000000000001) -- (5.6000000000000005,3.5);
  \node[mean_dot, color=ba.violet] at (0.8,0) {};
  \draw[thin, color=ba.violet] (0.8,0) -- (0.8,0);
  \node[mean_dot, color=ba.violet] at (1.6,0) {};
  \draw[thin, color=ba.violet] (1.6,0) -- (1.6,0);
  \node[mean_dot, color=ba.violet] at (2.4000000000000004,0) {};
  \draw[thin, color=ba.violet] (2.4000000000000004,0) -- (2.4000000000000004,0.0016666666666666668);
  \node[mean_dot, color=ba.violet] at (3.2,0) {};
  \draw[thin, color=ba.violet] (3.2,0) -- (3.2,0.0033333333333333335);
  \node[mean_dot, color=ba.violet] at (4,0.018333333333333333) {};
  \draw[thin, color=ba.violet] (4,0.016666666666666666) -- (4,0.023333333333333334);
  \node[mean_dot, color=ba.violet] at (4.800000000000001,0.5900000000000001) {};
  \draw[thin, color=ba.violet] (4.800000000000001,0.4883333333333334) -- (4.800000000000001,0.73);
  \draw[thin, color=ba.violet] (4.700000000000001,0.4883333333333334) -- (4.9,0.4883333333333334);
  \draw[thin, color=ba.violet] (4.700000000000001,0.73) -- (4.9,0.73);
  \node[mean_dot, color=ba.violet] at (5.6000000000000005,3.5) {};
  \draw[thin, color=ba.violet] (5.6000000000000005,3.5) -- (5.6000000000000005,3.5);
  \node[tick, color=ba.violet] at (5.8500000000000005,3.8) {am};
  \draw[semithick, color=ba.yellow] (0.8,0) -- (1.6,0.016666666666666666);
  \draw[semithick, color=ba.yellow] (1.6,0.016666666666666666) -- (2.4000000000000004,3.5);
  \draw[semithick, color=ba.yellow] (2.4000000000000004,3.5) -- (3.2,3.5);
  \draw[semithick, color=ba.yellow] (3.2,3.5) -- (4,3.5);
  \draw[semithick, color=ba.yellow] (4,3.5) -- (4.800000000000001,3.5);
  \draw[semithick, color=ba.yellow] (4.800000000000001,3.5) -- (5.6000000000000005,3.5);
  \node[mean_dot, color=ba.yellow] at (0.8,0) {};
  \draw[thin, color=ba.yellow] (0.8,0) -- (0.8,0.0016666666666666668);
  \node[mean_dot, color=ba.yellow] at (1.6,0.016666666666666666) {};
  \draw[thin, color=ba.yellow] (1.6,0.0016666666666666668) -- (1.6,0.08833333333333333);
  \node[mean_dot, color=ba.yellow] at (2.4000000000000004,3.5) {};
  \draw[thin, color=ba.yellow] (2.4000000000000004,3.5) -- (2.4000000000000004,3.5);
  \node[mean_dot, color=ba.yellow] at (3.2,3.5) {};
  \draw[thin, color=ba.yellow] (3.2,3.5) -- (3.2,3.5);
  \node[mean_dot, color=ba.yellow] at (4,3.5) {};
  \draw[thin, color=ba.yellow] (4,3.5) -- (4,3.5);
  \node[mean_dot, color=ba.yellow] at (4.800000000000001,3.5) {};
  \draw[thin, color=ba.yellow] (4.800000000000001,3.5) -- (4.800000000000001,3.5);
  \node[mean_dot, color=ba.yellow] at (5.6000000000000005,3.5) {};
  \draw[thin, color=ba.yellow] (5.6000000000000005,3.5) -- (5.6000000000000005,3.5);
  \node[tick, color=ba.yellow] at (5.8500000000000005,3.2) {tw};
  \draw[semithick, color=ba.blue] (0.8,0) -- (1.6,0.0016666666666666668);
  \draw[semithick, color=ba.blue] (1.6,0.0016666666666666668) -- (2.4000000000000004,0.013333333333333334);
  \draw[semithick, color=ba.blue] (2.4000000000000004,0.013333333333333334) -- (3.2,2.4250000000000003);
  \draw[semithick, color=ba.blue] (3.2,2.4250000000000003) -- (4,3.5);
  \draw[semithick, color=ba.blue] (4,3.5) -- (4.800000000000001,3.5);
  \draw[semithick, color=ba.blue] (4.800000000000001,3.5) -- (5.6000000000000005,3.5);
  \node[mean_dot, color=ba.blue] at (0.8,0) {};
  \draw[thin, color=ba.blue] (0.8,0) -- (0.8,0.0016666666666666668);
  \node[mean_dot, color=ba.blue] at (1.6,0.0016666666666666668) {};
  \draw[thin, color=ba.blue] (1.6,0.0016666666666666668) -- (1.6,0.0033333333333333335);
  \node[mean_dot, color=ba.blue] at (2.4000000000000004,0.013333333333333334) {};
  \draw[thin, color=ba.blue] (2.4000000000000004,0.008333333333333333) -- (2.4000000000000004,0.03166666666666667);
  \node[mean_dot, color=ba.blue] at (3.2,2.4250000000000003) {};
  \draw[thin, color=ba.blue] (3.2,0.58) -- (3.2,3.5);
  \draw[thin, color=ba.blue] (3.1,0.58) -- (3.3000000000000003,0.58);
  \draw[thin, color=ba.blue] (3.1,3.5) -- (3.3000000000000003,3.5);
  \node[mean_dot, color=ba.blue] at (4,3.5) {};
  \draw[thin, color=ba.blue] (4,3.5) -- (4,3.5);
  \node[mean_dot, color=ba.blue] at (4.800000000000001,3.5) {};
  \draw[thin, color=ba.blue] (4.800000000000001,3.5) -- (4.800000000000001,3.5);
  \node[mean_dot, color=ba.blue] at (5.6000000000000005,3.5) {};
  \draw[thin, color=ba.blue] (5.6000000000000005,3.5) -- (5.6000000000000005,3.5);
  \node[tick, color=ba.blue] at (5.8500000000000005,3.5) {li};
\end{tikzpicture}
   \caption{The same plots as in Fig.~\ref{figure:experiments} for the random graphs
     from Section~\ref{section:thickening}.}
   \label{figure:experiments2}
 \end{figure*}
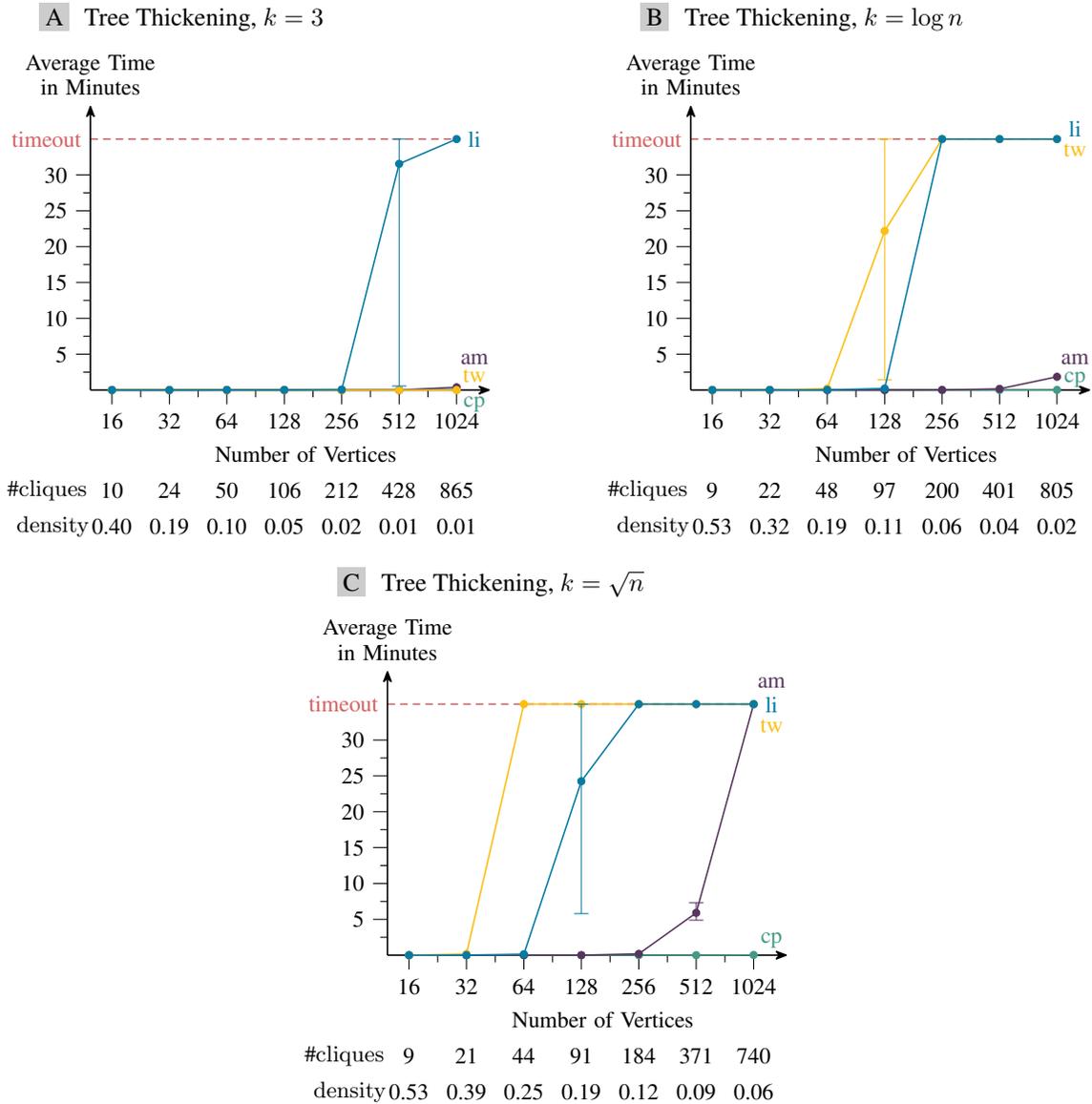
\clearpage 
\end{document}